\definecolor{mydarkblue}{rgb}{0,0.08,0.45}
\DeclarePairedDelimiterX{\infdivx}[2]{(}{)}{%
  #1\;\delimsize\|\;#2%
}
\newcommand{\one}{\mathbbm{1}}
\crefname{assumption}{Assumption}{Assumptions}
\crefname{lem}{Lemma}{Lemmas}
\newcommand{\derivH}{\mathcal{G}_\calX}
\newcommand{\derivHsample}{\mathcal{G}_{\calX_n}}
\newcommand{\calF}{\mathcal{F}}
\newcommand{\calH}{\mathcal{H}}
\newcommand{\calN}{\mathcal{N}}
\newcommand{\calO}{\mathcal{O}}
\newcommand{\calX}{\mathcal{X}}
\newcommand{\calS}{\mathcal{S}}
\newcommand{\R}{\mathbb{R}}
\newcommand{\N}{\mathbb{N}}
\newcommand{\closer}[3]{{\kern-#1ex{#2}\kern-#3ex}}
\mathchardef\mhyphen="2D
\DeclareMathOperator{\E}{\mathbb{E}}
\newcommand\reallywidehat[1]{\arraycolsep=0pt\relax%
\begin{array}{c}
\stretchto{
  \scaleto{
    \scalerel*[\widthof{\ensuremath{#1}}]{\kern-.5pt\bigwedge\kern-.5pt}
    {\rule[-\textheight/2]{1ex}{\textheight}} 
  }{\textheight} %
}{0.5ex}\\           
#1\\                 
\rule{-1ex}{0ex}
\end{array}
}
\renewcommand{\d}{\mathrm{d}}
\def\Id{\mathrm{I}}
\definecolor{azure}{rgb}{0.0, 0.5, 1.0}
\definecolor{airforceblue}{rgb}{0.36, 0.54, 0.66}
\definecolor{darkgreen}{rgb}{0.0, 0.2, 0.13}
\definecolor{mediumgray}{gray}{0.7}
\definecolor{lightgray}{gray}{0.85}
\definecolor{lightlightgray}{gray}{0.9}
\definecolor{C1}{HTML}{1F77B4}
\definecolor{C2}{HTML}{FF7F0E}
\definecolor{C3}{HTML}{2CA02C}
\definecolor{C4}{HTML}{D62728}
\definecolor{C5}{HTML}{9467BD}
\colorlet{C1light}{C1!70!white}
\colorlet{C2light}{C2!70!white}
\colorlet{C3light}{C3!70!white}
\colorlet{C4light}{C4!70!white}
\colorlet{C5light}{C5!70!white}
\colorlet{C1lighter}{C1!50!white}
\colorlet{C2lighter}{C2!50!white}
\colorlet{C3lighter}{C3!50!white}
\colorlet{C4lighter}{C4!50!white}
\colorlet{C5lighter}{C5!50!white}
\colorlet{C1vlight}{C1!20!white}
\colorlet{C2vlight}{C2!20!white}
\colorlet{C3vlight}{C3!20!white}
\colorlet{C4vlight}{C4!20!white}
\colorlet{C5vlight}{C5!20!white}
\colorlet{linkcolor}{violet}
\newcommand{\dd}{\mathrm{d}}
\newtheorem{thm}{Theorem}[section]
\newtheorem{cor}[thm]{Corollary}
\newtheorem{lem}[thm]{Lemma}
\newtheorem{prop}[thm]{Proposition}
\theoremstyle{definition}
\newtheorem{rem}[thm]{Remark}
\crefname{enumi}{}{}
\crefname{enumii}{}{}
\newcommand*{\addFileDependency}[1]{
\typeout{(#1)}
%
%
\@addtofilelist{#1}
%
\IfFileExists{#1}{}{\typeout{No file #1.}}
}\makeatother
\newenvironment{talign*}
 {\csname align*\endcsname}
 {\endalign}
\newenvironment{talign}
 {\csname align\endcsname}
 {\endalign}
\newtheorem{assumption}{Assumption}
\Crefname{assumption}{Assumption}{Assumption} 
\crefname{assumption}{Assumption}{Assumption} 
\renewcommand{\citet}{\cite}
\newcommand{\mmd}{\operatorname{MMD}}
\newcommand{\customlabel}[2]{%
   \protected@write \@auxout {}{\string \newlabel {#1}{{#2}{\thepage}{#2}{#1}{}} }%
   \hypertarget{#1}{}
}
\icmltitlerunning{Stationary MMD points}
\begin{document}

\twocolumn[
  \icmltitle{Stationary MMD Points}

  \begin{icmlauthorlist}
    \icmlauthor{Zonghao Chen}{ucl}
    \icmlauthor{Toni Karvonen}{lut}
    \icmlauthor{Heishiro Kanagawa}{newcastle}
    \icmlauthor{Fran\c{c}ois-Xavier Briol}{ucl}
    \icmlauthor{Chris J. Oates}{newcastle,turing}
  \end{icmlauthorlist}

  \icmlaffiliation{ucl}{University College London, London, UK}
  \icmlaffiliation{lut}{Lappeenranta--Lahti University of Technology LUT, Lappeenranta, Finland}
  \icmlaffiliation{newcastle}{Newcastle University, Newcastle upon Tyne, UK}
  \icmlaffiliation{turing}{The Alan Turing Institute, London, UK}

  \icmlcorrespondingauthor{Zonghao Chen}{zonghao.chen.XX@ucl.ac.uk}
  \icmlcorrespondingauthor{Chris J. Oates}{chris.oates@newcastle.ac.uk}

  \icmlkeywords{Maximum Mean Discrepancy, Kernel Methods, Bayesian Quadrature, Sampling, ICML}

  \vskip 0.3in
]

\printAffiliationsAndNotice{} 

\begin{abstract}
  Approximation of a target probability distribution using a finite set of points is a problem of fundamental importance in numerical integration.
Several authors have proposed to select points by minimising a maximum mean discrepancy (MMD), but the non-convexity of this objective typically precludes global minimisation. 
Instead, we consider the concept of \emph{stationary points of the MMD} which, in contrast to points globally minimising the MMD, can be accurately computed. 
Our main contributions are two-fold. 
We first prove the (perhaps surprising) result that, for integrands in the associated reproducing kernel Hilbert space, the integration error of stationary MMD points vanishes \emph{faster} than the MMD.
Motivated by this \emph{super-convergence} property, we consider MMD gradient flows as a practical strategy for computing stationary points of the MMD. 
We then prove that MMD gradient flow can indeed compute stationary MMD points, based on a refined convergence analysis that establishes a novel non-asymptotic finite-particle error bound. 


\end{abstract}

\section{Introduction}\label{sec:introduction}
This paper is concerned with the task of approximating a given probability distribution $\mu$ on $\mathbb{R}^d$ using a finite set of $n$ particles \smash{$\{\bm{x}_i\}_{i=1}^n$}, with a view to performing numerical integration (or \emph{cubature}) of a black-box function $f \colon \mathbb{R}^d \rightarrow \mathbb{R}$ with respect to $\mu$, which requires careful selection of the nodes at which the integrand $f$ is evaluated~\citep{dick2010digital}.
Numerical integration is for example crucial for computing posterior expectations in Bayesian inference \citep{Martin2024}, but also in optimisation and reinforcement learning, where objective functions are expressed as expectations under $\mu$ and are approximated via samples~\citep{bottou2018optimization, hayakawa2024policy}.
When $\mu$ corresponds to the empirical distribution of a large dataset, this task reduces to selecting a smaller set of representative points (i.e., a \emph{coreset}), which forms an accurate approximation to $\mu$ and can be used to reduce computational cost in downstream tasks~\citep{bachem2017practical}.

Different strategies for selecting points $\{\bm{x}_i\}_{i=1}^n$ exist depending on how the accuracy of the approximation is measured.
This work considers approximation in the sense of MMD \citep{gretton2012kernel}
\begin{talign*}
    \mmd^2(\mu , \mu_n) = \iint k(\bm{x},\bm{y}) \mathrm{d}(\mu - \mu_n)(\bm{x}) \mathrm{d}(\mu - \mu_n)(\bm{y}) ,
\end{talign*}
where $\mu_n = \frac{1}{n} \sum_{i=1}^n \delta_{\bm{x}_i}$. 
MMD uses a symmetric positive semi-definite \emph{kernel} $k \colon \mathbb{R}^d \times \mathbb{R}^d \rightarrow \mathbb{R}$ \citep{Berlinet2004} to measure the discrepancy between $\mu$ and the empirical measure $\mu_n$ of the points $\{\bm{x}_i\}_{i=1}^n$.
This choice is motivated by its generality (a wide range of topologies can be induced via the choice of kernel \citep{sriperumbudur2011universality,barp2024targeted}), practicality (MMD can often be explicitly computed \citep{briol2025}), and favourable (e.g., dimension-independent) sample complexity compared to alternative measures of discrepancy \citep{gretton2012kernel}.
Further, the MMD is related to numerical integration error for functions $f$ in the reproducing kernel Hilbert space $\calH$ (RKHS) associated to the kernel $k$ via
\begin{talign} 
    \left| \frac{1}{n} \sum_{i=1}^n f(\bm{x}_i) - \int f \mathrm{d}\mu \right|
    \leq \|f\|_{\calH} \cdot \mmd(\mu , \mu_n), \label{eq: HK}
\end{talign}
which follows from reproducing property and Cauchy--Schwarz, and holds for \emph{any} point set $\{\bm{x}_i\}_{i=1}^n$.

\begin{figure*}[t]
    \centering
    \includegraphics[width=0.9\linewidth]{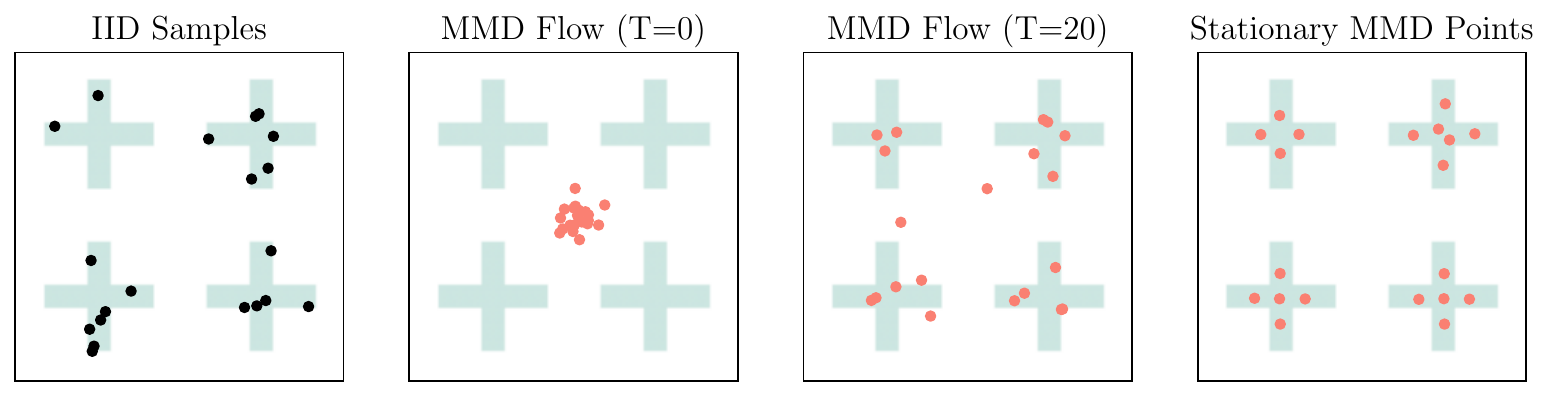}
    \vspace{-5pt}
    \caption{Approximation of a mixture of four cross-shaped uniform distributions with (\textbf{Left}) $20$ i.i.d.\ samples and (\textbf{Right}) $20$ \emph{stationary MMD points} computed via a discretised gradient flow, simulated for a sufficient length of time $T$ (also shown at $T = 0$ and $T = 20$). 
    \emph{Stationary} MMD points correspond in this case to a local minimum of the MMD, as there are $4$ samples in the first cross and $6$ samples in the second cross; these points can be explicitly computed. 
    \emph{Minimum} MMD points (not shown) would presumably assign $5$ points to each cross, but such points cannot be computed in general.}
    \label{fig:illustration}
    \vspace{-10pt}
\end{figure*}

Several works have studied theoretical properties of point sets $\{\bm{x}_i\}_{i=1}^n$ that minimise MMD, deducing fast convergence rates for numerical integration when $\mu$ is uniform over the unit sphere~\citep{brauchart2015distributing,marzo2021discrepancy}, the unit cube $[0,1]^d$~\citep{dick2010digital}, or has sub-exponential tails~\citep{xu2022accurate}.
On the practical side, a range of algorithms have been proposed to approximate such minimum MMD points, showing strong empirical performance in numerical integration tasks. Notable examples include sequential greedy methods~\citep{chen2010super, teymur2021optimal}, particle-based minimization schemes~\citep{xu2022accurate, chen2021deterministic}, and convex–concave procedures~\citep{mak2018support}.

However, a fundamental gap remains between these theoretical results and practical algorithms: due to the non-convexity of the MMD objective, none of the existing algorithms can provably recover the true \emph{minimum} MMD points and are often only guaranteed to find a local minima.
This observation hence raises the following question:

\emph{``Why do these points sets exhibit strong numerical integration performance despite only corresponding to local minima of the MMD?''}
\footnote{
The family of kernel thinning algorithms does not suffer from this gap between theory and practice~\citep{dwivedi2021kernel,dwivedi2022generalized,li2024debiased,shetty2021distribution,carrelllow}: these methods enjoy better-than-Monte-Carlo theoretical guarantees which support their empirical performances. 
However, kernel thinning optimizes over a fixed, finite candidate pool; as a consequence, the quality of the resulting coreset depends on this candidate set, and optimal solutions lying outside the pool cannot be recovered. 
}.

\textbf{Our Contributions:} We address this question 
by studying the asymptotic numerical integration error of \emph{stationary MMD points}, defined as point sets\footnote{To be precise, we consider sequences $(\{\bm{x}_i^{(n)}\}_{i=1}^n)_{n \in \mathbb{N}}$ of not-necessarily-nested point sets, but to simplify notation we leave the superscript indexing the sequence implicit.} $\{\bm{x}_i\}_{i=1}^n$ for which the associated empirical measure $\mu_n$ satisfies:
\begin{enumerate}[
    label=,              
    leftmargin=0pt,
    align=left,
    labelsep=0pt,
    itemindent=!,
    listparindent=-2pt,
    parsep=0pt
]
    \item[\textcolor{mydarkblue}{\textbf{Convergence}}:]
    \hspace{1pt} $\mmd(\mu , \mu_n) \to 0$ as $n \to \infty$. ~\customlabel{as:convergence}{\textbf{convergence}}
    
    \item[\textcolor{mydarkblue}{\textbf{Stationarity for fixed $n$}}:] \hspace{1pt}
    The gradient of $\mmd(\mu , \mu_n)$ with respect to $\{\bm{x}_i\}_{i=1}^n$ is $\bm{0}$. \customlabel{as:stationary}{\textbf{stationarity}}
\end{enumerate}
\vspace{-5pt}
Although minimum MMD points also satisfy \ref{as:convergence} and \ref{as:stationary}, the class of \emph{stationary} MMD points represents a strict relaxation: they can be obtained via a specific noisy gradient descent scheme proposed in \Cref{sec:mmd_flow}.  
Our first main result is that stationary MMD points achieve a \emph{faster-than-MMD numerical integration rate} for a wide class of functions $f$ in $\mathcal{H}$ (\Cref{thm: main}):
\begin{talign}\label{eq:informal_main}
    \left| \frac{1}{n} \sum_{i=1}^n f(\bm{x}_i) - \int f \; \mathrm{d}\mu \right|  = o\left( \mmd(\mu , \mu_n) \right) .
\end{talign}
Comparing \eqref{eq:informal_main} and \eqref{eq: HK} shows that numerical integration using stationary MMD points converges faster than the worst-case scenario anticipated by the MMD.
This is surprising, but not paradoxical, as the integrand $f$ in \eqref{eq:informal_main} is fixed, while $f_n = \frac{1}{n}\sum_{i=1}^n k(\bm{x}_i, \cdot) - \int k(\bm{x}, \cdot) \mathrm{d}\mu(\bm{x})$, the integrand that realises equality in \eqref{eq: HK}, is dependent on the point set $\{\bm{x}_i\}_{i=1}^n$.
Thus \eqref{eq:informal_main} can be interpreted as a novel form of \emph{super-convergence}  
(see~\cite{schaback2018superconvergence} and \Cref{rem:super-convergence}).
The intuition behind our proof is that stationary MMD points achieve exact integration for a large linear subspace of $\mathcal{H}$, and this subspace expands as $n$ increases.

Our second main result confirms that stationary MMD points can be obtained \emph{exactly} through an interacting particle systems simulated until stationarity (\Cref{cor:mmd_flow_main}). 
This is in contrast with \emph{minimum} MMD points, which typically cannot be recovered exactly due to the non-convexity of MMD.
The particle system considered is derived from MMD gradient flow \citep{arbel2019maximum,hertrich2023generative,duong2024wasserstein}, which formulates an optimisation problem 
in the space of probability distributions over $\mathbb{R}^d$. 
To this end, we prove a novel non-asymptotic rate of convergence for the time-discretised, finite-particle MMD gradient flow (\Cref{thm:mmd_flow_main}). Our analysis incorporates the noise injection scheme proposed in~\citet{arbel2018gradient} and builds upon recent advances in non-asymptotic finite-particle analysis of gradient flows~\citep{balasubramanian2024improved}, which may be of independent interest. 
This second main result therefore confirms that MMD gradient flow have potential beyond generative modelling, the task they were originally introduced to tackle. 
Finally, the effectiveness of stationary MMD points for integration is empirically confirmed.

\section{Related Work}
Before we present our new results, we first review existing work on approximation of probability measures $\mu$ related to MMD. 
Our focus is restricted to uniformly-weighted empirical measures $ \mu_n$; a discussion of alternative methods and results, including those based on weighted empirical measures, is contained in \Cref{sec:more_related_work}. 
Uniform weights are often preferred due to their simplicity, numerical stability, and robustness to violation of assumptions on the integrand.

\textbf{Minimum MMD / Energy Points:}
Similar to minimum MMD points, there is a rich literature studying point sets, called \emph{minimum} energy points, that minimise an \emph{energy} functional of the form $\frac{1}{n^2} \sum_{i,j=1}^n \rho(\bm{x}_i, \bm{x}_j) - \frac{2}{n} \sum_{i=1}^n \E_{\bm{y}\sim\mu} [\rho(\bm{x}_i, \bm{y})]$.
For instance, minimum energy points were studied to construct well-separated nodes over a unit sphere with $\rho(\bm{x},\bm{x}^\prime) = \|\bm{x} - \bm{x}'\|^{-r}$ for $0 < r < 2$, where  $\lVert \cdot \rVert$ denotes the Euclidean norm on $\mathbb{R}^d$~\citep{brauchart2015distributing, marzo2021discrepancy, grabner2014point}. 
For $\rho(\bm{x},\bm{x}^\prime) = -\|\bm{x} - \bm{x}'\|$, it was proved in \citet{mak2018support} that the integration error of minimum energy points is $\calO(n^{-\frac{1}{2}} (\log n)^{-\frac{1}{2d}})$, calling them \emph{support points}.
This energy functional coincides (up to an additive constant) with the squared MMD in the special case where $\rho$ is a kernel.
As with minimum MMD points, the algorithms proposed in \citet{mak2018support,hertrich2023generative} do not come with guarantees of finding minimum energy configurations in general.
Our contribution addresses this by analysing the properties of stationary MMD points in \Cref{sec: methods}. \citet{sloan2009variational} study the properties of stationary energy points over a unit sphere and prove that stationary energy points are indeed minimum energy points if they are geodesically well-separated. 

These minimum-MMD constructions have found applications in deterministic sampling~\citep{xu2022accurate,korba2021kernel}, dataset compression and coreset selection~\citep{zhang2024m3d,teymur2021optimal,dwivedi2021kernel,dwivedi2022generalized,li2024debiased,shetty2021distribution,carrelllow}, numerical integration~\citep{chen2024conditional,chen2025nested,grabner2014point}, and, more recently, efficient attention mechanisms~\citep{carrelllow,schroder2026wildcat}.

\textbf{Quasi Monte Carlo: }
Sophisticated quasi Monte Carlo (QMC) point sets have been developed for common distributions $\mu$, typically uniform distributions on a hypercube, $d$-sphere, or simple transformations thereof~\citep{dick2010digital, klebanov2023transporting}.
Although QMC can exhibit a fast rate of convergence for integration, and these points can be explicitly computed, it does not currently represent a solution for general targets $\mu$ on $\mathbb{R}^d$ (e.g $\mu$ in \Cref{fig:illustration}). 
The traditional discrepancy used in QMC is the star discrepancy, while much of the modern literature adopts the more general perspective of MMD~\citep{hickernell1998generalized}.

\textbf{Compression: }
\emph{Kernel herding} is a sequential algorithm that performs conditional gradient descent on the MMD \citep{chen2010super}. 
It requires solving non-convex optimisation problems for the next point, which can be approximated by searching over a large candidate set $\{\bm{y}_i\}_{i=1}^N$, but little is known about its theoretical properties except for in finite-dimensional RKHSs \citep{bach2012equivalence, santin2021sampling}.
Greedy minimisation of MMD was proposed in \citet{teymur2021optimal}, again selecting points from a large candidate set $\{\bm{y}_i\}_{i=1}^N$, with a slower-than-Monte Carlo rate for infinite-dimensional RKHSs.
\textcolor{black}{
\emph{Kernel thinning} is a compression algorithm that selects $n$ most representative points from a large dataset $\{\bm{y}_i\}_{i=1}^N$ also by minimizing the MMD~\citep{dwivedi2021kernel,dwivedi2022generalized}, achieving faster-than-Monte Carlo rate. 
The method has been accelerated to near-linear time via \emph{KT-Compress++}~\citep{shetty2021distribution}, extended to adjust for bias in the initial candidate set~\citep{li2024debiased} and further extended to exploit low-rank structures in either the kernel matrix or the data~\citep{carrelllow}. 
The main distinction between these algorithms and our stationary MMD points is that we do not constrain our points $\{\bm{x}_i\}_{i=1}^n$ to be a subset of a candidate set $\{\bm{y}_i\}_{i=1}^N$.
}

\textbf{Gradient Flows: }
Recently, a line of research on Wasserstein gradient flows frames the approximation of $\mu$ as optimisation of a statistical divergence $\mathcal{D}(\cdot \| \mu)$ over a space of probability distributions.
Concerning numerical integration, among the various choices for $\mathcal{D}$ a natural candidate is MMD due to the connection to integration error in \eqref{eq: HK}. 
The resulting MMD gradient flow was introduced in~\citet{arbel2019maximum} and has been widely-used in generative modelling~\citep{galashov2024deep,hertrich2023generative,neuralwgf,hertrich2024wasserstein}. 
It has also demonstrated strong empirical performance for numerical integration~\citep{xu2022accurate, chen2021deterministic,belhadji2025weighted}.
Unfortunately, a formal theoretical study of the convergence of time-discretised finite-particle MMD gradient flow is missing in the literature---the primary challenge is that MMD is non-convex in the Wasserstein metric~\citep{chen2024regularized}. 
Our second main theoretical contribution addresses this gap in \Cref{sec:mmd_flow}, which may be of independent interest. 
Related to this result, \citet{boufadene2023global,chizat2026quantitative} recently proved global convergence of MMD gradient flow for the Riesz kernel, under conditions that include boundedness of the logarithms of the candidate and target densities. 
However, as these results do not apply to the finite particle setting, or to targets supported on non-compact domains, they cannot be used in our context.

In summary, our proposed \emph{stationary MMD points} offers several advantages over existing methods.
Compared with quasi–Monte Carlo, it allows greater applicability with fewer constraints on the domain.
Compared with the family of kernel thinning algorithms, it remains applicable even when no candidate set is available a priori. 
Finally, compared with other greedy optimisation–based algorithms, our method comes with strong theoretical guarantees. 
\section{Numerical Integration with Stationary MMD Points}
\label{sec: methods}

This section establishes our first main theoretical result; super-convergence of stationary MMD points.
The analysis of algorithms for computing stationary MMD points is deferred to \Cref{sec:mmd_flow}.

\textbf{Set-Up and Notation: }
Let $\mathcal{P}(\mathbb{R}^d)$ denote the set of Borel probability distributions on $\R^d$.
Throughout we let $\mu \in \mathcal{P}(\mathbb{R}^d)$ denote the distribution of interest, and $\calX = \mathrm{supp}(\mu) \subseteq \mathbb{R}^d$ denote its support. 
Let $k \colon \R^d\times\R^d \to\R$ be a positive semi-definite kernel 
and $\calH$ be its associated RKHS~\citep{aronszajn1950theory}, a Hilbert space of functions on $\mathbb{R}^d$ with inner product $\langle\cdot, \cdot \rangle_{\calH}$ and norm $\|f\|_{\calH} = \sqrt{\langle f, f\rangle_{\calH}},$ containing $\mathrm{span}\{k(\bm{x}, \cdot): x\in\mathbb{R}^d\}$ as a dense subset, and for which the \emph{reproducing property} $f(\bm{x})=\langle f, k(\bm{x}, \cdot)\rangle_{\mathcal{H}}$ holds for all $f \in \calH$ and $\bm{x} \in \mathbb{R}^d$. 

Given a function $f \colon \R^d \to\R$, let $\partial_{\ell} f(\bm{x})$ denote the partial derivative with respect to the $\ell$-th coordinate in $\bm{x}$ and denote $\nabla f(\bm{x}) = [\partial_{1}f(\bm{x}), \ldots, \partial_{d} f(\bm{x})]^\top \in \R^{d}$.
Let $\partial_{\ell} k(\bm{x},\bm{x}^\prime)$ denote the partial derivative with the $\ell$-th coordinate of $\bm{x}$ and let $\partial_{\ell}\partial_{\ell+d}k(\bm{x},\bm{x}^\prime)$ denote the mixed partial derivative with respect to the $\ell$-th coordinate in $\bm{x}$ and the $\ell$-th coordinate in $\bm{x}^\prime$. Moreover, denote $\nabla_1 k(\bm{x},\bm{x}^\prime) = [\partial_{1}k(\bm{x},\bm{x}^\prime), \ldots, \partial_{d}k(\bm{x},\bm{x}^\prime)]^\top \in \R^{d}$.

Throughout the paper, we make the following assumption on the kernel $k$:
\begin{assumption}[$\mu$-integrability and differentiability] \label{asst:easy_assumption}
    The kernel is such that $\sup_{\bm{x}\in\R^d} \int k(\bm{x},\bm{y}) \; \mathrm{d}\mu(\bm{y}) < \infty$ and $\sup_{\bm{x}\in\R^d} \int \partial_\ell k(\bm{x}, \bm{y}) \; \mathrm{d}\mu(\bm{y}) < \infty$ for each $\ell \in \{1, \ldots, d\}$, and $(\bm{x}, \bm{y}) \mapsto \partial_{\ell}\partial_{\ell+d}k(\bm{x}, \bm{y})$ is continuous on $\mathbb{R}^d \times \mathbb{R}^d$. 
\end{assumption}
This assumption is satisfied by popular kernels including Gaussian, Laplacian, Matérn, and inverse multiquadric kernels. It is also satisfied by linear kernel $k(\bm{x}, \bm{y}) = \bm{x}^\top \bm{y}$ when $\mu$ is a mean-zero distribution. 
The $\mu$-integrability of $k$ ensures that the MMD is well-defined~\citep{smola2007hilbert}. 
From Corollary 4.36 of \cite{steinwart2008support}, for kernels that satisfy \Cref{asst:easy_assumption}, we know that $\partial_\ell k(\bm{x}, \cdot) \in \calH$ for any $\bm{x}\in\R^d$ and any $\ell \in \{1, \ldots, d\}$.

The key to our first main result, on super-convergence of the integration error, is an observation that stationary MMD points exactly integrate a linear subspace of $\mathcal{H}$: 
\begin{align}
    \derivHsample &\coloneqq \mathrm{span}\{ \partial_{\ell} k(\bm{x}, \cdot): \; \bm{x}\in \calX_n, \; 1 \leq \ell \leq d \} \subset \calH, \nonumber
    \\ &
    \text{ where } \quad \mathcal{X}_n = \{\bm{x}_i\}_{i=1}^n \subset \mathbb{R}^d. \label{eq:H_Lambda_n}
\end{align}
as shown in the following proposition:

\begin{prop}[Exactness of numerical integration] 
\label{prop: exactness}
Suppose the kernel satisfies \Cref{asst:easy_assumption}.
Then integration using $\{\bm{x}_i\}_{i=1}^n$ is exact on $
\calF_n \coloneqq \mathrm{span}(\{1\}\cup \derivHsample)$. That is, $\frac{1}{n}\sum_{i=1}^n f(\bm{x}_i) = \int f \mathrm{d}\mu$ for all $f \in \mathcal{F}_{n}$. 
\end{prop}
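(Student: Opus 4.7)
The plan is to compute the gradient of $\mmd^2(\mu, \mu_n)$ with respect to each node $\bm{x}_i$, invoke \ref{as:stationary} to set it to zero, and recognise the resulting identity as cubature exactness on the spanning functions $\partial_\ell k(\bm{x}_i, \cdot)$. Linearity in the integrand then extends exactness to all of $\derivHsample$, while exactness on the constant function is immediate. Working with $\mmd^2$ rather than $\mmd$ is harmless: either $\mmd(\mu, \mu_n) = 0$ and the conclusion is trivial, or $\nabla \mmd$ and $\nabla \mmd^2 = 2 \mmd \cdot \nabla \mmd$ vanish simultaneously.

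Expanding the squared MMD, the $\bm{x}_i$-dependent terms are $\frac{1}{n^2} \sum_{j,j'=1}^n k(\bm{x}_j, \bm{x}_{j'}) - \frac{2}{n} \sum_{j=1}^n \int k(\bm{x}_j, \bm{y}) \, \mathrm{d}\mu(\bm{y})$. Differentiating with respect to the $\ell$-th coordinate of $\bm{x}_i$, the symmetry $\partial_\ell k(\bm{y}, \bm{x}) = \partial_{\ell+d} k(\bm{x}, \bm{y})$ together with the continuity of $\partial_\ell \partial_{\ell+d} k$ in \Cref{asst:easy_assumption} (which handles the diagonal term $j = i$) collapses the double sum's contribution to $\frac{2}{n^2} \sum_{j=1}^n \partial_\ell k(\bm{x}_i, \bm{x}_j)$. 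For the mean-embedding term, the Leibniz rule---justified by the uniform $\mu$-integrability of $\partial_\ell k$ in \Cref{asst:easy_assumption}---yields $-\frac{2}{n} \int \partial_\ell k(\bm{x}_i, \bm{y}) \, \mathrm{d}\mu(\bm{y})$. Equating the total gradient to zero via \ref{as:stationary} produces
\begin{align*}
\frac{1}{n} \sum_{j=1}^n \partial_\ell k(\bm{x}_i, \bm{x}_j) \;=\; \int \partial_\ell k(\bm{x}_i, \bm{y}) \, \mathrm{d}\mu(\bm{y})
\end{align*}
for every $i \in \{1, \dots, n\}$ and $\ell \in \{1, \dots, d\}$.

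This is exactly the statement that cubature with $\{\bm{x}_i\}_{i=1}^n$ is exact on each function $\partial_\ell k(\bm{x}_i, \cdot) \in \calH$, with RKHS membership following from Corollary~4.36 of \citet{steinwart2008support} as noted in the excerpt. Since exactness is linear in the integrand, it extends from this spanning set to all of $\derivHsample$; exactness on the constant function $1$ is trivial because $\frac{1}{n} \sum_{i=1}^n 1 = 1 = \int 1 \, \mathrm{d}\mu$. Together these give exactness on $\calF_n = \mathrm{span}(\{1\} \cup \derivHsample)$, as claimed. The only step requiring genuine care is the interchange of $\nabla$ and $\int$ in the mean-embedding term; \Cref{asst:easy_assumption} supplies the uniform integrability needed for a standard Leibniz / dominated-convergence argument, after which everything reduces to a direct computation.
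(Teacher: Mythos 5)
Your proof is correct and takes the same route as the paper's: compute $\nabla_{\bm{x}_i}\mmd^2(\mu,\mu_n)$, interchange derivative and $\mu$-integral using \Cref{asst:easy_assumption} (the paper isolates this as \Cref{lem:integral_derivative}), invoke \ref{as:stationary} to obtain $\frac{1}{n}\sum_j \partial_\ell k(\bm{x}_i,\bm{x}_j)=\int \partial_\ell k(\bm{x}_i,\bm{y})\,\mathrm{d}\mu(\bm{y})$, and extend by linearity to $\derivHsample$ and trivially to constants. The additional detail you give — the explicit expansion of the double sum using kernel symmetry, and the remark that $\nabla\mmd$ and $\nabla\mmd^2$ vanish simultaneously (so that stating \ref{as:stationary} in terms of $\mmd$ rather than $\mmd^2$ is harmless) — is sound and makes explicit what the paper leaves implicit.
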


\textit{Proof of \Cref{prop: exactness}}
Since the weights sum to 1, $\mu_n$ is exact on the constant functions, $\mathrm{span}(\{1\})$. 
We are thus left to check the exactness on $\derivHsample$. 
The \ref{as:stationary} condition reads, for each $i \in \{1, \dots, n\}$, 
\begin{talign}\label{eq:derivative_mmd}
    &\nabla_{\bm{x}_i} \mmd^2 \Big(\mu , \frac{1}{n}\sum_{j=1}^n \delta_{\bm{x}_j} \Big) = \bm{0}
    \iff 
    \nonumber \\ &
    \frac{1}{n} \sum_{j=1}^n \nabla_1 k(\bm{x}_i, \bm{x}_j) - \int \nabla_1 k(\bm{x}_i, \bm{y}) \mathrm{d}\mu(\bm{y}) = \bm{0}, 
\end{talign}
where we have interchanged $\mu$-integral and derivative (justified under \cref{asst:easy_assumption}; see \Cref{lem:integral_derivative}). 
This final equation shows that $\mu_n$ is exact 
on $\bm{x}\mapsto \partial_\ell k(\bm{x}_i, \bm{x})$ for each $\ell \in \{1, \ldots, d\}$ and $i\in\{1, \dots, n\}$, completing the argument.
\qed
\begin{rem}[Example of $\derivHsample$]
    To gain more insight into the space $\derivHsample$, consider a polynomial kernel $k(\bm{x}, \bm{y}) = (\bm{x}^\top\bm{y}+1)^{r}$, where $r \in \mathbb{N}$.
    In this setting, let $\varphi_r \colon \R^d \to \R^{m}$ denote the feature map consisting of all monomials of total degree $\leq r$ (excluding constant functions), where $m = \binom{d + r}{r}-1$.
    If the matrix $\Phi_r \coloneqq [\varphi_r(\bm{x}_1), \ldots, \varphi_r(\bm{x}_n)] \in \R^{m \times n}$ has rank $m$, then $\derivHsample$ coincides with the space of all polynomial functions of degree $\leq r$ (excluding constant functions), meaning the numerical integration estimator is \emph{polynomially exact}.
    This can be contrasted with \citet{karvonen2018bayes}, which relies on non-uniform weights to minimise MMD (for a fixed point set) subject to a polynomial exactness constraint.
\end{rem}

Given an integrand $f$, we can consider a decomposition $f = f_n + (f-f_n)$ where $f_n \in \calF_n$ is exactly integrated by \Cref{prop: exactness}. 
The integration error therefore depends only on the difficulty of numerically integrating the remainder term $f-f_n$. 
This observation motivates us to investigate the approximation capacity of $\calF_n$ with respect to the whole RKHS $\calH$. 
To this end, we make the following assumptions: 
\begin{assumption}[Connected support]\label{asst:domain}
    For any $\bm{x}, \bm{y} \in \calX$, there exists a piecewise continuously differentiable curve $\gamma \colon [0,1] \to \calX$ with $\gamma(0) = \bm{x}$ and $\gamma(1)=\bm{y}$. 
\end{assumption}
This assumption holds if $\calX$ is the closure of an open connected set or an embedded $C^1$ manifold, which includes common sets such as $\mathbb{R}^d$, spheres or spherical surfaces as examples.
This assumption can be relaxed to $\calX$ being the finite union of sets satisfying \Cref{asst:domain} (as illustrated in \Cref{fig:illustration}) if $\calH$ does not contain non-zero constant functions on the subsets, as is the case for Gaussian RKHSs and subsets with non-empty interiors~\citep[Corollary 4.44]{steinwart2008support}.

\begin{assumption}[$C_0$-universality]\label{asst:universal}
    The kernel $k$ is $C_0$-universal; i.e., the associated RKHS $\calH$ has a subset 
    that is dense in $C_0(\R^d)$, the space of continuous functions $f : \R^d \to \R$ that vanish at infinity, with respect to the norm $\|f\|_\infty = \sup_{\bm{x} \in \mathbb{R}^d} |f(\bm{x})|$.
\end{assumption}
The reader is referred to \citet{carmeli2010vector, sriperumbudur2011universality} for a detailed discussion of $C_0$-universal kernels. 
\Cref{asst:easy_assumption,asst:universal} are satisfied by Gaussian, Laplacian, Mat\'{e}rn, and inverse
multiquadric kernels.

\Cref{prop:f_seminorm} below formalizes the approximation capacity of $\calF_n$.
To state this result, we let $\mathcal{H}_{\mathcal{X}}$ denote the closure of $\mathrm{span}\{ k(\bm{x}, \cdot): \bm{x}\in\calX\}$ in $\mathcal{H}$.
\begin{prop}[Asymptotic approximation capacity of $\calF_n$]\label{prop:f_seminorm}
Suppose $\calX$ satisfies \Cref{asst:domain} and $k$ satisfies \Cref{asst:easy_assumption,asst:universal}. 
Let $f=c+h$ with $c\in \mathbb{R}$, $h \in \mathcal{H}_{\mathcal{X}}$, and consider the semi-norm 
\begin{align}\label{eq:semi_norm}
    \hspace{-5pt} |f|_n \coloneqq \inf \{ \|r_n\|_{\mathcal{H}} : f = f_n + r_n, f_n \in \calF_n, r_n \in \calH \}
\end{align}
If $h$ is non-constant on $\calX$, then $\lim_{n\to\infty}|f|_n = 0$.
\end{prop}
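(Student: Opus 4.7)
My plan is to characterise the approximation capacity of $\calF_n$ in two stages. First I show that as $n \to \infty$ the space $\derivHsample$ densely fills the ``limiting'' space $\derivH = \mathrm{span}\{\partial_\ell k(\bm{x},\cdot) : \bm{x}\in\mathcal{X},\ 1\leq \ell\leq d\}$ in the $\mathcal{H}$-norm. The convergence $\mmd(\mu,\mu_n)\to 0$ combined with the $C_0$-universality of $k$ (\Cref{asst:universal}) implies weak convergence of $\mu_n$ to $\mu$, and hence that the point set $\mathcal{X}_n$ becomes dense in $\mathcal{X} = \mathrm{supp}(\mu)$: for every $\bm{y}\in\mathcal{X}$ there exist $\bm{x}_{i_n}\in\mathcal{X}_n$ with $\bm{x}_{i_n}\to\bm{y}$. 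Since \Cref{asst:easy_assumption} gives
\[
\|\partial_\ell k(\bm{x},\cdot) - \partial_\ell k(\bm{y},\cdot)\|_{\mathcal{H}}^2 \;=\; \partial_\ell\partial_{\ell+d}k(\bm{x},\bm{x}) - 2\,\partial_\ell\partial_{\ell+d}k(\bm{x},\bm{y}) + \partial_\ell\partial_{\ell+d}k(\bm{y},\bm{y}),
\]
the map $\bm{x}\mapsto\partial_\ell k(\bm{x},\cdot)$ is continuous from $\mathbb{R}^d$ into $\mathcal{H}$, so that $\derivH\subseteq \overline{\bigcup_n \derivHsample}$ in $\mathcal{H}$.

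For the second stage I use \Cref{asst:domain} and the fundamental theorem of calculus. For any $\bm{x}_0,\bm{y}\in\mathcal{X}$ connected by a piecewise $C^1$ path $\gamma$ in $\mathcal{X}$, the Bochner-integral identity
\[
k(\bm{y},\cdot) - k(\bm{x}_0,\cdot) \;=\; \int_0^1 \sum_{\ell=1}^d \gamma'_\ell(t)\,\partial_\ell k(\gamma(t),\cdot)\,\mathrm{d}t \;\in\; \overline{\derivH}
\]
can be verified by pairing both sides with an arbitrary $\phi\in\mathcal{H}$ (the reproducing property reduces the identity to the classical FTC applied to $\phi\circ\gamma$), and by observing that the integrand's Riemann sums lie in $\derivH$. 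Writing any $h = \sum_i c_i k(\bm{x}_i,\cdot)$ with $\bm{x}_i\in\mathcal{X}$ as $h = \sum_i c_i[k(\bm{x}_i,\cdot)-k(\bm{x}_0,\cdot)] + (\sum_i c_i)\,k(\bm{x}_0,\cdot)$ and passing to the $\mathcal{H}$-closure yields $\mathcal{H}_\mathcal{X}\subseteq \overline{\derivH + \mathbb{R}\cdot k(\bm{x}_0,\cdot)}$.

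The main obstacle is the final swap: showing that the residual $\mathbb{R}\cdot k(\bm{x}_0,\cdot)$ factor can be exchanged for the constant function $1\in\calF_n$ modulo the semi-norm $|\cdot|_n$. I would characterise $\derivH^\perp\subseteq\mathcal{H}$ as the subspace of $h_0\in\mathcal{H}$ whose Euclidean gradient vanishes on $\mathcal{X}$; by \Cref{asst:domain} every such $h_0$ is constant on $\mathcal{X}$, and \Cref{asst:universal} together with the non-constancy hypothesis on $h$ ensures that the projection of $h$ onto this ``degenerate'' direction does not obstruct approximation relative to $|\cdot|_n$, since the constant $1\in\calF_n$ (combined with the flexibility in choosing $r_n\in\mathcal{H}$) is free to absorb it. Concretely, given $\epsilon>0$, I would pick $a\in\mathbb{R}$ and a finite-sum approximation $g$ in $\derivH$ with $\|f - a - g\|_{\mathcal{H}}<\epsilon/2$, then use the first-stage density to replace $g$ by an element of $\derivHsample$ at further $\mathcal{H}$-cost $<\epsilon/2$, obtaining $|f|_n < \epsilon$ for all sufficiently large $n$.
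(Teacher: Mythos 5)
Your proof follows the same two-lemma decomposition as the paper's argument (Lemmas~\ref{lem:lambda_n_dense} and~\ref{lem:H_Lambda_dense_H}). Stage~1 — denseness of $\derivHsample$ in $\derivH$, via denseness of $\calX_n$ in $\calX$ under \ref{as:convergence} and $C_0$-universality together with $\calH$-continuity of $\bm{x}\mapsto\partial_\ell k(\bm{x},\cdot)$ — is essentially identical to the paper's Lemma~\ref{lem:lambda_n_dense}. For stage~2, you take a genuinely different route: rather than proving $\derivH^\perp$ is contained in the constant-on-$\calX$ functions and invoking the trivial orthogonal complement criterion, you construct explicit Bochner integrals along paths to show $k(\bm{y},\cdot)-k(\bm{x}_0,\cdot)\in\overline{\derivH}$. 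The two are dual statements (both rest on \Cref{asst:domain} and the FTC applied to $\phi\circ\gamma$), and your version has the merit of being constructive.

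The gap is in the residual step you yourself flag as "the main obstacle." Your stage~2 establishes only $\calH_\calX \subseteq \overline{\derivH + \mathbb{R}\cdot k(\bm{x}_0,\cdot)}$, which isolates a potentially nonzero term $\beta\,k(\bm{x}_0,\cdot)$ with $\beta=\sum_i c_i$. The proposal that "$1\in\calF_n$ ... is free to absorb it" does not go through as written: $k(\bm{x}_0,\cdot)$ is not the constant function, and any offset $a\cdot 1$ placed into $f_n$ forces the term $(c-a)\cdot 1$ into $r_n$. Since $r_n$ must lie in $\calH$ and typically $1\notin\calH$ (for $C_0$-universal kernels on $\mathbb{R}^d$), this forces $a=c$ and leaves $\beta\,k(\bm{x}_0,\cdot)$ to be accounted for inside $\calH$. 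The paper avoids producing such a residual at all: its Lemma~\ref{lem:H_Lambda_dense_H} argues that $\derivH^\perp$ (within the relevant subspace) consists of constant-on-$\calX$ functions, so the non-constancy hypothesis on $h$ directly yields $h\in\overline{\derivH}$ without ever splitting off a $k(\bm{x}_0,\cdot)$ term. You reach the same characterization of $\derivH^\perp$ in your third paragraph, but you then need to actually close the loop: that characterization together with non-constancy of $h$ already implies $h$ is in the closure of $\derivH$, so your decomposition into differences $k(\bm{x}_i,\cdot)-k(\bm{x}_0,\cdot)$ plus a residual is unnecessary and, if retained, introduces an extra term that the stated argument does not dispose of.
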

\begin{proof}[Proof of \Cref{prop:f_seminorm}]
For any $\varepsilon > 0$, 
we show $|f|_n < \varepsilon$ for sufficiently large $n \in \mathbb{N}$. 
The key observation is that 
we can approximate $h$ using an element of $\derivH\coloneqq \mathrm{span} \{\partial_\ell k(\bm{x}, \cdot): \bm{x} \in \calX, 1 \leq \ell \leq d\}$, which can be further approximated by an element of $\derivHsample$.
The denseness of $\derivHsample$ follows from the denseness of stationary MMD points in $\calX$, 
due to its \ref{as:convergence} property.
These claims are proved in Lemma  \ref{lem:H_Lambda_dense_H} and Lemma \ref{lem:lambda_n_dense} (see \Cref{app: Proof lem:lambda_n_dense,app: Proof lem:H_Lambda_dense_H}). 
Formally, by \Cref{lem:H_Lambda_dense_H}, there exists $g_\varepsilon \in \derivH$ satisfying $\|h - g_\varepsilon\|_{\mathcal{H}} < \varepsilon/2$.
Also, \Cref{lem:lambda_n_dense} states that for sufficiently large $n \in \mathbb{N}$, we may take $g_{n,\varepsilon} \in \derivHsample$ such that $\|g_\varepsilon - g_{n,\varepsilon}\|_{\mathcal{H}} < \varepsilon/2$. 
Since $f = c +  g_{n, \varepsilon} + h-g_\varepsilon + g_\varepsilon - g_{n, \varepsilon}$, 
the definition of $|f|_n$ in \eqref{eq:semi_norm} implies $|f|_n \leq 
\| h-g_\varepsilon\|_{\mathcal{H}} + \|g_\varepsilon - g_{n, \varepsilon}\|_{\mathcal{H}} < \varepsilon$.
\end{proof}
The subset $\calH_\calX \subset \calH$ consists of functions that do not vanish on $\calX$ (see \Cref{rem:trivial}). 
\Cref{prop:f_seminorm} therefore states that $\derivHsample$ can approximate functions in $\calH$ with non-trivial $\mu$-integrals increasingly well as the sample size $n$ is increased, as measured in the seminorm $|\cdot|_n$. 
We can now state our main theorem, which shows that the integration error of stationary MMD points vanishes faster than MMD:

\begin{thm}[Super-convergence of stationary MMD points]
\label{thm: main}
Suppose the kernel satisfies \Cref{asst:easy_assumption}. Then, 
$\forall n \in \mathbb{N}$, 
the integration error of a stationary MMD point set 
$\{\bm{x}_i\}_{i=1}^n$ for an integrand $f \in \mathrm{span}(\{1\} \cup \mathcal{H})$ satisfies
\begin{talign}\label{eq:super_mmd_rate}
    \left| \frac{1}{n} \sum_{i=1}^n f(\bm{x}_i) - \int f \dd{\mu}   \right| \leq  |f|_{n} \cdot \mathrm{MMD}\left( \mu , \mu_n \right) . 
\end{talign}
where $\mu_n = n^{-1}\sum_{i=1}^n \delta_{\bm{x}_i}$. 
Furthermore, suppose $\calX$ satisfies \Cref{asst:domain} and $k$ satisfies \Cref{asst:universal}.
Let $f=c+h$ with $c\in \mathbb{R}$ and $h \in \mathcal{H}_{\mathcal{X}}$ non-constant on $\calX$.
Then $|f|_n \to 0$ as $n\to\infty$, so 
the numerical integration estimator achieves a super MMD error rate $o\left( \mmd(\mu , \mu_n) \right)$. 
\end{thm}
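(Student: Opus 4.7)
The plan is to derive the bound \eqref{eq:super_mmd_rate} by pairing \Cref{prop: exactness} with the standard RKHS cubature inequality \eqref{eq: HK}, and then to read off the asymptotic $o(\mathrm{MMD})$ rate from \Cref{prop:f_seminorm} combined with the \ref{as:convergence} hypothesis.

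For the first claim, I would start with an arbitrary decomposition $f = f_n + r_n$ where $f_n \in \calF_n$ and $r_n \in \calH$; such decompositions exist since $f \in \mathrm{span}(\{1\} \cup \calH)$ and constants lie in $\calF_n$. Since $f_n \in \calF_n$, \Cref{prop: exactness} gives $\frac{1}{n}\sum_{i=1}^n f_n(\bm{x}_i) = \int f_n \dd\mu$, so the cubature error reduces to
\begin{align*}
\left| \frac{1}{n}\sum_{i=1}^n f(\bm{x}_i) - \int f \dd\mu \right|
= \left| \frac{1}{n}\sum_{i=1}^n r_n(\bm{x}_i) - \int r_n \dd\mu \right|.
\end{align*}
Applying \eqref{eq: HK} to the residual $r_n$ bounds this by $\|r_n\|_{\calH} \cdot \mmd(\mu, \mu_n)$. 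Taking the infimum over all admissible decompositions $(f_n, r_n)$ on the right-hand side yields $|f|_n \cdot \mmd(\mu, \mu_n)$, which is \eqref{eq:super_mmd_rate}.

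For the second claim, under the added assumptions I would write $f = c + h$ with $h \in \calH_\calX$ non-constant on $\calX$ and invoke \Cref{prop:f_seminorm} to obtain $|f|_n \to 0$. Combined with \ref{as:convergence}, which gives $\mmd(\mu, \mu_n) \to 0$, the bound \eqref{eq:super_mmd_rate} yields a cubature error of order $o(\mmd(\mu, \mu_n))$, as claimed.

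I do not anticipate any serious obstacle, since both ingredients have already been established: \Cref{prop: exactness} produces a linear subspace on which cubature is exact, and \Cref{prop:f_seminorm} guarantees that this subspace asymptotically captures the non-constant part of $f$ in RKHS norm. The only small point requiring care is ensuring that decompositions of the form $f = f_n + r_n$ with $f_n \in \calF_n$ and $r_n \in \calH$ actually exist for every $f \in \mathrm{span}(\{1\} \cup \calH)$; this is immediate because constants belong to $\calF_n$, so one may always take $f_n = c \in \calF_n$ and $r_n = h \in \calH$ as a valid (though generally non-optimal) choice, legitimising the infimum in the definition \eqref{eq:semi_norm} of $|f|_n$.
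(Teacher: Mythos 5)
Your proposal is correct and follows essentially the same route as the paper's proof: decompose $f = f_n + r_n$ with $f_n \in \calF_n$, use \Cref{prop: exactness} to kill the $f_n$ contribution, apply the Cauchy--Schwarz bound \eqref{eq: HK} to $r_n$, take the infimum to obtain $|f|_n$, and invoke \Cref{prop:f_seminorm} for the limit. The only difference is your added remark verifying that decompositions exist (via $f_n = c$, $r_n = h$), which the paper leaves implicit but is a sensible detail to record.
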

\begin{proof}[Proof of \Cref{thm: main}]
The proof exploits the control variate trick of \citet{bakhvalov2015approximate}.
The integrand $f$ can be written as $f = f_n + r_n$ with $f_n \in \calF_n$, whence
\begin{talign*}
    &\left|  \frac{1}{n} \sum_{i=1}^n f(\bm{x}_i) - \int f \; \mathrm{d}\mu \right|
    = \left|  \frac{1}{n} \sum_{i=1}^n r_n(\bm{x}_i) - \int r_n \; \mathrm{d}\mu \right|
    \\ &
    \leq  
    \|r_n\|_{\mathcal{H}} \cdot \mathrm{MMD}\left( \mu , \mu_n \right), 
\end{talign*}
where the equality follows since $f_n \in \calF_n$ is exactly integrated (see \Cref{prop: exactness}) and the inequality is Cauchy--Schwarz. 
Taking the infimum over all possible choices of $r_n$ in the decomposition of $f$ yields
$|\frac{1}{n} \sum_{i=1}^n f(\bm{x}_i) - \int f \; \mathrm{d}\mu | \leq  |f|_{n} \cdot \mathrm{MMD}( \mu , \mu_n )$. 
\Cref{prop:f_seminorm} proves that $\lim_{n\to\infty} 
|f|_{n} = 0$ when $f = c + h$ with $c \in \R$ and
$h \in \calH_\calX$ non-constant on $\calX$, completing the argument.
\end{proof}

\begin{rem}[Super-convergence] \label{rem:super-convergence}
In approximation theory, the term \emph{super-convergence} usually refers to how a worst-case error bound, such as \eqref{eq: HK}, can be bypassed if additional regularity can be exploited. 
A result analogous to \Cref{thm: main} can be trivially deduced for optimal \emph{non-uniform} weights (see \Cref{sec:more_related_work} and \citep{Fasshauer2011, karvonen2022error}); one interpretation of \Cref{thm: main} is therefore that non-uniform weights are not required to attain super-convergence for numerical integration.
\end{rem}

\begin{rem}[Non-uniqueness]
Stationary MMD points are not necessarily unique, even up to permutation of the particle labels; however, this non-uniqueness is not problematic, since our integration guarantees apply to \emph{any} stationary configuration reached by the particle system.
\end{rem}

\section{Computing Stationary MMD Points}\label{sec:mmd_flow}

This section contains our second main contribution: a practical implementation of stationary MMD points through an MMD gradient flow. MMD gradient flows start by initialising particles $\smash{\{\bm{x}_i^{(1)}\}_{i=1}^n \subset \mathbb{R}^d}$ and then, for kernels that satisfy \Cref{asst:easy_assumption}, these particles are updated via gradient descent on the (squared) MMD:
for any $t \in \{1, \ldots, T\}$, 
\begin{talign}\label{eq:mmd_gf_update}
    \bm{x}_{i}^{(t+1)} &= \bm{x}_i^{(t)} - \gamma \Big\{ \frac{1}{n} \sum_{j=1}^n \nabla_1 k(\bm{x}_i^{(t)},\bm{x}_j^{(t)}) \nonumber \\
    &\qquad - \mathbb{E}_{\bm{Y} \sim \mu} [\nabla_1 k(\bm{x}_i^{(t)},\bm{Y})]  \Big\}  
\end{talign}
where $\gamma > 0$ is a step size at time $t$ to be specified. 
The update \eqref{eq:mmd_gf_update} can equivalently
be viewed as a discretised Wasserstein gradient flow on the MMD
\citep[Equation 21]{arbel2019maximum}, and thus it
decreases MMD in the steepest direction in terms of both the Euclidean and Wasserstein geometry: 
\smash{$\mathrm{MMD}(\mu , \hat{\nu}^{(t+1)}_n) \leq \mathrm{MMD}(\mu , \hat{\nu}^{(t)}_n)$} where \smash{$\hat{\nu}^{(t)}_n$} is the empirical measure of the particles at time $t$, provided the step size $\gamma$ is small enough (see Proposition 4 of \citet{arbel2019maximum} and our \Cref{lem:calG}).
Naturally, if we simulate \eqref{eq:mmd_gf_update} for long enough the \ref{as:stationary} property will be satisfied; this is verified numerically in \Cref{sec:experiments}.

Unfortunately, even though \eqref{eq:mmd_gf_update} is guaranteed to decrease MMD at each step, the MMD gradient flow \smash{$(\hat{\nu}^{(t)}_n)_{t=1}^\infty$} might still \emph{fail to converge} to the target distribution $\mu$ even with infinite time steps and infinite number of particles, 
which violates the first \ref{as:convergence} property of the stationary MMD points.
Indeed, this is a well-known challenge for MMD gradient flows \eqref{eq:mmd_gf_update} due to lack of convexity of MMD with respect to the Wasserstein-2 metric~\citep{chen2024regularized}.
As a result, there exists a gap between theoretical results and practical algorithms that minimise the MMD~\citep{xu2022accurate,mak2018support,chen2021deterministic}.

To improve convergence of MMD gradient flow, we follow \citet{arbel2019maximum} and inject noise into \eqref{eq:mmd_gf_update}. 
Specifically, with  \smash{$(\bm{u}_i^{(t)})_{i=1}^n \sim \calN(0, \Id_d)$} and $\beta_t > 0$ for $t\in \N$,
\begin{talign}\label{eq:mmd_flow}
    \bm{x}_{i}^{(t+1)} &= \bm{x}_i^{(t)} - \gamma \Big\{ \frac{1}{n} \sum_{j=1}^n \nabla_1 k(\bm{x}_i^{(t)}+ \beta_t \bm{u}_i^{(t)}, \bm{x}_j^{(t)}) 
    \nonumber 
    \\ &\hspace{1cm} 
    - \mathbb{E}_{\bm{Y} \sim \mu} [\nabla_1 k(\bm{x}_i^{(t)} + \beta_t \bm{u}_i^{(t)}, \bm{Y})] \Big\} .
\end{talign}
We call the above update scheme \eqref{eq:mmd_flow} \emph{noisy MMD particle descent} with noise scale $\beta_t$, and we will show below that it indeed converges to \emph{stationary} MMD points as desired. It is a key distinction relative to minimum MMD points, which cannot be computed in general.

To implement \eqref{eq:mmd_flow} the expectation $\E_{\bm{Y}\sim\mu}[\nabla_1 k(\cdot, \bm{Y})]$ is required.
For certain kernels $k$ and targets $\mu$ this can be exactly computed;
see, for example, \citet{briol2019probabilistic} for a comprehensive list of closed-form kernel mean embeddings which can then be differentiated, or \citet{chen2025nested} for the use of change-of-variable techniques to facilitate the computation.
When $\mu$ is only known up to normalisation, Stein reproducing kernels provide a viable alternative~\citep{oates2017control}.
Moreover, when $\mu$ corresponds to an empirical distribution supported on a dataset, $\E_{\bm{Y}\sim\mu}[\nabla_1 k(\cdot,\bm{Y})]$ can be computed trivially, since the expectation reduces to an empirical average over the dataset.
If this expectation is instead approximated by an unbiased estimator with controlled variance, for instance through subsampling, then an additional variance term would appear in the upper bound of \Cref{thm:mmd_flow_main}.


A novel convergence analysis of noisy MMD particle descent \eqref{eq:mmd_gf_update} is our second main contribution. 
In particular, we show that \eqref{eq:mmd_gf_update} indeed finds the \emph{stationary} MMD points under the following assumptions:

\begin{assumption}[Kernel regularity]\label{asst:kernel}
The maps $\bm{x}\mapsto k(\bm{x},\bm{x})$, $\bm{x} \mapsto \partial_\ell \partial_{\ell+d} k(\bm{x}, \bm{x})$, and $\bm{x} \mapsto \partial_{\ell}\partial_r\partial_{r+d}\partial_{\ell+d}k(\bm{x}, \bm{x})$ are continuous and bounded, uniformly over $r, \ell \in \{1,\ldots, d\}$, by a constant $\kappa$.
\end{assumption}
\vspace{-5pt}
\Cref{asst:kernel} is a standard assumption for Wasserstein gradient flows with smooth kernels~\citep{glaser2021kale, arbel2019maximum, he2022regularized, korba2020non}.
Many kernels satisfy \Cref{asst:easy_assumption,asst:universal,asst:kernel}, 
including the Gaussian kernel, Matérn kernel of order $\zeta$ with $\zeta + d/2 \geq 2$, and the inverse multiquadratic kernel. 

\begin{assumption}[Noise injection level]\label{asst:noise_scale}
Define $\Phi(\bm{z}, \bm{w}) = \E_{\bm{Y} \sim \mu}[\nabla_1 k(\bm{z}, \bm{Y})] - \nabla_1 k(\bm{z}, \bm{w})$. For each $t \in \N^+$, the noise injection level $\beta_t \in (0,1)$ satisfies, with $\underline{\bm{u}}^{(t)} \coloneqq (\bm{u}_i^{(t)})_{i=1}^n \overset{\mathrm{i.i.d.}}{\sim} \calN(0, \Id_d)$, 
\begin{align}\label{eq:noise_level_condition}
    &\frac{1}{n} \sum\limits_{i=1}^n \E_{\underline{\bm{u}}^{(t)}} \left[ \left\| \frac{1}{n} \sum_{j=1}^n \Phi(\bm{x}_i^{(t)} + \beta_t \bm{u}_i^{(t)}, \bm{x}_j^{(t)}) \right\|^2 \right] \nonumber \\
    &\geq 4 \beta_t^2 d^2 \kappa^2 \mmd^2 \left(\frac{1}{n} \sum_{i=1}^n \delta_{\bm{x}_i^{(t)}}, \mu \right).
\end{align}
\end{assumption}
\Cref{asst:noise_scale} states that with appropriate control of noise injection level $\beta_t$, the noisy MMD particle descent satisfies a \emph{gradient dominance} condition in expectation (also known as the Polyak--Lojasiewicz inequality), a weaker condition than convexity to ensure fast global convergence~\citep{boyd2004convex}.
\textit{A posteriori} verification of \Cref{asst:noise_scale} is possible by drawing many realisations of the injected noise and estimate the left-hand side in \eqref{eq:noise_level_condition} with an empirical average. 
Since $\Phi$ is bounded from \Cref{asst:kernel}, this empirical estimate is expected to concentrate fast to its expectation.
In practice, at each iteration $t-1$, one can ensure that \Cref{asst:noise_scale} is satisfied by selecting $\beta_t$ from a pre-specified candidate set (e.g., $\left.\left\{t^{-0.5}, t^{-0.25}, t^{-0.1}\right\}\right)$ based on an empirical check of which choice meets the condition. 
This extra selection step, however, increases the per-iteration computational cost.

Similar assumptions to \Cref{asst:noise_scale} have been widely used in the convergence analysis of kernel-based gradient flows and other related non-convex optimisation; see Proposition 8 of \citet{arbel2019maximum}, Proposition 5 of \citet{glaser2021kale} and \citet{hazan2016graduated}. 
However, their assumptions on $\beta_t$ involve expectations with respect to \smash{$\{\bm{x}_i^{(t)}\}_{i=1}^n$} on both sides of \eqref{eq:noise_level_condition}, which means they need to be checked for \emph{any} possible trajectory. 
In contrast, our assumption only needs to be checked for the current realized trajectory. Denoting $a \vee b = \max\{a, b\}$, we have: 

\begin{thm}\label{thm:mmd_flow_main}
Suppose the kernel $k$ satisfies \Cref{asst:easy_assumption,asst:kernel}. Given initial particles $\{\bm{x}_i^{(1)}\}_{i=1}^n$, suppose the noisy MMD particle descent defined in \eqref{eq:mmd_flow} satisfies \Cref{asst:noise_scale} and the step size $\gamma$ satisfies $256 \gamma^2 d^2 \kappa^2 \leq 1$ for any $t \in \N^+$. 
Denote $\calS(t) = \mmd (\frac{1}{n} \sum_{i=1}^n \delta_{\bm{x}_i^{(t)}}, \mu)$ for any $t\in\N^+$. 

Then for any $T \in \N^+$, and for any $\delta > 0$, we have with probability at least $1 - \exp(-\delta)$ that 
\begin{talign}\label{eq:upper_bound}
    \calS(T) \leq{}& \exp\Big(-\frac{\kappa d^2}{2} \sum\limits_{t=1}^{T-1}\gamma \beta_t^2 \Big) \calS(1) +  \frac{\delta + \log T}{\sqrt{n}} \cdot 
    \\ &\hspace{-0.7cm} 
    \left( C \sum\limits_{t=1}^{T-1}
    \left((\gamma\beta_t)^{\frac{1}{2}} \vee n^{-\frac{1}{2}}\right) \cdot \exp \Big(-  \frac{\kappa d^2}{2} \sum\limits_{s=t+1}^{T-1} \gamma \beta_{s}^2 \Big) \right)
    \nonumber
\end{talign}
where $C>0$ is a universal constant independent of $n, (\beta_t)_{t=1}^T,T$.
\end{thm}
The proof of \Cref{thm:mmd_flow_main} can be found in \Cref{sec: mmdflow_proof}. 
The right-hand side of \eqref{eq:upper_bound} consists of two terms: an optimisation error and a finite-sample estimation error. The optimisation error captures the decay of the MMD from its initial value under the dynamics of noisy MMD particle descent. 
Its exponential decay rate is a direct consequence of the gradient dominance condition from \Cref{asst:noise_scale}, and this term vanishes when $\sum_{t=1}^{T-1} \beta_t^2 \to \infty$ as $T\to\infty$, recovering the convergence result in Proposition 8 of \citet{arbel2019maximum}.
The latter estimation error quantifies the statistical error incurred from using $n$ particles. The $1/\sqrt{n}$ scaling factor arises from a standard concentration inequality, as established in \Cref{lem:calG_concentration}. 
We highlight a trade-off between the optimisation and estimation errors: increasing the noise level $\beta_t$ accelerates the convergence (reducing optimisation error) but simultaneously increases the variance of the updates, leading to larger estimation error.

The exact non-asymptotic rate of convergence of the noisy MMD particle descent largely depends on the noise injection scheme $\beta_t$.  
\Cref{cor:mmd_flow_main}, which is proved in \Cref{sec: mmdflow_proof}, provides further insight:
\begin{cor}\label{cor:mmd_flow_main}
    In the setting of \Cref{thm:mmd_flow_main}, suppose that there exists $\beta_t \propto t^{-\alpha}$ for $0 \leq \alpha < 1/2$ such that \Cref{asst:noise_scale} holds.
    Define the constant \smash{$C^\prime = 0.5 \gamma \kappa^2 d^2$}. Then, for any \smash{$T \leq n^{\frac{1}{\alpha}}$}, 
\begin{talign*}
    \mmd(\hat{\nu}^{(T)}_n, \mu) = \calO_P \left( \exp\left( - C^\prime T^{1-2\alpha} \right) +  \frac{(\log T)^{2} T^{\alpha}}{\sqrt{n}} \right)
    \vspace{-5pt}
\end{talign*}
where $\hat{\nu}^{(T)}_n = \frac{1}{n}\sum_{i=1}^n \delta_{\bm{x}_i^{(T)}}$. 
\end{cor}
By choosing \smash{$T = \calO((\log n) ^{\frac{1}{1-2\alpha}})$}, we obtain the convergence rate $\mmd(\hat{\nu}^{(T)}_n, \mu)=\calO_P(n^{-\frac{1}{2}})$ up to some logarithm factors, which implies that the particles \smash{$\{\bm{x}_i^{(T)}\}_{i=1}^n$} satisfy the \ref{as:convergence} property in the $n, T \to \infty$ limit.
Furthermore, since $\lim_{t\to\infty}\beta_t=0$, the particles \smash{$\{\bm{x}_i^{(T)}\}_{i=1}^n$} obtained from the noisy MMD particle descent \eqref{eq:mmd_flow} also satisfy the \ref{as:stationary} property in the same limit. 
Therefore, \Cref{cor:mmd_flow_main} establishes that noisy MMD particle descent in \eqref{eq:mmd_flow} can \emph{indeed compute stationary MMD points}, as desired. 

The computational cost of stationary MMD points simulated via noisy MMD particle descent is $\calO(n^2T)$. Following the argument above, it suffices to take 
\smash{$T=\calO((\log n) ^{\frac{1}{1-2\alpha}})$} so the overall computational cost is $\calO(n^2)$ up to logarithm factors. 
Compared to quasi Monte Carlo (QMC), which is computationally cheaper, stationary MMD points offer greater generality, as they can be applied to any target distribution $\mu$ supported on any domain $\mathcal{X}$ that satisfies \Cref{asst:domain}.
Compared to existing approaches based on minimising the MMD or energy functionals, which share the same computational complexity as our stationary 
MMD points and often claim fast convergence rates~\citep{mak2018support, xu2022accurate}, we highlight a gap between these theoretical guarantees and the practical issue of their proposed algorithms being trapped in local minima or stationary points. 
Our \emph{stationary} MMD points are designed to bridge this gap.
One can deduce \smash{$o_P(n^{-1/2})$} convergence rates for the numerical integration error by combining the MMD convergence rate from \Cref{cor:mmd_flow_main} and the super-convergence result proved in \Cref{thm: main}.
\begin{rem}[Comparison with existing convergence results in \citet{arbel2019maximum}]
    Our \Cref{thm:mmd_flow_main} provides the \emph{first} convergence result for time-discretised finite-particle MMD gradient flow with smooth kernels. In contrast, Proposition 8 of \citet{arbel2019maximum} proves convergence for the time-discretized population MMD flow, while Theorem 9 of \citet{arbel2019maximum} proves convergence of the time-discretized finite-particle MMD flow to its population limit, under the condition $T\geq n \gamma$ fixed a fixed step size $\gamma$.
    However, gluing these existing results together gives $\mmd(\hat{\nu}^{(T)}_n, \mu) = \calO ( \exp( - C^\prime T^{1-2\alpha} ) + n^{-0.5} \exp(C'' T) )$ which does not converge when $T \geq n \gamma$. 
\end{rem}

\begin{rem}[Finite particle error bound]
Establishing convergence for finite-particle implementations of Wasserstein gradient flows is a challenging task: usually the finite-particle upper-bound grows exponentially with $T$; see for example \citet[Theorem 9]{arbel2019maximum}, \citet[Theorem 3]{shi2024finite}, and \citet[Proposition 10.1]{chen2024regularized}.
In contrast, in \Cref{cor:mmd_flow_main}, our upper-bound is polynomial and even logarithmic in $T$ when $\alpha = 0$.  
The key insight of our \Cref{thm:mmd_flow_main} and \Cref{cor:mmd_flow_main}, borrowed from \citet{balasubramanian2024improved,suzuki2023uniform}, is to work directly with the joint distribution of the particles and track the evolution of the squared MMD. 
\end{rem}

\begin{rem}[$\tilde{o}(n^{-1})$ integration error with good initialization]
Suppose we are given initial points $\{\bm{x}_i^{(1)}\}_{i=1}^n$ with $\calS(1) = \calO(n^{-1} (\log n)^b)$ for some constant $b$ (e.g via kernel thinning~\citep{dwivedi2021kernel,shetty2021distribution}), where $b$ depends on the tail of the distribution $\mu$.  
Running the finite-particle MMD descent scheme with $\gamma \cdot \beta_t = n^{-1}$ for $T = \mathcal{O}((\log n)^b)$ iterations yields the MMD convergence rate $\mmd(\hat{\nu}^{(T)}_n, \mu) = \calO(n^{-1}(\log n)^{b})$ (\Cref{cor:much_faster_rate}). Therefore, from \Cref{thm: main}, the single function integration rate is $o(n^{-1}(\log n)^{b})$. 
We posit the little-o improvement occurs in the log term $(\log n)^{b}$ rather than in $n^{-1}$. 
The role of the MMD gradient flow is thus a perturbation on the initial point set by enforcing a \ref{as:stationary} property, thereby enhancing numerical integration performance. 
\end{rem}

\begin{figure}[t] 
    \centering
    \includegraphics[width=0.75\linewidth]{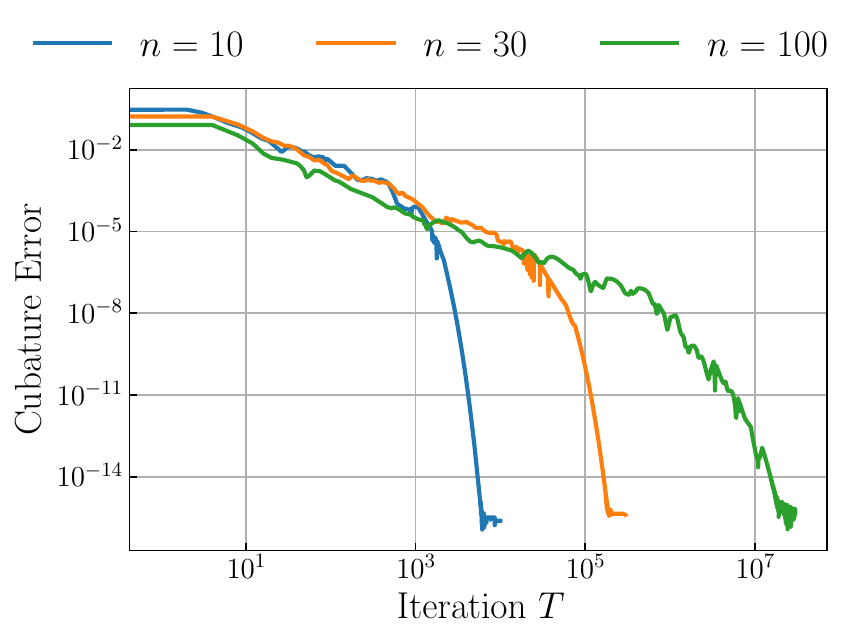}
    \vspace{-5pt}
    \caption{Exact integration of a function in $\calF_n$ with stationary MMD points computed by MMD gradient flow, verifying \Cref{prop: exactness}. 
    The convergence plateaus at around $10^{-15}$ due to numerical precision.}
    \label{fig:exact}
    \vspace{-10pt}
\end{figure}

\begin{figure*}
    \vspace{-0pt}
    \centering
    \includegraphics[width=0.95\linewidth,clip,trim = 0.2cm 0cm 0cm 0cm]{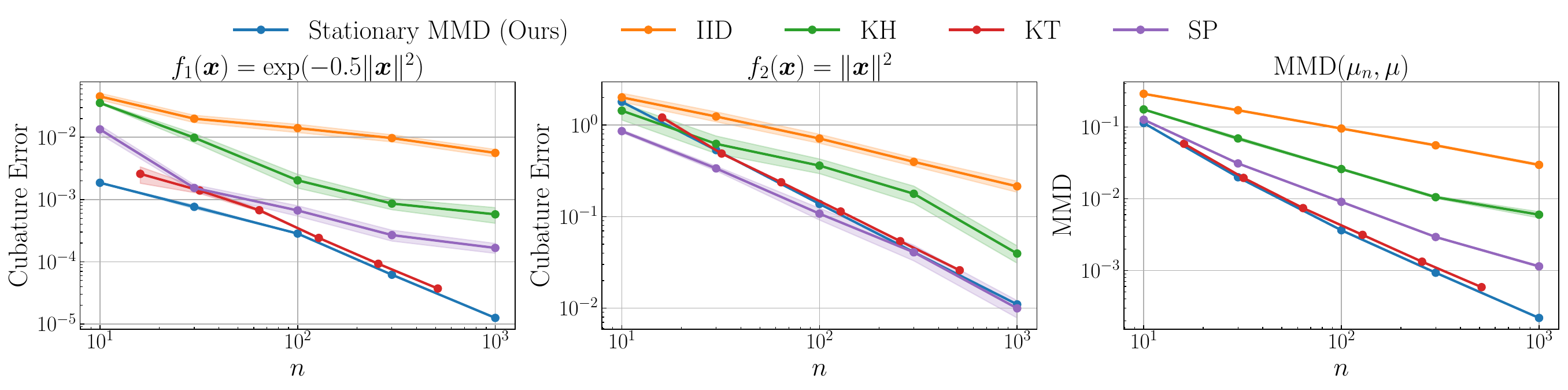}
    \begin{minipage}[t]{0.47\linewidth}
        \vspace{-20pt}
        \hspace{3pt}
        \includegraphics[width=\linewidth]{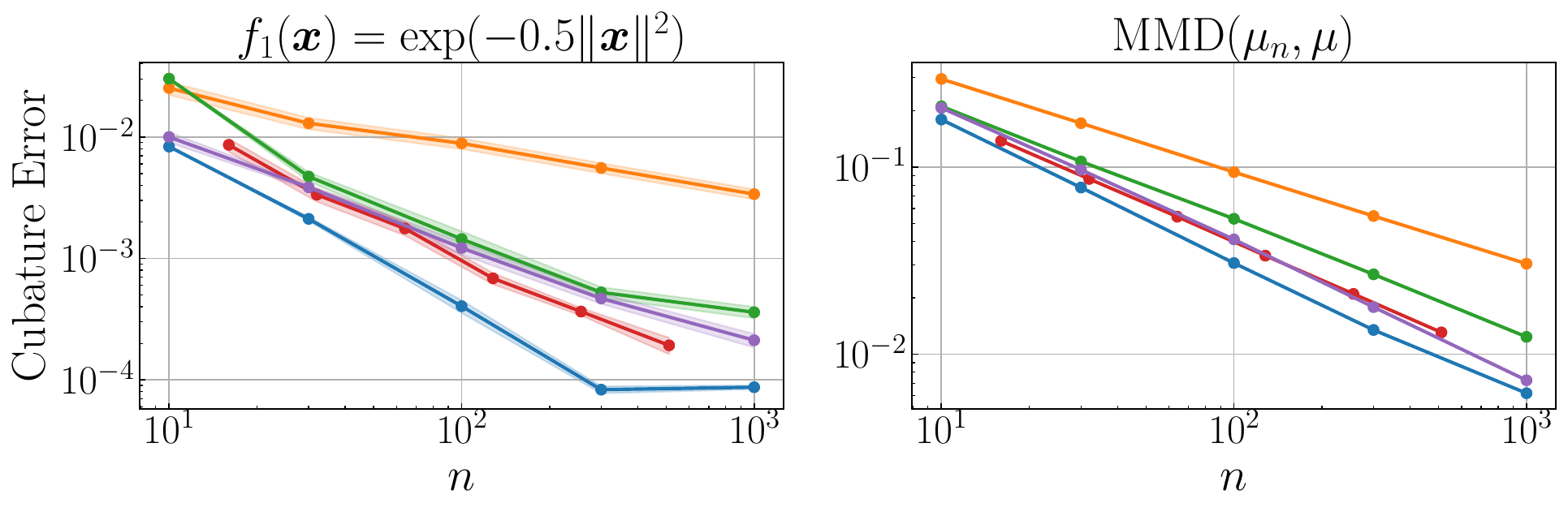}
    \end{minipage}
    \hfill
    \begin{minipage}[t]{0.47\linewidth}
        \centering
        \vspace{-20pt}
        \includegraphics[width=\linewidth]{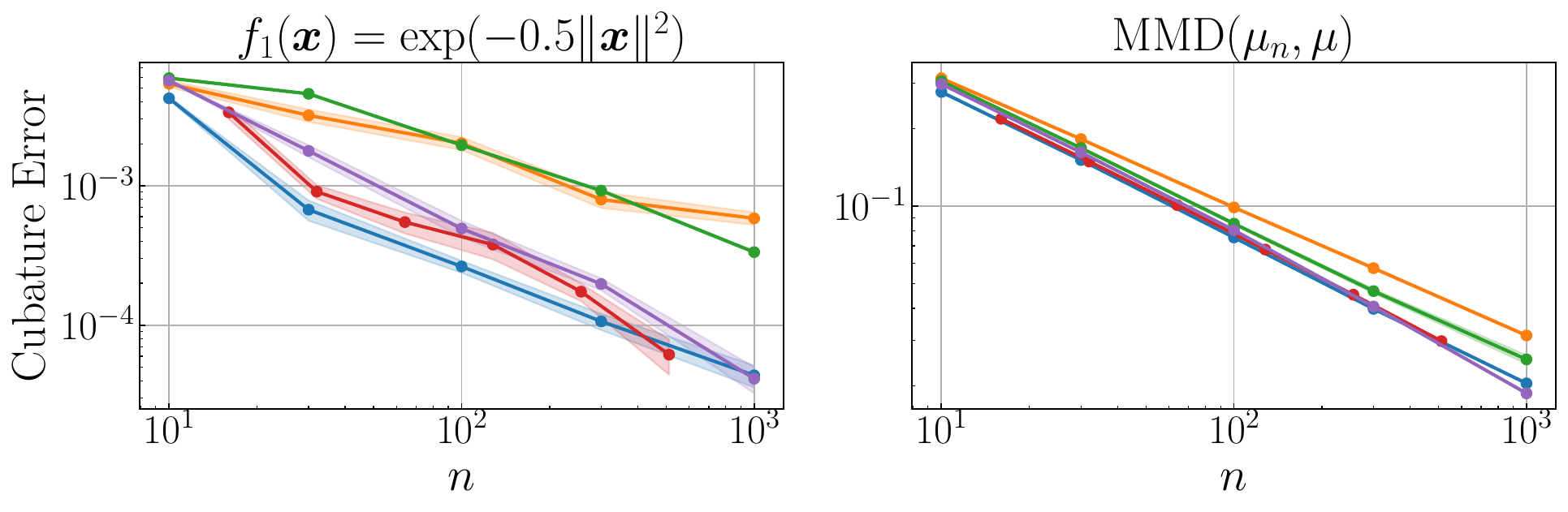}
    \end{minipage}
    \vspace{-5pt}
    \caption{Comparison of stationary MMD points with baseline methods. 
    \textbf{Top row:} mixture of Gaussians. \textbf{Bottom left:} \textit{House8L} dataset. 
    \textbf{Bottom right:} \textit{Elevators} dataset. 
    All results are averaged over 20 independent runs with different random seeds; shaded regions indicate 25\%--75\% quantiles. }
    \label{fig:experiments}
    \vspace{-10pt}
\end{figure*}

\section{Experiments}\label{sec:experiments}
This section numerically studies the numerical integration properties of stationary MMD points computed via the noisy MMD particle descent scheme in \eqref{eq:mmd_flow}. 
The code can be found at 
\url{https://github.com/hudsonchen/stationary_mmd}.

\subsection{Mixture of Gaussians}
First we consider a synthetic experiment with the target distribution $\mu = \frac{1}{10}\sum_{m=1}^{10} \calN(\bm{\mu}_m, \bm{\Sigma}_m)$,  a mixture of 10 two-dimensional ($d=2$) Gaussian distributions following the set-up in \citet{chen2010super,huszar2012optimally}.
This simple multimodal setting evaluates the ability of sampling methods to capture all modes of a distribution—something i.i.d.\ samples often fail to achieve, as they tend to assign an imbalanced number of samples across the mixture components.
Our stationary MMD points are simulated with noisy MMD particle descent in \eqref{eq:mmd_flow} until stationarity is reached. 
We use a fixed step size $\gamma = 1.0$ and noise injection level $\beta_t=t^{-1/2}$ such that \Cref{asst:noise_scale} is empirically satisfied (see \Cref{fig:beta}). 
All particles were initialised at $\bm{0}$ and a Gaussian kernel $k(\bm{x}, \bm{y}) = \exp(- 0.5 \|\bm{x} - \bm{y}\|^2)$ was used.  
The Gaussian kernel satisfies all \Cref{asst:easy_assumption,asst:kernel,asst:universal} required for the theoretical rate of stationary MMD points to hold.

\textbf{Stationarity:} 
We verify that stationary MMD points indeed satisfy the exact integration property for integrands in $\calF_n$.
To this end, we consider a function \smash{$f(\bm{x}) = \sum_{j=1}^n \sum_{\ell=1}^d \partial_{\ell} k(\bm{x}_j, \bm{x})$}.
This integrand has a $\mu$-integral that can be computed analytically; see \Cref{sec:additional_experiment} for the derivations.  
We see in \Cref{fig:exact} that integration error indeed vanishes when the noisy particle MMD descent scheme in \eqref{eq:mmd_flow} is simulated until stationarity. 
This empirically confirms the claim in \Cref{prop: exactness}. 
Although the noisy MMD particle descent scheme requires many iterations to reach convergence, we observe—consistent with \Cref{cor:mmd_flow_main}—that it takes fewer iterations to have good integration performance, as we demonstrate in the following results.

\begin{figure*}[t]
    \centering
    \includegraphics[width=1.0\linewidth]{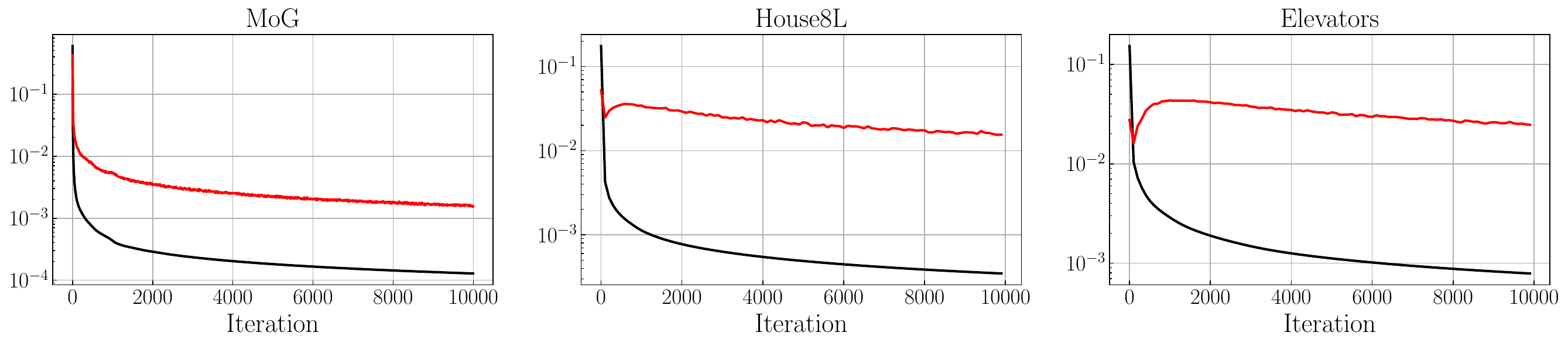}
    \vspace{-20pt}
    \caption{Empirical verification of \Cref{asst:noise_scale} on noise injection level $\beta_t$ used in all our experiments. We take $n=100$.  The red line represents the left hand side of \eqref{eq:noise_level_condition} and the black line represents the right hand side of \eqref{eq:noise_level_condition}. The figure confirms that the assumption is satisfied for most iterations of the algorithm, with the exception of the first few iterations.}
    \label{fig:beta}
    \vspace{-10pt}
\end{figure*}
\textbf{Integration Benchmark:}
Next, we consider numerical integration of the functions $f_1(\bm{x}) =\exp(-0.5 \|\bm{x}\|^2)$ and $f_2(\bm{x}) = \|\bm{x}\|^2$ against the distribution $\mu$. In this simple setting we have closed form expression for both integrals, which allows benchmarking against an array of baseline methods: $\mathrm{IID}$ represents identical independent samples, $\mathrm{QMC}$ represents quasi Monte Carlo samples, $\mathrm{KT}$ represents kernel thinning~\citep{dwivedi2021kernel}, $\mathrm{KH}$ represents kernel herding~\citep{chen2010super}, and $\mathrm{SP}$ represents support points~\citep{mak2018support}.  
The baselines $\mathrm{KT}$ here use \emph{centered} Gaussian kernels and oversampling parameters $\mathfrak{g}=6$. 
More implementation details of all baselines can be found in \Cref{sec:additional_experiment}. 
Additional experiments comparing $\mathrm{KT}$ between centered and uncentered kernels are presented in \Cref{sec:kt}. 
We also report $\mmd(\mu_n, \mu)$ which is the worse case integration error for all functions in the RKHS $\calH$. 

In \textbf{Top left} and \textbf{Top middle} of \Cref{fig:experiments}, we observe that stationary MMD points consistently outperform all baselines in terms of integration error for $f_1$, while achieving comparable performance to support points---and outperforming the remaining baselines---for $f_2$. This behavior is expected: $f_1 \in \mathcal{H}$ so stationary MMD points are particularly well-suited for accurate integration; in contrast, $f_2 \notin \mathcal{H}$ which explains the drop in relative performance.
\textcolor{black}{
It is worth noting that, while $\mathrm{KT}$ can achieve performance comparable to our stationary MMD points after careful hyperparameter tuning (with $\mathfrak{g}=6$), it is substantially more computationally expensive. This overhead arises from the need to select a representative subset of size $n$ from a much larger candidate pool of size $2^{\mathfrak{g}} n^2$. 
In particular, for $n=100$, computing stationary MMD points takes approximately $20$ seconds, whereas KT requires more than $3000$ seconds in this setting. 
}
The strong performance of support points for $f_2$ can be attributed to the fact that support points minimise the energy distance with $\rho(\boldsymbol{x}, \boldsymbol{y}) =-\| \boldsymbol{x}-\boldsymbol{y}\|$~\citep{sejdinovic13energy}, which is analogous to MMD with an unbounded kernel, and hence support points may be more suitable for an unbounded integrand. 
The empirical convergence rate can be estimated by linear regression in log-log space. 
The integration error with $f_1\in\calH$ exhibits a rate of $-1.39$, while the MMD error decays at a rate of $-1.35$, which confirms the super-convergence predicted by \Cref{thm: main}.

Finally, to further assess the performance of stationary MMD points in higher dimensions, we conduct experiments on Gaussian mixture models with $d=10$ and $d=50$. The results, summarized in \Cref{tab:dim_table}, demonstrate that our method consistently outperforms i.i.d. sampling across all sample sizes and dimensions, which verifies the potential effectiveness of stationary MMD point sets in high-dimensional settings. 
\footnote{\textcolor{black}{Compared with the empirical results of the first version of this paper~\citep{chen2025}, the baselines $\mathrm{KT}$ achieve much better performances in \Cref{fig:experiments} as a consequence of using \emph{centered} kernels and oversampling hyperparameter $\mathfrak{g}=6$. See \Cref{sec:kt} for details.}}

\subsection{OpenML Datasets}
For our final experiment we considered the \textit{House8L} ($d=8$) and \textit{Elevators} ($d=18$) data from OpenML~\citep{bischl2025openml}, where the target $\mu$ is the empirical measure of the dataset. 
In this context, approximating $\mu$ by a discrete measure is commonly referred to as coreset selection~\citep{bachem2017practical}. 
Unlike standard coreset methods, our stationary MMD point set—obtained by noisy MMD particle descent simulated for $T=10,000$ iterations—does not consist of a subset of the original dataset. 
Although the support of $\mu$ does not meet \Cref{asst:domain}, our proposed algorithm can still be applied, and it is of interest to evaluate its empirical performance in such settings. 
We used a Gaussian kernel $k(\bm{x}, \bm{y}) = \exp(- 0.5 \|\bm{x} - \bm{y}\|^2)$ and a fixed step size, $\gamma = 1.0$ for \textit{House8L} and $\gamma = 0.3$ for \textit{Elevators}. 
The noise injection level $\beta_t=t^{-1/2}$ empirically satisfied \Cref{asst:noise_scale} (see \Cref{fig:beta} in \Cref{sec:additional_experiment}).
All particles were initialised at $\bm{0}$, reflecting the fact that the datasets are normalised. 
In \textbf{Bottom} of \Cref{fig:experiments} we observe that stationary MMD points consistently outperform all baselines in terms of integration error for $f_1$ and MMD.
\textcolor{black}{
The closest baseline being $\mathrm{KT}$ which is computationally expensive. Moreover, since the total number of available samples is fixed, constructing a candidate pool of size 
$2^{\mathfrak{g}} n^2$ requires sampling with replacement.
}
We do not include $f_2$ here as $f_2 \notin \calH$. 
The improvement of stationary MMD points and all baselines are greater in \textit{House8L} than \textit{Elevators}, which we attribute to the lower dimensionality. 
To investigate the robustness of these results, \Cref{fig:ablation}  in \Cref{sec:additional_experiment} presents an ablation study comparing the performance of integration methods based instead on the Matérn-\smash{$\frac{3}{2}$} kernel.
For stationary MMD points, we observed that the Matérn-\smash{$\frac{3}{2}$} outperforms the Gaussian kernel on the \textit{House8L} dataset.
Practical tools exist for kernel choice, but analyzing their performance for stationary MMD points is beyond our scope.

\section{Discussion}

This paper resolves an important open issue in quantisation of distributions and related numerical integration routines; how to reconcile the theoretical performance of minimum MMD points with the reality that only \emph{stationary} MMD points can be computed.
Our analysis revealed the surprising result that the integration error using stationary MMD points vanishes \emph{faster} than the MMD -- so-called \emph{super-convergence} -- providing for the first time an explanation of the strong empirical performance that had been previously observed.
In addition, we have substantially strengthened the convergence analysis for noisy MMD particle descent, proving that stationary MMD points can be computed. 
Recent work has demonstrated the practical utility of MMD-based coreset constructions in large-scale deep learning systems~\citep{carrelllow}, and the stationary MMD point sets proposed here are directly applicable in such contexts---a promising avenue for future work.



\section*{Acknowledgments}
ZC was supported by the Engineering and Physical Sciences Research Council (EPSRC) EP/S021566/1. 
HK and CJO were supported by  EP/W019590/1. CJO was supported by a Philip Leverhulme Prize PLP-2023-004. 
 FXB was supported by EPSRC EP/Y022300/1.
 TK was supported by the Research Council of Finland grant 359183 (Flagship of Advanced Mathematics for Sensing, Imaging and Modelling).
The authors would like to thank Matthew Fisher for his helpful comment on the support assumption. FXB would like to thank Mingtian Zhang for encouraging ZC to take up a PhD at UCL.
The authors are grateful to Lester Mackey for insightful discussions on improving \texttt{KT} and \texttt{KT+} using centered kernels after the first version of this paper appeared on arXiv.

\bibliographystyle{abbrvnat}

\bibliography{main}


\newpage
\appendix
\onecolumn

\section*{Appendices}

\Cref{sec:more_related_work} describe further related work, including methods that employ non-uniform weights and foundational results on approximation complexity for numerical integration.
The proofs for all theoretical results presented in the main text are contained in \Cref{app: proofs}.
Finally, additional details required to reproduce our experiments are contained in \Cref{sec:additional_experiment}.
\section{Further Related Work}
\label{sec:more_related_work}

This appendix briefly summarises several strands of related work, mainly focusing on methods that employ non-uniform weights; an important distinction from the main text. 

\textbf{Kernel Quadrature/Cubature:}
There is a rich line of research in the machine learning literature on cubature rules based on integrating a minimal norm interpolant.
Indeed, if we let $f_n$ denote the element of $\calH$ that interpolates $f$ at each element of $\{\bm{x}_i\}_{i=1}^n$ and for which $\|f_n\|_{\calH}$ is minimal, then $\int f_n \; \mathrm{d}\mu = \sum_{i=1}^n w_i^\star f(\bm{x}_i)$ takes the form of a weighted cubature rule.
The performance of general weighted cubature rules $\sum_{i=1}^n w_i f(\bm{x}_i)$ can be analysed within the MMD framework by considering the associated weighted measure $\mu_n^{\bm{w}} = \sum_{i=1}^n w_i \delta_{\bm{x}_i}$.
For \emph{kernel cubature}\footnote{Kernel cubature is often called \emph{Bayesian quadrature} due to an equivalent derivation in which $k$ is interpreted as the covariance function of a Gaussian process.
This interpretation dates back at least to \citet{larkin1972gaussian} and is regularly rediscovered; see the survey in \citet{briol2019probabilistic}.}, the weights $\bm{w}^\star$ minimise $\mmd(\mu,\mu_n^{\bm{w}})$ over all $\bm{w} \in \mathbb{R}^n$.
In this case, using linearity of cubature rules,  
    \begin{align*}
        & \hspace{-20pt} \left| \sum_{i=1}^n w_i^\star f(\bm{x}_i) - \int f \; \mathrm{d}\mu \right|  \\
        & = \Bigg| \sum_{i=1}^n w_i^\star (f(\bm{x}_i) - f_n(\bm{x}_i)) - \int (f - f_n) \; \mathrm{d}\mu + \underbrace{ \sum_{i=1}^n w_i^\star f_n(\bm{x}_i) - \int f_n \; \mathrm{d}\mu }_{= 0} \Bigg| \\
        & \leq \|f - f_n\|_{\calH} \cdot \mmd(\mu , \mu_n^{\bm{w}^\star}) ,
    \end{align*}
where the final inequality is the same argument based on the reproducing property and Cauchy--Schwarz used to obtain \eqref{eq: HK} in the main text.
Under weak conditions $\|f - f_n\|_{\calH} \rightarrow 0$ as $n \rightarrow \infty$ \citep[Theorem 2.1.1]{traub1988information}, which shows that the kernel cubature error is $o(\mmd(\mu,\mu_n^{\bm{w}^\star}))$, the same super-convergence property enjoyed by our stationary MMD points.
Further, in this case one can often obtain the rate-optimal decay of MMD simply by taking $\bm{x}_i \sim \mu$ as independent \citep{ehler2019optimal,krieg2024random,chen2025nested}, or as independent samples from a more appropriate task-aware sampling distribution \citep{bach2017equivalence,chatalic2023efficient}.
It is even possible to compute $\bm{w}^\star$ when $\mu$ is only implicitly defined \citep{oates2017control} using Stein's method.
However, the use of optimal weights comes at a considerable cost in terms of computation\footnote{The general computational cost in exact arithmetic is $O(n^3)$.
However, for specific targets $\mu$, kernels $k$, and point sets $\{\bm{x}_i\}_{i=1}^n$, several tricks are available to reduce this cost \citep[see, e.g.,][]{karvonen2018fully,karvonen2019symmetry,jagadeeswaran2019fast}.} and stability\footnote{The stability of a weighted estimator is governed by the magnitude of the weights; if $|w_i|$ is large then small errors in computing $f(\bm{x}_i)$ are expounded.
Kernel cubature is often unstable; see \citet{karvonen2019positivity} for further detail.}; indeed, such estimators are rarely used in the numerical analysis community, where explicitly constructed cubatures are preferred.

\textbf{Determinantal Point Processes:}
A determinantal point process (DPP) is a joint probability distribution on $\{\bm{x}_i\}_{i=1}^n$ defined via a kernel.
For expositional simplicity here we discuss \emph{projection} DPPs, for which the starting point is a kernel of the form
\begin{align}
k(\bm{x},\bm{x}') = \sum_{i=1}^n \psi_i(\bm{x}) \psi_i(\bm{x}') \label{eq: DPP kernel}
\end{align}
where the $\psi_i$ are orthonormal functions in $L^2(\mu)$.
Then 
\begin{align}
(\bm{x}_1 , \dots , \bm{x}_n ) \mapsto \frac{1}{n!} \mathrm{det}[k(\bm{x}_i,\bm{x}_j)]_{i,j=1}^n \prod_{i=1}^n \mu(\bm{x}_i) \label{eq: DPP pdf}
\end{align}
is a probability density \citep[][Lemma 21]{hough2006determinantal} whose $n$ marginals are all equal to $\mu$, while the components are in general correlated in a manner that depends on the kernel.
Exact sampling from \eqref{eq: DPP pdf} is possible with access to the representation \eqref{eq: DPP kernel}, while outside this case methods like rejection sampling are required; see the discussion in \citet{gautier2019dppy}.
The integration error for a continuously differentiable integrand $f$ with unweighted DPP points is $O(n^{-(1 + 1/d)/2})$ when $\mu$ is the $d$-fold product of so-called \emph{Nevai class} distributions supported on $[-1,1]$ \citep[][Theorem 2]{bardenet2020monte}.
To overcome these strong restrictions on the form of $\mu$, DPPs can also be used along with importance sampling \citep[Theorem 4]{bardenet2020monte}.
One can further consider assigning kernel cubature weights to DPP points, and doing so recovers spectral rates \citep{belhadji2019kernel}.
The main limitation of DPPs relative to this work is that they are not able to produce uniformly weighted empirical approximations for general $\mu$ and efficient exact implementation only exists for Jacobi measures~\citep{gautier2019dppy,gautiertwoways}.

\textbf{Information-Based Complexity:}
The aim of \emph{information-based complexity} is to analyse the theoretical difficulty of various approximation problems, and in particular results are available for numerical integration based on $n$ arbitrarily chosen evaluation locations $\{\bm{x}_i\}_{i=1}^n$.
The $n$th \emph{minimal error} in this context is defined as
\begin{align*}
    e_k(n) \coloneqq \inf_{\bm{x}_1 , \dots , \bm{x}_n \in \calX } \inf_{\bm{w} \in \mathbb{R}^n} \mmd(\mu,\mu_n^{\bm{w}}) .
\end{align*}
For certain kernels $k$ upper and lower bounds on $e_k(n)$ have been derived.
For instance, \citet[][Proposition 5.3]{xu2022accurate} established weak sufficient conditions on $k$ for which $e_k(n) = O((\log n)^{(5d+1)/2}/n)$, meaning that there exist numerical integration routines that are superior to Monte Carlo in the sense of MMD. 
Under stronger assumptions, for example that $\calH$ is norm-equivalent to the \emph{Sobolev space} $H^s([0,1]^d)$, it is known that $e_k(n) \asymp n^{-s/d}$, so that the integration error for $f \in \calH$ can converge arbitrarily fast depending on the smoothness of the kernel \citep[][Section 1.3.12, Proposition 3]{novak1988deterministic}.
Further, for certain kernels (asymptotically) optimal nodes can be deduced \citep[e.g., for the Sobolev case, space-filling nodes can be sufficient;][]{briol2019probabilistic}.
These results are typically non-constructive and concern (optimally) weighted estimators.
It is therefore remarkable that identical rates (up to logarithmic factors) can sometimes be achieved with explicit estimators that are uniformly weighted, such as QMC methods.

\textbf{Other Works:}
Here we briefly mention other related works that do not fall neatly into the above categories.
For kernel cubature, Oettershagen \citet[][Section 5.1]{oettershagen2017construction} developed an efficient Newton method for optimising the locations of the nodes $\{\bm{x}_i\}_{i=1}^n$, but this used a strategy that applies only to dimension $d = 1$.
Computation of the optimal weights $\bm{w}^\star$ in kernel cubature is associated with a $O(n^3)$ cost, and this motivates consideration of alternative weights that are easier to compute.
Weights arising from Riemann sums and kernel density estimation were considered, respectively, in \citet{philippe2001riemann} and \citet{delyon2016integral} in conjunction with independent samples $\bm{x}_i \sim \mu$.
Weights arising from a nearest neighbour control variate method, in lieu of using a kernel, were considered in \citet{leluc2023speeding}.
In each case the integration error is provably $o(n^{-1/2})$ for a sufficiently smooth integrand.
Stratified sampling combined with finite difference approximation leads to a weighted estimator that is \emph{polynomially exact}\footnote{A cubature rule is \emph{polynomially exact} if it is exact on all polynomials up to a specified finite degree.  It is also possible to construct kernel cubature rules that are exact on a certain finite-dimensional linear subspace; see \citet{karvonen2018bayes,belhadji2021analysis}.} and was shown to be optimal in the case of integrands $f \in C^r([0,1]^d)$ in \citet{chopin2024higher}, the space of functions whose partial derivatives of order $\leq r$ exist and are continuous, 
The \emph{recombination} method of \citet{hayakawa2022positively} first generates a large number of independent samples from $\mu$ and then selects from these a sub-sample, based on which a polynomially exact weighted cubature rule is constructed, obtaining a integration error that can be related to the smoothness of the kernel. 
Unlike our stationary MMD points which are simulated via MMD Wasserstein gradient flow, both the weights and  nodes are optimised via MMD Wasserstein Fisher--Rao gradient flow in~\citet{belhadji2025weighted}. 
See Table 1 in \citet{chatalic2023efficient} for a more detailed comparison of several of the methods that we have mentioned.

\section{Proof of Theoretical Results}
\label{app: proofs}

This appendix contains proofs for all theoretical results stated in the main text.
First, in \Cref{subsec: prelim}, we present some preliminary results which will be useful.
The statement and proof of \Cref{lem:lambda_n_dense} are contained in \Cref{app: Proof lem:lambda_n_dense}.
The statement and proof of \Cref{lem:H_Lambda_dense_H} are contained in \Cref{app: Proof lem:H_Lambda_dense_H}.
Finally the proof of \Cref{thm:mmd_flow_main} is contained in \Cref{sec: mmdflow_proof}.

\subsection{Preliminary Results}
\label{subsec: prelim}

In this appendix several preliminary results are presented.
The first result proves that one is allowed to interchange integral and derivative in \eqref{eq:derivative_mmd} (\Cref{sec:integral_derivative}).
The second result concerns the RKHS spanned by $k(\bm{x}, \cdot)$ for $\bm{x} \in \mathcal{X}$, and how this relates to the more familiar \emph{restriction} of an RKHS to a sub-domain $\mathcal{X}$ (\Cref{app :support}), clarifying a technical aspect of our analysis.
The third result is a general result stating conditions under which convergence $\mmd(\mu, \mu_n) \rightarrow 0$ implies denseness of the support points of the empirical measure $\mu_n$ in $\mathcal{X}$ (\Cref{app: density}).

\subsubsection{Interchange Integrals and Derivatives}

We begin with the interchange of differentiation and integration in~\eqref{eq:derivative_mmd}.

\label{sec:integral_derivative}
\begin{lem}\label{lem:integral_derivative}
    Suppose that the kernel $k$ satisfies \Cref{asst:easy_assumption}. Then, $\nabla_{\bm{x}} \int k(\bm{x}, \bm{y}) \, \mathrm{d}\mu(\bm{y}) = \int \nabla_1 k(\bm{x}, \bm{y}) \; \mathrm{d}\mu(\bm{y})$ for any $\bm{x}\in\R^d$.
\end{lem}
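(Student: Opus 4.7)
My plan is to identify the function $m(\bm{x}) \coloneqq \int k(\bm{x},\bm{y})\,\mathrm{d}\mu(\bm{y})$ with (the pointwise evaluation of) the kernel mean embedding of $\mu$ in $\mathcal{H}$, and then apply the reproducing property for partial derivatives component-wise. The argument then reduces to two ingredients: membership $m \in \mathcal{H}$ on one hand, and a first-order reproducing identity on the other.

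For the first ingredient, note that \Cref{asst:easy_assumption} implies
\[
\int\!\!\int k(\bm{x},\bm{y})\,\mathrm{d}\mu(\bm{x})\,\mathrm{d}\mu(\bm{y}) \;\leq\; \sup_{\bm{x}\in\R^d}\int k(\bm{x},\bm{y})\,\mathrm{d}\mu(\bm{y}) \;<\;\infty.
\]
Combined with positive semi-definiteness of $k$, a standard Riesz-representation argument yields that the integration functional $f \mapsto \int f\,\mathrm{d}\mu$ is bounded on $\mathcal{H}$, with representer $m \in \mathcal{H}$ satisfying $\int f\,\mathrm{d}\mu = \langle f, m\rangle_{\mathcal{H}}$ for every $f \in \mathcal{H}$. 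Taking $f = k(\bm{x},\cdot)$ and using the reproducing property confirms that this $m$ coincides with the function $\bm{x}\mapsto \int k(\bm{x},\bm{y})\,\mathrm{d}\mu(\bm{y})$, establishing the desired membership.

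For the second ingredient, I would invoke Corollary 4.36 of \cite{steinwart2008support} (as already cited in the main text): continuity of $(\bm{x},\bm{y})\mapsto \partial_\ell \partial_{\ell+d}k(\bm{x},\bm{y})$ in \Cref{asst:easy_assumption} gives $\partial_\ell k(\bm{x},\cdot) \in \mathcal{H}$ together with the differential reproducing identity $\partial_\ell h(\bm{x}) = \langle h,\, \partial_\ell k(\bm{x},\cdot)\rangle_{\mathcal{H}}$ for any $h \in \mathcal{H}$ and any $\bm{x} \in \R^d$. Applying this with $h = m$ and then invoking the Riesz-representation property once more, now with $f = \partial_\ell k(\bm{x},\cdot) \in \mathcal{H}$, gives
\[
\partial_\ell m(\bm{x}) \;=\; \langle m,\, \partial_\ell k(\bm{x},\cdot)\rangle_{\mathcal{H}} \;=\; \int \partial_\ell k(\bm{x},\bm{y})\,\mathrm{d}\mu(\bm{y}).
\]
Stacking this identity across $\ell = 1,\ldots,d$ yields the claimed vector equation.

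I expect the main obstacle to be the first ingredient: the assumption used here is slightly weaker than the more familiar condition $\int \sqrt{k(\bm{y},\bm{y})}\,\mathrm{d}\mu(\bm{y})<\infty$ typically invoked for kernel mean embeddings to exist as Bochner integrals, so some care is required when invoking Riesz representation. An alternative classical route---bounding $|\partial_\ell k(\bm{x},\bm{y})| \leq \sqrt{\partial_\ell\partial_{\ell+d}k(\bm{x},\bm{x})}\sqrt{k(\bm{y},\bm{y})}$ via Cauchy--Schwarz in $\mathcal{H}$ and then applying dominated convergence to the difference quotient---runs into the analogous technicality of verifying that $\sqrt{k(\bm{y},\bm{y})}$ is $\mu$-integrable, so the RKHS route is the cleaner choice.
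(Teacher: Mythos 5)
Your approach is genuinely different from the paper's. The paper treats this as a classical interchange-of-derivative-and-integral problem: it verifies the hypotheses of Theorem~A.7 of \citet{dudley2014uniform}, using the assumed uniform bound $\sup_{\bm{x}}\int \partial_\ell k(\bm{x},\bm{y})\,\mathrm{d}\mu(\bm{y}) < \infty$ together with a mean-value-theorem control of the difference quotient as the dominating quantity. That argument is purely measure-theoretic and never requires the kernel mean embedding $m$ to be an element of~$\mathcal{H}$.

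Your RKHS route hinges on precisely the step you yourself flag as delicate, and the delicacy is a genuine gap rather than a technicality one can wave away. The bound $\iint k\,\mathrm{d}\mu\,\mathrm{d}\mu \leq \sup_{\bm{x}}\int k(\bm{x},\bm{y})\,\mathrm{d}\mu(\bm{y}) < \infty$ does not by itself make the functional $f \mapsto \int f\,\mathrm{d}\mu$ well-defined, let alone bounded, on $\mathcal{H}$: you first need every $f \in \mathcal{H}$ to be $\mu$-integrable, and the standard way to secure that is $|f(\bm{y})| \leq \|f\|_{\mathcal{H}}\sqrt{k(\bm{y},\bm{y})}$ combined with $\int\sqrt{k(\bm{y},\bm{y})}\,\mathrm{d}\mu(\bm{y}) < \infty$, which is \emph{not} a consequence of \Cref{asst:easy_assumption}. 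Moreover, you later use the Pettis-type identity $\langle \partial_\ell k(\bm{x},\cdot),\, m\rangle_{\mathcal{H}} = \int \partial_\ell k(\bm{x},\bm{y})\,\mathrm{d}\mu(\bm{y})$; this is a second thing to prove, distinct from mere membership $m \in \mathcal{H}$, and again needs more than finiteness of the double integral. Your closing remark that the alternative classical route ``runs into the analogous technicality'' misreads what the paper actually does: its proof never invokes the Cauchy--Schwarz bound $|\partial_\ell k(\bm{x},\bm{y})| \leq \sqrt{\partial_\ell\partial_{\ell+d}k(\bm{x},\bm{x})}\sqrt{k(\bm{y},\bm{y})}$, and so pays no $\sqrt{k}$-integrability tax --- it uses the assumed bound on $\sup_{\bm{x}}\int \partial_\ell k(\bm{x},\bm{y})\,\mathrm{d}\mu(\bm{y})$ directly. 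To close the gap you would need either to strengthen the hypothesis to $\int\sqrt{k(\bm{y},\bm{y})}\,\mathrm{d}\mu(\bm{y}) < \infty$ (which the paper deliberately avoids, as it wants to cover unbounded kernels such as the linear kernel with mean-zero $\mu$), or to supply a separate argument showing that \Cref{asst:easy_assumption} alone forces both $m \in \mathcal{H}$ and the Riesz identity $\int f\,\mathrm{d}\mu = \langle f, m\rangle_{\mathcal{H}}$.
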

\begin{proof}
    The proof follows from Theorem A.7 in~\citet{dudley2014uniform} by checking that (A.2a), (A.2b) and (A.8) in there are satisfied. 
    Since $\sup_{\bm{x}\in\R^d} \int k(\bm{x},\bm{y}) \; \mathrm{d}\mu(\bm{y}) < \infty$, (A.2a) is satisfied. Since for any $\ell \in \{1, \ldots, d\}$, $\sup_{\bm{x}\in\R^d} \int \partial_\ell k(\bm{x}, \bm{y}) \; \mathrm{d}\mu(\bm{y}) < \infty$, (A.2b) is satisfied.
    The derivative $(\bm{x}, \bm{y}) \mapsto \partial_{\ell}\partial_{\ell+d}k(\bm{x}, \bm{y})$ exists everywhere, so $(\bm{x}, \bm{y}) \mapsto k(\bm{x}, \bm{y})$ is continuous and consequently $k(\bm{x}, \cdot)$ is continuous~\citep[Lemma 4.29]{steinwart2008support}. 
    Denote $\bm{e}_\ell = [0, \ldots, 1, \ldots, 0]^\top$ as the unit vector whose $\ell$-th element is $1$.
    Hence, from the mean value theorem~\citep{rudin1976principles}, we know that there exists $\tilde{\bm{x}}\in\R^d$ such that
    \begin{align*}
        \frac{\Delta_{h, \ell} k(\bm{y}, \bm{x})}{h} \coloneqq \frac{1}{h} \left( k\left(\bm{y}, \bm{x} + h \bm{e}_\ell \right)- k \left(\bm{y}, \bm{x} \right) \right) = \partial_{\ell+d} k\left(\bm{y}, \tilde{\bm{x}} \right) .
    \end{align*}
    Since $\bm{y} \mapsto h^{-1} \sup_{\bm{x}\in\R^d} \Delta_{h, \ell} k(\bm{y}, \bm{x})$ is $\mu$-integrable, (A.8) is satisfied. 
\end{proof}

\subsubsection{On the Support $\mathcal{X}$ and the RKHS subset $\calH_\calX$}
\label{app :support}
For the super-convergence result (\Cref{thm: main}), we restrict the integrand to functions in $\calH_\calX$ (up to an additive constant), which is defined as the closure $\overline{\mathrm{span}}\{k(\bm{x}, \cdot): \bm{x}\in \calX\}$. 
This appendix provides a rationale for this consideration (see \Cref{rem:trivial}). 
To this end, 
define the restriction map $V \colon (\mathbb{R}^d \to \mathbb{R}) \to (\calX \to \mathbb{R})$ by $Vf(x) = f(x)$ for any  $x \in\calX$ and $f \colon \mathbb{R}^d \to \mathbb{R}$. 
We also introduce an RKHS $\calH_{\mid \calX}$ of functions on $\calX$, which is defined by the reproducing kernel $k$ restricted to $\calX \times \calX$. 
\Cref{lem:H_X_H_calX} expresses $\mathcal{H}_\calX$ in terms of the restriction map $V$.

\begin{lem}[Relation between $\calH_\calX$ and $V$]\label{lem:H_X_H_calX}
    Let $k\colon \mathbb{R}^d \times \mathbb{R}^d \to \mathbb{R}$ be a positive semi-definite kernel and $\calH$ be the corresponding RKHS. 
    Let $\calX \subseteq \R^d$ and $V$ be the associated restriction map on $\calH$. 
    We have $\calH_\calX = \calN(V)^{\perp},$ 
    where $\calN(V)$ denotes the null space of $V$. 
    Moreover, $V$ is an isometry between $\calH_{\calX}$ and $\calH_{\mid \calX}$. 
\end{lem}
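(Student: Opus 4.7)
The plan is to deduce both claims from the reproducing property applied to the generating set $\{k(\bm{x}, \cdot) : \bm{x} \in \calX\}$ of $\calH_\calX$.

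First I would establish the orthogonality $\calN(V) = \calH_\calX^\perp$. For any $f \in \calH$ the reproducing property gives $f(\bm{x}) = \langle f, k(\bm{x}, \cdot)\rangle_{\calH}$ for every $\bm{x}\in\R^d$, so $f$ belongs to $\calN(V)$ exactly when $\langle f, k(\bm{x}, \cdot)\rangle_{\calH} = 0$ for all $\bm{x} \in \calX$. By linearity and continuity of the inner product, this is equivalent to $f$ being orthogonal to the closed subspace $\overline{\mathrm{span}}\{k(\bm{x}, \cdot) : \bm{x} \in \calX\} = \calH_\calX$. Taking orthogonal complements of $\calN(V) = \calH_\calX^\perp$ and using that $\calH_\calX$ is closed by construction then yields $\calN(V)^\perp = \calH_\calX$, which is the first assertion.

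For the isometry statement I would check norm preservation on a dense subset and then extend by continuity. The key identity is that for $\bm{x}, \bm{y} \in \calX$,
\begin{align*}
    \langle k(\bm{x}, \cdot), k(\bm{y}, \cdot)\rangle_{\calH} = k(\bm{x}, \bm{y}) = \langle V k(\bm{x}, \cdot), V k(\bm{y}, \cdot)\rangle_{\calH_{\mid \calX}},
\end{align*}
so the restriction $V$ preserves inner products on $\mathrm{span}\{k(\bm{x}, \cdot) : \bm{x} \in \calX\}$, which is dense in $\calH_\calX$ by definition. A Cauchy-sequence argument in $\calH_{\mid \calX}$ then shows that $V|_{\calH_\calX}$ is well-defined into $\calH_{\mid \calX}$ and extends to a linear isometry there; since its image already contains $\mathrm{span}\{k(\bm{x}, \cdot)|_\calX : \bm{x} \in \calX\}$, which is dense in $\calH_{\mid \calX}$ by the Aronszajn--Moore construction of the restricted RKHS, the fact that isometric images are closed forces surjectivity onto $\calH_{\mid \calX}$.

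The step that requires the most care is identifying the codomain: I will need to invoke the Aronszajn--Moore characterisation of $\calH_{\mid \calX}$ both to confirm that $\mathrm{span}\{k(\bm{x}, \cdot)|_\calX : \bm{x} \in \calX\}$ is genuinely dense in $\calH_{\mid \calX}$ and to guarantee that restrictions of functions in $\calH_\calX$ lie in $\calH_{\mid \calX}$ rather than in some larger space of functions on $\calX$. Once these standard RKHS facts are in hand, the remainder of the argument is an essentially mechanical application of the reproducing property, and both claims of the lemma follow at once.
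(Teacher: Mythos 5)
Your proof of the first claim is essentially the same as the paper's: both use the reproducing property to identify $\calN(V)$ with $\calH_\calX^\perp$ and then take orthogonal complements (the paper spells out the two inclusions separately, you establish the equality in one pass — same content). Where you genuinely diverge is on the isometry claim: the paper simply cites Corollary 5.8 of Paulsen–Raghupathi and does not spell this part out, whereas you give a direct proof by verifying inner-product preservation on the dense span of kernel sections, extending by continuity, and deducing surjectivity from closedness of isometric images plus density of $\mathrm{span}\{k(\bm{x},\cdot)|_\calX : \bm{x}\in\calX\}$ in $\calH_{\mid\calX}$. Your direct argument is correct — it is essentially a reconstruction of the cited result — and it makes the lemma self-contained. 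The one spot that merits the caution you flag is identifying the extended map with the pointwise restriction $V$; this works because norm convergence implies pointwise convergence both in $\calH$ (at points of $\calX$) and in $\calH_{\mid\calX}$, so the $\calH_{\mid\calX}$-limit of $Vf_n$ and the restriction of $\lim f_n$ are the same function on $\calX$.
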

\begin{proof}
Both claims are consequences of \citet[Corollary 5.8 and the surrounding discussion]{Paulsen_Raghupathi_2016}. 
For completeness, we provide a proof for the first claim. 
For any $f \in \calH_\calX^\perp$, we have $(Vf)(\bm{x}) = f(\bm{x}) = \langle f, k(\bm{x}, \cdot) \rangle_\calH = 0$ for all $\bm{x}\in\calX$, so that $Vf \equiv 0$ and thus $\mathcal{H}_{\mathcal{X}}^{\perp} \subseteq \calN(V)$, which implies $\calN(V)^\perp \subseteq \calH_\calX$. On the other hand, let $h=\sum_{i=1}^n \alpha_i k(\bm{x}_i^{(t)}, \cdot) \in \operatorname{span}\{k(\bm{x}, \cdot): \bm{x} \in \mathcal{X}\}$. For any $g \in \calN(V)$, we have $\langle h, g \rangle_\calH = \sum_{i=1}^n \alpha_i g(\bm{x}_i^{(t)}) = 0$, so that $ h \in \calN(V)^\perp$. Since $\calH_\calX$ is the closure of such finite linear combinations and $\calN(V)^\perp$ is also closed, we have $\calH_\calX \subseteq \calN(V)^\perp$. 
\end{proof}

\begin{rem}[On the non-constant condition on $f\in \calH_\calX$]\label{rem:trivial}
\Cref{prop:f_seminorm} and the subsequent \Cref{thm: main} only concern functions in $\mathcal{H}_\calX$ that are non-constant over $\calX$. 
This restriction on the integrand only excludes functions with trivial integrals, which can be seen as follows: 
(a) since $\mathcal{H}_\calX = \mathcal{N}(V)^\perp$ by \Cref{lem:H_X_H_calX}, for any $f \in \calH \setminus \calH_\calX = \calN(V)\cup\{0\}$, we have $\int f \dd \mu = 0$ since $f(\bm{x}) = 0$ for all $\bm{x} \in \calX$; 
(b) if $f \in \calH_\calX$ is constant on $\calX$, i.e., $f \equiv c$ on $\calX$ for some $c \in \mathbb{R}$, we have $\int f \dd \mu = c$. 
The existence of such a constant-on-$\calX$ function is equivalent to $\calH_{\mid \calX}$ having an everywhere constant function. 
Certain RKHSs do not have such constant functions; for example, the Gaussian RKHS does not have a constant function if $\calX$ has a non-empty interior~\citep[Corollary 4.44]{steinwart2008support}. 
For such an RKHS, the non-constancy requirement is unnecessary. 
\end{rem}

\subsubsection{Denseness of Stationary MMD Points}
\label{app: density}

\begin{lem}[Stationary MMD points are dense in $\calX$] \label{lem:dense_samples}
Let $\mu \in \mathcal{P}(\mathbb{R}^d)$ and $\calX$ be its support. 
Suppose the kernel $k$ satisfies \Cref{asst:universal}.
Let $(\mu_n)_{n \in \mathbb{N}}$ be a sequence of empirical measures with $\mu_n = n^{-1} \sum_{i=1}^n \delta_{\bm{x}_i^{(t)}}$ for some $\bm{x}_i^{(t)} \in \mathbb{R}^d$ and suppose that $\mmd(\mu, \mu_n) \rightarrow 0$ as $n \rightarrow \infty$.
Then for each $\bm{x} \in \calX$ and for any $\varepsilon > 0$, there exists $N_{\varepsilon, \bm{x}}$ such that $
\min_{i=1, \ldots, n} \|\bm{x} - \bm{x}_i^{(t)}\| \leq \varepsilon$ for every $n > N_{\varepsilon, \bm{x}}$.
\end{lem}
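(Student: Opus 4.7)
My strategy is to argue by contradiction: if some $\bm{x} \in \calX$ were not approached arbitrarily closely by the supports of the $\mu_n$, I would exhibit a fixed test function $f \in \calH$ whose cubature error is bounded below by a strictly positive constant along a subsequence, contradicting $\mmd(\mu, \mu_n) \to 0$ via the cubature bound \eqref{eq: HK}. The test function $f$ will be an RKHS approximant of a continuous bump function supported in a small ball around the offending point.

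Formally, suppose for contradiction that there exist $\bm{x} \in \calX$, $\varepsilon > 0$, and a subsequence $(n_k)_{k \in \mathbb{N}}$ along which $\min_{i=1,\ldots,n_k}\|\bm{x} - \bm{x}_i\| > \varepsilon$. Since $\bm{x} \in \mathrm{supp}(\mu)$, the constant $c_0 \defines \mu(B_{\varepsilon/2}(\bm{x}))$ is strictly positive. Urysohn's lemma then produces a continuous, compactly supported function $g \colon \mathbb{R}^d \to [0,1]$ with $g \equiv 1$ on $B_{\varepsilon/2}(\bm{x})$ and $g \equiv 0$ outside $B_\varepsilon(\bm{x})$, so that $\int g \dd\mu \geq c_0$ while $\int g \dd \mu_{n_k} = 0$ for every $k$ (no point of the $n_k$-th set lies in $\mathrm{supp}(g)$). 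The $C_0$-universality in \Cref{asst:universal}, applied to $g \in C_0(\mathbb{R}^d)$, then furnishes $f \in \calH$ satisfying $\|f - g\|_\infty \leq c_0/4$; since $\mu$ and $\mu_{n_k}$ are probability measures, this sup-norm bound transfers directly to the integrals, yielding $\int f \dd\mu \geq 3c_0/4$ and $|\int f \dd \mu_{n_k}| \leq c_0/4$, and hence $|\int f \dd(\mu - \mu_{n_k})| \geq c_0/2$. The cubature bound \eqref{eq: HK} simultaneously controls the same quantity by $\|f\|_\calH \cdot \mmd(\mu, \mu_{n_k})$, which tends to zero by hypothesis along the subsequence, producing the desired contradiction.

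The main potential obstacle is essentially bookkeeping: one needs to verify that $g$ lies in $C_0(\mathbb{R}^d)$, which is immediate since $C_c(\mathbb{R}^d) \subset C_0(\mathbb{R}^d)$, and that $C_0$-universality supplies sup-norm approximants, which is its definition. Notably, no kernel boundedness assumption is invoked: the argument relies solely on the worst-case RKHS cubature inequality \eqref{eq: HK} applied to a single fixed $f \in \calH$, rather than on a ``MMD metrises weak convergence'' result which would additionally require $k$ to be bounded and continuous.
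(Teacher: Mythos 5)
Your proof is correct and takes essentially the same approach as the paper's: both use the positivity of $\mu$ on a ball around $\bm{x}$ (definition of support), a Urysohn bump function, a sup-norm RKHS approximant furnished by $C_0$-universality, and the worst-case cubature bound \eqref{eq: HK} to conclude. The paper argues directly that $\mu_n(B_{2\varepsilon}(\bm{x}))$ is eventually positive, whereas you argue by contradiction along a subsequence; the two are logically interchangeable and rely on the same estimates.
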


\begin{proof}[Proof of \Cref{lem:dense_samples}]
Let $\bm{x} \in \calX$ and $\varepsilon > 0$.
Let $B_{2 \varepsilon}(\bm{x}) \coloneqq \{\bm{y} \in \mathbb{R}^d : \|\bm{y} - \bm{x}\| < 2 \varepsilon \}$ denote a ball of radius $2\varepsilon$ about $\bm{x}$.
The aim in this proof is to show that $\{\bm{x}_i^{(t)}\}_{i=1}^n  \cap B_{2\varepsilon}(\bm{x})  \neq \emptyset$ for $n$ large enough.

By definition of the support, $\mu(B_\varepsilon(\bm{x})) > 0$.
Let $f \in C_0(\mathbb{R}^d)$ be such that $f(\bm{x}) \in [0,1]$ for all~$\bm{x}$, $f(\bm{x}) = 1$ for $\bm{x} \in B_\varepsilon(\bm{x})$ and $f(\bm{x}) = 0$ for $\bm{x} \in \mathbb{R}^d \setminus B_{2\varepsilon}(\bm{x})$; see Lemma~2.22 in \citep{Lee_2012} for the existence of such a function. 
By the $C_0$-universality of the RKHS $\mathcal{H}$, we have an element $f_\ell \in \mathcal{H}$ 
such that $\|f - f_\ell\|_\infty \leq \ell$, where we choose 
$$
\ell = \frac{\mu(B_\varepsilon(\bm{x})) }{6}. 
$$
Let $N_{\varepsilon,\bm{x}}$ be large enough that the inequality 
$$
\mmd(\mu,\mu_n) < \frac{\mu(B_\varepsilon(\bm{x}))}{6 \|f_\ell\|_{\mathcal{H}}}. 
$$
holds for all $n \geq N_{\varepsilon,\bm{x}}$.
Then for all such $n \geq N_{\varepsilon,\bm{x}}$ we have
\begin{align*}
    \left| \int f \; \mathrm{d}(\mu_n - \mu) \right| & \leq \int | f - f_\ell | \; \mathrm{d}(\mu_n + \mu) + \left| \int f_\ell \; \mathrm{d}(\mu_n - \mu) \right| \\
    & \leq 2 \ell + \|f_\ell\|_{\mathcal{H}} \mmd(\mu, \mu_n) \\
    & \leq \frac{2}{6} \mu(B_\varepsilon(\bm{x})) + \frac{1}{6} \mu(B_\varepsilon(\bm{x})) \\
    &= \frac{1}{2} \mu(B_\varepsilon(\bm{x})). 
\end{align*}
Let $1_{B_{2\varepsilon }(\bm{x}) }(\bm{y})$ denote the indicator function of the event $\bm{y} \in B_{2 \varepsilon}(\bm{x})$.
By construction, $f \leq 1_{B_{2 \varepsilon}}$ on $\mathbb{R}^d$.
From the reverse triangle inequality, the measure $\mu_n$ satisfies
\begin{align*}
    \mu_n(B_{2\varepsilon}(\bm{x})) = \int 1_{B_{2\varepsilon}(\bm{x})}(\bm{y}) \; \mathrm{d}\mu_n(\bm{y}) & \geq \int f \; \mathrm{d}\mu_n \\
    & \geq \left| \underbrace{ \int f \; \mathrm{d} \mu }_{\geq \mu(B_\varepsilon(\bm{x}))} - \underbrace{ \left| \int f \; \mathrm{d}(\mu_n - \mu) \right| }_{\leq \frac{1}{2} \mu(B_\varepsilon(\bm{x})) }  \right| \geq \frac{1}{2} \mu(B_\varepsilon(\bm{x})) > 0 ,
\end{align*}
meaning that $\{\bm{x}_i^{(t)}\}_{i=1}^n$ must contain at least one point in $B_{2 \varepsilon}(\bm{x})$ for all $n \geq N_{\varepsilon,\bm{x}}$.
\end{proof}

\subsection{Statement and Proof of \Cref{lem:H_Lambda_dense_H}}
\label{app: Proof lem:H_Lambda_dense_H}

\Cref{lem:H_Lambda_dense_H} is an important ingredient in the proof of \Cref{prop:f_seminorm}.
Recall $\derivH = \mathrm{span} \{\partial_\ell k(\bm{x}, \cdot): \bm{x} \in \calX, 1 \leq \ell \leq d\}$. 

\begin{lem}[$\derivH$ is dense in $\calH_{\calX}$] \label{lem:H_Lambda_dense_H}
Suppose $\calX$ satisfies \Cref{asst:domain}.
Suppose the kernel $k$ satisfies \Cref{asst:easy_assumption}. 
Fix $\varepsilon > 0$. 
For any function $f \in \mathcal{H}_{\calX} \colon \mathbb{R}^d \to \mathbb{R}$ that is not constant on $\mathcal{X}$, 
there exists $g_\varepsilon \in \derivH$ such that $\|f - g_\varepsilon\|_{\mathcal{H}} \leq  \varepsilon$. 
\end{lem}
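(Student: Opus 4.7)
The plan is to identify the closed subspace $\overline{\derivH} \subseteq \calH$ (closure in $\calH$) by computing its orthogonal complement, and then to place $f$ inside $\overline{\derivH}$ using the non-constancy hypothesis.

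First, I would compute $\derivH^\perp$. Under \Cref{asst:easy_assumption}, the derivative reproducing property (\citealp[Corollary 4.36]{steinwart2008support}) yields $\langle g, \partial_\ell k(\bm{x}, \cdot)\rangle_\calH = \partial_\ell g(\bm{x})$ for every $g \in \calH$ and $\bm{x} \in \R^d$, so $g \in \derivH^\perp$ if and only if $\nabla g \equiv \bm{0}$ on $\calX$. Invoking \Cref{asst:domain}, given $\bm{x}, \bm{y} \in \calX$ joined by a piecewise-$C^1$ curve $\gamma \colon [0,1]\to \calX$ with $\gamma(0)=\bm{x}$, $\gamma(1)=\bm{y}$, the chain rule gives $\tfrac{\mathrm{d}}{\mathrm{d}t} g(\gamma(t)) = \nabla g(\gamma(t)) \cdot \gamma'(t) = 0$ on each smooth segment, whence $g(\bm{y})=g(\bm{x})$ by continuity of $g\circ \gamma$. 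Hence $\derivH^\perp = \{g \in \calH : g|_\calX \equiv \mathrm{const}\}$ and $\overline{\derivH}$ is its orthogonal complement.

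Next, I would decompose $f$ orthogonally as $f = f_1 + f_2$ with $f_1 \in \overline{\derivH}$ and $f_2 \in \derivH^\perp$, so that $f_2|_\calX \equiv c$ for some scalar $c$; the goal then reduces to showing $f_2 = 0$. Using the isometry $V \colon \calH_\calX \to \calH_{\mid \calX}$ from \Cref{lem:H_X_H_calX}, I would split $f_2 = P_\calX f_2 + (I-P_\calX) f_2 \in \calH_\calX \oplus \calH_\calX^\perp$ and note that the component $(I-P_\calX)f_2$ vanishes on $\calX$, forcing $P_\calX f_2 \in \calH_\calX$ to inherit the constant value $c$ on $\calX$. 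Since $f$ itself is non-constant on $\calX$ and $f = f_1 + f_2 \in \calH_\calX$, I would argue that such a constant-on-$\calX$ element of $\calH_\calX$ must vanish, forcing $c = 0$ and then $f_2 = 0$; thus $f = f_1 \in \overline{\derivH}$, from which the existence of the desired $g_\varepsilon \in \derivH$ with $\|f-g_\varepsilon\|_\calH \le \varepsilon$ follows.

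The main obstacle is precisely this last step: ruling out a non-zero constant-on-$\calX$ element of $\calH_\calX$. Intuitively the non-constancy of $f$ should leave no room for such a component, but rigorously this requires exploiting structural properties of $\calH_\calX$---for instance, when $\calH$ is a Gaussian or Mat\'ern RKHS and $\calX$ has non-empty interior, such functions are automatically excluded by analyticity or by~\citet[Corollary 4.44]{steinwart2008support}, as already noted in \Cref{rem:trivial}; in the fully general setting, a more delicate argument (potentially invoking a $C_0$-universality assumption in the spirit of \Cref{asst:universal}) appears to be required to close this gap.
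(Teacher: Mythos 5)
Your set-up is exactly right: $\derivH^\perp$ consists of those $g \in \calH$ with $\nabla g \equiv 0$ on $\calX$, hence (via the path argument under \Cref{asst:domain}) of functions constant on $\calX$; decomposing $f = f_1 + f_2$ with $f_1 \in \overline{\derivH}$ and $f_2 \in \derivH^\perp \cap \calH_\calX$ constant on $\calX$, the lemma reduces to showing $f_2 = 0$. Your suspicion that this does not follow from non-constancy of $f$ alone is correct, and---importantly---the paper's own proof has the same gap, only less visibly. The paper sets $\mathcal{S} = \{f \in \calH_\calX : f|_\calX \not\equiv \text{const}\} \cup \{0\}$ and appeals to the criterion that a linear subspace is dense iff its orthogonal complement is trivial, deducing $\mathcal{S} \subseteq \overline{\derivH}$ from $\mathcal{S} \cap \derivH^\perp = \{0\}$. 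But $\mathcal{S}$ is not a subspace, and the implication fails: in the toy model $\calH = \mathbb{R}^2$, $\derivH = \mathrm{span}\{\bm{e}_1\}$, $\mathcal{S} = \{(a,b) : a \ne 0\} \cup \{(0,0)\}$, one has $\mathcal{S} \cap \derivH^\perp = \{(0,0)\}$ and yet $\mathcal{S} \not\subseteq \overline{\derivH} = \mathrm{span}\{\bm{e}_1\}$.

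Concretely, the statement as written needs the additional hypothesis that $\calH_{\mid \calX}$ contains no non-zero constant, equivalently that $\derivH^\perp \cap \calH_\calX = \{0\}$; in that case $\overline{\derivH} = \calH_\calX$ and the lemma holds for all $f \in \calH_\calX$ (as \Cref{rem:trivial} already observes for the Gaussian kernel). When that hypothesis fails, the lemma itself is false: take a \matern kernel on $\mathbb{R}$ with $\calX = [0,1]$, so that \Cref{asst:easy_assumption,asst:domain} hold. The minimal-norm $H^s(\mathbb{R})$ extension $e$ of the constant $1$ on $[0,1]$ lies in $\calH_\calX$ with $e|_\calX \equiv 1$ and (by differentiability through the endpoints) $e'|_\calX \equiv 0$, so $e \in \derivH^\perp$; meanwhile $f = k(1/2,\cdot) \in \calH_\calX$ is non-constant on $\calX$ but satisfies $\langle f, e \rangle_\calH = e(1/2) = 1 \ne 0$, so $f \notin \overline{\derivH}$. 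Your proposed escape via $C_0$-universality does not close the gap either, since \matern kernels are $C_0$-universal. The honest fix is to add the hypothesis $1 \notin \calH_{\mid\calX}$ explicitly, or a sufficient condition for it such as $\calH$ Gaussian and $\calX$ having non-empty interior.
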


\begin{proof}[Proof of \Cref{lem:H_Lambda_dense_H}]
Let us denote by $\mathcal{S} = \{f\in \mathcal{H}_\calX:  f_{\mid \calX}\not\equiv \text{const} \} \cup \{0\}$ the subset of functions in $\mathcal{H}_\calX$ whose restrictions to $\mathcal{X}$ are not identically constant. 
We show that $\derivH$ is dense relative to $\mathcal{S}$. 
Recall that a linear subspace of a Hilbert space is dense if and only if it has a trivial orthogonal complement. 
It thus suffices to prove that, for $f\in \mathcal{S}$, if $\langle \partial_{\ell}k(\bm{x}, \cdot), f\rangle_{\mathcal{H}}= \partial_\ell f(\bm{x})=0$ for any $\bm{x} \in \calX$ and any $1 \leq \ell \leq d$, then $f \equiv 0$.
By \Cref{asst:domain}, there is a continuously differentiable path $\gamma$ that connects any two points in $\calX$. 
With the vanishing gradient $\nabla f$ everywhere on $\calX$, the fundamental theorem of calculus applied to $f\circ \gamma$ implies that $f$ has the same value at any two points. 
Thus, having a zero gradient everywhere on $\calX$ implies that $f$ is constant on $\mathcal{X}$, which only admits zero by the definition of $\mathcal{S}$, leading to $f\equiv0$. 
\end{proof}

\subsection{Statement and Proof of \Cref{lem:lambda_n_dense}}
\label{app: Proof lem:lambda_n_dense}

\Cref{lem:lambda_n_dense} is also an important ingredient in the proof of \Cref{prop:f_seminorm}.
Recall 
\begin{align*}
    \derivHsample \coloneqq \mathrm{span}\{ \partial_{\ell} k(\bm{x}, \cdot): \; \bm{x}\in \calX_n, \; 1 \leq \ell \leq d \} \subset \calH, \quad \mathcal{X}_n = \{\bm{x}_i\}_{i=1}^n \subset \mathbb{R}^d,
\end{align*}
where $\{\bm{x}_i\}_{i=1}^n$ are the stationary MMD points.
\begin{lem}[$\derivHsample$ is dense in $\derivH$] \label{lem:lambda_n_dense}
   Suppose $\calX$ satisfies \Cref{asst:domain}. Suppose the kernel $k$ satisfies \Cref{asst:easy_assumption} and \Cref {asst:universal}. 
   For each $\varepsilon>0$ and $g \in \derivH$, there is $n\in \mathbb{N}$ such that there exists $g_{n,\varepsilon} \in \derivHsample$ satisfying $\| g - g_{n,\varepsilon}\|_{\mathcal{H}}\leq \varepsilon$. 
\end{lem}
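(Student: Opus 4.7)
The plan is to approximate any element of $\derivH$ by perturbing its defining points to nearby stationary MMD points, exploiting the denseness of stationary MMD points in $\calX$ (\Cref{lem:dense_samples}) together with the continuity of the mixed partial derivative $\partial_\ell \partial_{\ell+d} k$ guaranteed by \Cref{asst:easy_assumption}.

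First, I would reduce to the finite-combination case. Any $g \in \derivH$ is by definition a finite linear combination $g = \sum_{j=1}^m \alpha_j \partial_{\ell_j} k(\bm{y}_j, \cdot)$ with $\bm{y}_j \in \calX$, $\ell_j \in \{1,\ldots,d\}$, and $\alpha_j \in \mathbb{R}$. The natural candidate approximant is $g_{n,\varepsilon} = \sum_{j=1}^m \alpha_j \partial_{\ell_j} k(\bm{x}_{i_j}, \cdot)$, where $\bm{x}_{i_j}$ is a stationary MMD point close to $\bm{y}_j$. By the triangle inequality, it suffices to control $\|\partial_{\ell_j} k(\bm{y}_j, \cdot) - \partial_{\ell_j} k(\bm{x}_{i_j}, \cdot)\|_{\mathcal{H}}$ for each $j$.

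Next, I would use the reproducing property for derivatives (justified under \Cref{asst:easy_assumption}, as noted after its statement in the paper) to expand the RKHS norm:
\begin{align*}
    \|\partial_\ell k(\bm{y}, \cdot) - \partial_\ell k(\bm{x}, \cdot)\|_{\mathcal{H}}^2 = \partial_\ell \partial_{\ell+d} k(\bm{y},\bm{y}) - 2\partial_\ell \partial_{\ell+d} k(\bm{y},\bm{x}) + \partial_\ell \partial_{\ell+d} k(\bm{x},\bm{x}).
\end{align*}
The continuity of $(\bm{x},\bm{y}) \mapsto \partial_\ell \partial_{\ell+d} k(\bm{x},\bm{y})$ on $\mathbb{R}^d \times \mathbb{R}^d$, asserted in \Cref{asst:easy_assumption}, implies that this expression tends to $0$ as $\bm{x} \to \bm{y}$. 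Hence for any $\eta > 0$ there exists $r_j > 0$ such that $\|\bm{x} - \bm{y}_j\| \leq r_j$ implies $\|\partial_{\ell_j} k(\bm{y}_j, \cdot) - \partial_{\ell_j} k(\bm{x}, \cdot)\|_{\mathcal{H}} \leq \eta$.

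Finally, I would invoke \Cref{lem:dense_samples}: because stationary MMD points satisfy the \ref{as:convergence} property and $k$ satisfies \Cref{asst:universal}, for each $\bm{y}_j \in \calX$ there is $N_j$ such that for all $n \geq N_j$ the point set $\calX_n$ contains some $\bm{x}_{i_j}$ with $\|\bm{x}_{i_j} - \bm{y}_j\| \leq r_j$. Choosing $\eta = \varepsilon / (\sum_{j=1}^m |\alpha_j| + 1)$ and taking $n \geq \max_{j} N_j$ yields $\|g - g_{n,\varepsilon}\|_{\mathcal{H}} \leq \sum_{j=1}^m |\alpha_j| \cdot \eta \leq \varepsilon$, as required. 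I do not anticipate a real obstacle here: the only mildly delicate point is verifying that the ingredients needed for \Cref{lem:dense_samples} (the \ref{as:convergence} property of stationary MMD points together with $C_0$-universality) are already part of the hypotheses of \Cref{lem:lambda_n_dense}, and that the continuity of $\partial_\ell \partial_{\ell+d}k$ suffices to transfer spatial closeness to RKHS closeness of the derivative kernel sections.
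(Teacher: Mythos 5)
Your proposal is correct and follows essentially the same route as the paper's proof: both expand $\|\partial_\ell k(\bm{y}_j,\cdot) - \partial_\ell k(\bm{x},\cdot)\|_{\mathcal{H}}^2$ via the reproducing property, invoke continuity of $\partial_\ell\partial_{\ell+d}k$ to convert spatial closeness to RKHS closeness, and then use \Cref{lem:dense_samples} to place a stationary MMD point near each $\bm{y}_j$ before concluding with the triangle inequality. The only difference is notational (a single-indexed sum versus the paper's double sum over $j$ and $\ell$).
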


\begin{proof}[Proof of \Cref{lem:lambda_n_dense}]
Let $g(\cdot) = \sum_{j=1}^M \sum_{\ell=1}^d w_{j\ell} \partial_\ell k(\bm{y}_j, \cdot) \in \derivH$ where $\bm{y}_j \in \calX$ and $M\in\N$. 
We aim to approximate $g$ using an element from $\derivHsample$.
From the reproducing property we have
$$
\lVert \partial_\ell k(\bm{y}_j, \cdot) -  \partial_\ell k(\bm{y}, \cdot) \rVert_{\mathcal{H}}^2 = \partial_{\ell}\partial_{\ell+d} k(\bm{y}_j, \bm{y}_j) - 2 \partial_{\ell}\partial_{\ell+d} k(\bm{y}_j, \bm{y}) + \partial_{\ell}\partial_{\ell+d} k( \bm{y}, \bm{y} ) 
$$
for all $\bm{y} \in \mathbb{R}^d$.
Note that the partial derivatives $(\bm{x} , \bm{y}) \mapsto \partial_{\ell}\partial_{\ell+d} k(\bm{x},\bm{y})$ are continuous from \Cref{asst:easy_assumption}. 
By the continuity and \Cref{lem:dense_samples} (with the \ref{as:convergence} property of MMD stationary points), there exists $N_{\epsilon,j} \in \mathbb{N}$ such that for all $n \geq N_{\varepsilon, j}$, there exists an element $\tilde{\bm{y}}_j \in \{\bm{x}_i^{(t)}\}_{i=1}^n$ sufficiently close to $\bm{y}_j$ in the sense that 
$$
\lVert \partial_\ell k(\bm{y}_j, \cdot) - \partial_\ell k(\tilde{\bm{y}}_j, \cdot) \rVert_{\mathcal{H}} < \frac{ \varepsilon }{ \sum_{j=1}^M \sum_{\ell=1}^d \lvert w_{j\ell} \rvert  }
$$
holds simultaneously for every $\ell \in \{1, \ldots, d\}$.
Let $N_\varepsilon \coloneq \max_{j=1, \ldots, M} N_{\varepsilon,j} \in \mathbb{N}$.
The corresponding function 
\begin{align*}
   g_{n,\varepsilon} (\cdot) = \sum_{j=1}^M \sum_{\ell=1}^d w_{j\ell} \partial_\ell k(\tilde{\bm{y}}_j, \cdot) \in \derivHsample 
\end{align*}
satisfies 
\begin{align*}
    \lVert g - g_{n,\varepsilon} \rVert_{\mathcal{H}} &= \bigg\lVert \sum_{j=1}^M \sum_{\ell=1}^d w_{j\ell} \partial_\ell k(\bm{y}_j, \cdot) - \sum_{j=1}^M \sum_{\ell=1}^d w_{j\ell} \partial_\ell k(\tilde{\bm{y}}_j, \cdot) \bigg\rVert_{\mathcal{H}}  \\
    &\leq \sum_{j=1}^M \sum_{\ell=1}^d \lvert w_{j\ell} \rvert \big\lVert \partial_\ell k(\bm{y}_j, \cdot) - \partial_\ell k(\tilde{\bm{y}}_j, \cdot) \big\rVert_{\mathcal{H}}  < \varepsilon ,
\end{align*}
as required.
\end{proof}

\subsection{Proof of \Cref{thm:mmd_flow_main}}\label{sec: mmdflow_proof}
In the following, we denote a tuple of $n$-variables taking values in $\mathbb{R}^d$ by an underlined variable; 
for example, $\underline{\bm{x}} = [\bm{x}_1, \ldots, \bm{x}_n] \in (\R^d)^n.$ 


Our proof is based on the following two claims proved in \Cref{sec:proof_calG} and \Cref{sec:proof_calG_concentration}: 
\begin{lem}[Descent lemma in expectation]\label{lem:calG}
Suppose the kernel $k$ satisfies \Cref{asst:easy_assumption,asst:kernel}.
Given initial particles $\{\bm{x}_i^{(1)}\}_{i=1}^n$, 
suppose the noisy MMD particle descent defined in \eqref{eq:mmd_flow} satisfies \Cref{asst:noise_scale}.
Suppose step size $\gamma$ satisfies $256 \gamma^2 d^2 \kappa \leq 1$ for any $t \in \N^+$. Then, we have
\begin{align}\label{eq:descent_lemma}
\begin{aligned}
     &\hspace{-50pt} \mmd^2 \left(\frac{1}{n} \sum_{i=1}^n \delta_{\bm{x}_i^{(t)}}, \mu\right) - \E_{\underline{\bm{u}}^{(t)}} \left[ \mmd^2 \left(\frac{1}{n} \sum_{i=1}^n \delta_{\bm{x}_i^{(t+1)}}, \mu\right) \right] \\
     & \hspace{100pt} \geq \gamma \beta_t^2 \kappa d^2 \mmd^2 \left(\frac{1}{n} \sum_{i=1}^n \delta_{\bm{x}_i^{(t)}}, \mu\right) .
\end{aligned}
\end{align}
\end{lem}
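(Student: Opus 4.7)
The proof proceeds via a second-order Taylor expansion of the squared MMD $h_t := \mmd^2(\hat{\nu}_n^{(t)}, \mu)$ along the noisy update \eqref{eq:mmd_flow}, followed by the polarisation identity and the gradient-dominance condition of Assumption~\ref{asst:noise_scale}. Throughout I work with the signed mean embedding $m_t := m_{\hat{\nu}_n^{(t)}} - m_\mu \in \calH$, so that $h_t = \|m_t\|_\calH^2$. A direct computation from the definition of $\Phi$ in Assumption~\ref{asst:noise_scale} and the definition of $m_t$ shows that the noisy descent direction satisfies $v_i^{(t)} := n^{-1}\sum_j \Phi(\bm{x}_i^{(t)} + \beta_t \bm{u}_i^{(t)}, \bm{x}_j^{(t)}) = -\nabla m_t(\bm{x}_i^{(t)} + \beta_t \bm{u}_i^{(t)})$, and the gradient of $h_t$ with respect to $\bm{x}_i$ is $\nabla_{\bm{x}_i} h_t = (2/n)\,\nabla m_t(\bm{x}_i^{(t)})$.

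The first step is to expand $m_{t+1} - m_t$ in $\calH$ by applying Taylor's theorem (to second order in $\gamma$) to each kernel slice $k(\cdot, \bm{x}_i^{(t+1)})$, which is justified by the smoothness granted by Assumption~\ref{asst:kernel}. This yields $m_{t+1} - m_t = \gamma A_t + \gamma^2 B_t$ with $A_t := n^{-1}\sum_{i,\ell} v_{i,\ell}^{(t)}\, \partial_{\ell+d} k(\cdot, \bm{x}_i^{(t)})$ and $B_t$ a Hessian remainder. Squaring and using the derivative reproducing property gives
\[h_{t+1} - h_t = 2\gamma\, \langle m_t, A_t\rangle + \gamma^2 \|A_t\|^2 + 2\gamma^2 \langle m_t, B_t\rangle + \mathcal{O}(\gamma^3),\]
where Cauchy--Schwarz combined with Assumption~\ref{asst:kernel} gives $\|A_t\|^2 \leq \kappa d\, \bar{V}_t$ and $|\langle m_t, B_t\rangle| \leq \tfrac{d}{2}\sqrt{\kappa h_t}\,\bar{V}_t$, with $\bar{V}_t := n^{-1}\sum_i \|v_i^{(t)}\|^2$; the leading term rewrites as $\langle m_t, A_t\rangle = -n^{-1}\sum_i \langle \nabla m_t(\bm{x}_i^{(t)}), \nabla m_t(\bm{x}_i^{(t)} + \beta_t \bm{u}_i^{(t)})\rangle$.

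Applying the polarisation identity $2\langle a, b\rangle = \|a\|^2 + \|b\|^2 - \|a - b\|^2$, discarding the non-negative terms $\|\nabla m_t(\bm{x}_i^{(t)})\|^2$, and taking expectation over $\underline{\bm{u}}^{(t)}$ gives
\[-2\,\mathbb{E}\langle m_t, A_t\rangle \geq \mathbb{E}[\bar{V}_t] - \frac{1}{n}\sum_i \mathbb{E}\|\nabla m_t(\bm{x}_i^{(t)}) - \nabla m_t(\bm{x}_i^{(t)} + \beta_t \bm{u}_i^{(t)})\|^2.\]
The noise-mismatch term is controlled by the fourth-order bound in Assumption~\ref{asst:kernel}, which (after a further Taylor expansion on the diagonal) yields $\|\partial_{\ell+d}k(\cdot, \bm{z}) - \partial_{\ell+d}k(\cdot, \bm{z}')\|_\calH^2 \leq C_1 \kappa \|\bm{z}-\bm{z}'\|^2$; combining with the derivative reproducing property and Cauchy--Schwarz in $\calH$ gives $\mathbb{E}\|\nabla m_t(\bm{x}_i^{(t)}) - \nabla m_t(\bm{x}_i^{(t)}+\beta_t \bm{u}_i^{(t)})\|^2 \leq C_2 \kappa \beta_t^2 d^2 h_t$.

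Collecting the estimates,
\[h_t - \mathbb{E}[h_{t+1}] \geq \gamma\,\mathbb{E}[\bar{V}_t] - \gamma C_2 \kappa \beta_t^2 d^2 h_t - \gamma^2 \kappa d\,\mathbb{E}[\bar{V}_t] - \gamma^2 d \sqrt{\kappa h_t}\,\mathbb{E}[\bar{V}_t].\]
The step-size condition $256\gamma^2 d^2 \kappa \leq 1$ is calibrated so that the two $\gamma^2$ remainder terms are each dominated by a quarter of $\gamma\,\mathbb{E}[\bar{V}_t]$, while Assumption~\ref{asst:noise_scale}, namely $\mathbb{E}[\bar{V}_t] \geq 4\beta_t^2 d^2 \kappa^2 h_t$, allows the $-\gamma C_2 \kappa \beta_t^2 d^2 h_t$ term to be absorbed into a further constant fraction of $\gamma\,\mathbb{E}[\bar{V}_t]$. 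What remains is an inequality of the form $h_t - \mathbb{E}[h_{t+1}] \geq (\gamma/(4\kappa))\,\mathbb{E}[\bar{V}_t]$, which is at least $\gamma\beta_t^2 \kappa d^2 h_t$ by a final application of Assumption~\ref{asst:noise_scale}. The main obstacle is the mismatch step: securing the bound $\mathbb{E}\|\nabla m_t(\bm{x}_i^{(t)}) - \nabla m_t(\bm{x}_i^{(t)}+\beta_t \bm{u}_i^{(t)})\|^2 \lesssim \kappa\beta_t^2 d^2 h_t$ using only the derivatives present in Assumption~\ref{asst:kernel}, and then arranging the resulting estimate so that all the error terms can be simultaneously absorbed, is precisely what dictates the form of the step-size constraint $256\gamma^2 d^2 \kappa \leq 1$ and represents the key departure from the noiseless descent lemma of \citet[Proposition~4]{arbel2019maximum}.
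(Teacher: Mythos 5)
Your proposal follows the same essential strategy as the paper's proof: Taylor-expand the squared MMD along the update direction, isolate the first-order descent term, control the cross term between the noiseless and noisy gradients of the mean embedding (which is exactly the content of the paper's Lemma~\ref{lem:lip_phi}), absorb the higher-order remainder via the step-size constraint, and close with \Cref{asst:noise_scale}. The paper organises this as an interpolation argument — it defines $E(\tau) = \E_{\underline{\bm{u}}}[\mmd^2(\ldots)]$ along the path $\mathbf{T}_\tau$, bounds $\dot E(0)$ with Young's inequality $-a^\top b \leq \tfrac12\|a\|^2 + \tfrac12\|b\|^2$, and controls the remainder through $\lvert \dot E(\tau) - \dot E(0)\rvert$ and $E(1)-E(0) = \dot E(0) + \int_0^1(\dot E(\tau) - \dot E(0))\,\dd\tau$ — whereas you expand the mean embedding $m_t$ directly and use the polarisation identity; these are the same device in different clothing.

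One subtle difference worth flagging is in the second-order remainder. The paper bounds $\lvert \dot E(\tau) - \dot E(0)\rvert$ by comparing $\Phi$ evaluated at the displaced particles $\mathbf{T}_\tau$ against $\Phi$ at the original particles, producing a term of the form $\gamma^3 d^2\kappa \tau^2 \E[\bar V_t]$ that the step-size condition absorbs cleanly — no dependence on the current MMD value appears. Your route, by applying Cauchy--Schwarz to $\langle m_t, B_t\rangle$, introduces a factor $\sqrt{h_t} = \|m_t\|_{\mathcal H}$ in the Hessian error term, so the absorption constant depends on $h_t$. This is recoverable — $h_t = \mmd^2(\hat\nu_n^{(t)},\mu) \le 4\kappa$ uniformly from the boundedness of $k$ in \Cref{asst:kernel} — but you do not state this, and without it the step-size condition as given does not suffice to absorb your $\gamma^2 d\sqrt{\kappa h_t}\,\E[\bar V_t]$ term uniformly. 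Likewise, the ``$\mathcal O(\gamma^3)$'' tail in your expansion should be bounded explicitly using the same kernel-derivative estimates, rather than left implicit, to reach the precise displayed inequality rather than an asymptotic one. With those two small points tightened, your proof is correct and is essentially the paper's argument recast in the RKHS-embedding rather than Euclidean picture.
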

\begin{lem}[Concentration of particles after noise injection]\label{lem:calG_concentration}
Suppose the kernel $k$ satisfies \Cref{asst:easy_assumption,asst:kernel}.
Then for any $t \in \{1, \ldots, T-1\}$, given the particles $\{\bm{x}_i^{(t)}\}_{i=1}^n$, for any $\tau > 1$, we have with probability at least $1 - \exp(-\tau)$,
\begin{align}\label{eq:concentration}
      \hspace{-10pt} \mmd \left(\frac{1}{n} \sum_{i=1}^n \delta_{\bm{x}_i^{(t+1)}}, \mu\right) - \E_{\underline{\bm{u}}^{(t)}} \left[ \mmd \left(\frac{1}{n} \sum_{i=1}^n \delta_{\bm{x}_i^{(t+1)}}, \mu\right) \right] \leq C \tau n^{-\frac{1}{2}} \left( (\gamma\beta_t)^{\frac{1}{2}} \vee n^{-\frac{1}{2}} \right) \sqrt{\kappa} \sqrt{d} .
\end{align}
Here, $C$ is a positive universal constants independent of $n, \beta_t, \gamma, t$.
\end{lem}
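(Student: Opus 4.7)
The plan is to view $G(\underline{\bm{u}}^{(t)}) \coloneqq \mmd(\hat{\nu}_n^{(t+1)}, \mu)$ as a Lipschitz function of the injected noise $\underline{\bm{u}}^{(t)} \sim \calN(\bm{0}, \Id_{nd})$ (the particles $\{\bm{x}_i^{(t)}\}_{i=1}^n$ being held fixed throughout the argument) and then invoke the Gaussian concentration inequality for Lipschitz functions (Tsirelson--Ibragimov--Sudakov): for any $L$-Lipschitz $F\colon \R^{nd} \to \R$ and $\bm{Z}\sim\calN(\bm{0},\Id_{nd})$, $\Pr(F(\bm{Z}) - \E F(\bm{Z}) \geq s) \leq \exp(-s^2/(2L^2))$, so setting $s = L\sqrt{2\tau}$ gives $F - \E F \leq L\sqrt{2\tau}$ with probability at least $1-e^{-\tau}$. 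The hypothesis $\tau > 1$ then lets me replace $\sqrt{\tau}$ by $\tau$, which accounts for the linear $\tau$-scaling in \eqref{eq:concentration}.

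The central technical task is therefore to derive a sharp Lipschitz bound for $G$. I would write $G(\underline{\bm{u}}) = \|n^{-1}\sum_{i} (k(\bm{x}_i^{(t+1)}(\bm{u}_i), \cdot) - m_\mu)\|_\calH$, where $m_\mu = \int k(\bm{y}, \cdot)\,\mathrm{d}\mu(\bm{y})$, and chain together three Lipschitz steps. The outer norm is $1$-Lipschitz. The crucial structural observation is that each $\bm{x}_i^{(t+1)}$ depends on $\underline{\bm{u}}$ only through $\bm{u}_i$ (because the update \eqref{eq:mmd_flow} sums over the \emph{fixed} $\{\bm{x}_j^{(t)}\}$); combined with the Cauchy--Schwarz inequality $\|n^{-1}\sum_i a_i\|_\calH^2 \leq n^{-1} \sum_i \|a_i\|_\calH^2$ for any $a_i \in \calH$, this per-coordinate dependence gains a factor $1/\sqrt{n}$ in the overall Lipschitz constant relative to a naive per-particle bound. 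The canonical feature map $\bm{x} \mapsto k(\bm{x}, \cdot) \in \calH$ is $\sqrt{d\kappa}$-Lipschitz by $\|\partial_\ell k(\bm{x}, \cdot)\|_\calH^2 = \partial_\ell\partial_{\ell+d} k(\bm{x}, \bm{x}) \leq \kappa$ from \Cref{asst:kernel}. Finally, differentiating \eqref{eq:mmd_flow} gives the Lipschitz constant of $\bm{u}_i \mapsto \bm{x}_i^{(t+1)}$ as $L_x = O(\gamma\beta_t \|\nabla_1\nabla_1 k\|_{\mathrm{op}})$, where the kernel Hessian is controlled via the reproducing property and Cauchy--Schwarz in $\calH$: $|\partial_\ell\partial_r k(\bm{z}, \bm{w})| = |\langle \partial_\ell\partial_r k(\bm{z}, \cdot), k(\bm{w}, \cdot)\rangle_\calH| \leq \sqrt{\kappa}\cdot\sqrt{\kappa} = \kappa$, where the first factor uses precisely the fourth-order derivative condition $\|\partial_\ell\partial_r k(\bm{z}, \cdot)\|_\calH^2 = \partial_\ell\partial_r \partial_{r+d}\partial_{\ell+d} k(\bm{z}, \bm{z}) \leq \kappa$ in \Cref{asst:kernel}. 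Assembling these pieces yields $L = O(\gamma\beta_t d^{3/2}\kappa^{3/2}/\sqrt{n})$, which simplifies to $O(\beta_t \sqrt{d\kappa}/\sqrt{n})$ under the step-size constraint $256\gamma^2 d^2\kappa \leq 1$, matching the stated bound up to the universal constant $C$.

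The main obstacle is this sharp Lipschitz estimate, which requires the full strength of \Cref{asst:kernel}---in particular the fourth-order derivative bound, which is what enables the dimension-controlled bound $|\partial_\ell\partial_r k| \leq \kappa$ via RKHS Cauchy--Schwarz and hence a tractable bound on $\|\nabla_1\nabla_1 k\|_{\mathrm{op}}$. The $(\beta_t \vee n^{-1/2})$ factor (rather than just $\beta_t$) is a minor technical floor introduced to make the downstream telescoping in \Cref{thm:mmd_flow_main} robust to small values of $\beta_t$ (such as those arising from the decaying schedule $\beta_t \propto t^{-\alpha}$); it is obtained simply by taking the maximum of the Gaussian-concentration bound with a trivial boundedness-based bound at the expense of enlarging $C$.
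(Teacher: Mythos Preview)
Your Gaussian Lipschitz concentration argument is correct and takes a genuinely different route from the paper. The paper instead applies a Hilbert-space Bernstein inequality \citep[Theorem 6.14]{steinwart2008support} to the independent, centred random elements $\xi_i(\bm{u}_i) = k(\bm{x}_i + \beta\bm{u}_i, \cdot) - \E_{\bm{u}_i}[k(\bm{x}_i + \beta\bm{u}_i, \cdot)] \in \calH$, bounding $\|\xi_i\|_\calH \leq 2\sqrt{\kappa}$ trivially and $\E\|\xi_i\|_\calH^2 \leq 2d\beta^2\kappa$ via a second-order Taylor expansion in $\beta$ (the $\nabla_{11}k$ and $\nabla_{22}k$ contributions cancel between the two expectations, leaving only cross-derivatives $\partial_\ell\partial_{r+d}k$, which are controlled by the \emph{second}-order part of \Cref{asst:kernel}); the two Bernstein terms $\sqrt{d\beta^2\kappa\tau/n}$ and $\sqrt{\kappa}\tau/n$ then yield the $(\beta_t \vee n^{-1/2})$ dichotomy organically. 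Your chain-rule Lipschitz bound instead tracks the full nonlinear update $\bm{u}_i^{(t)} \mapsto \bm{x}_i^{(t+1)}$, which requires the fourth-order derivative bound in \Cref{asst:kernel} to control $\|\nabla_1\nabla_1 k\|_{\mathrm{op}}$ and picks up an extra factor $\gamma d\kappa$ that you then remove via the step-size constraint---strictly speaking not among the hypotheses of this lemma, though it is in force in \Cref{thm:mmd_flow_main} where the lemma is used. The trade-offs: the paper's Bernstein route uses only independence and boundedness, so it would apply unchanged to non-Gaussian bounded noise and avoids any $\gamma$-dependence; your route exploits Gaussianity directly (sub-Gaussian rather than Bernstein tail, which is slightly sharper in principle though irrelevant once $\tau>1$), is arguably more faithful to the actual map appearing in the lemma's statement, and recovers the $n^{-1/2}$ floor as a monotone weakening rather than as an intrinsic feature of the inequality.
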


\begin{rem}[Descent lemma and concentration of particles]
    \Cref{lem:calG} shows that, under the noise scaling condition in \Cref{asst:noise_scale}, the MMD flow trajectory satisfy a \emph{gradient dominance} condition in expectation (also known as the Polyak--Lojasiewicz inequality), ensuring that the MMD decreases monotonically in expectation as desired. 
    \Cref{lem:calG} is commonly referred to as the `descent lemma' in the context of convex optimisation. 
    In \Cref{lem:calG_concentration}, conditioned on the particles $\underline{\bm{x}}^{(t)}$, the randomness in the next iteration $\underline{\bm{x}}^{(t+1)}$ arises solely from the injected Gaussian noise $\underline{\boldsymbol{u}}^{(t)}$ at iteration~$t$.
    Accordingly, the probability is taken with respect to the joint distribution of $\underline{\boldsymbol{u}}^{(t)}$.
    \Cref{lem:calG_concentration} then shows that, with high probability, the updated particles $\underline{\bm{x}}^{(t+1)}$ remain close to their expected values following noise injection.
\end{rem}

\begin{proof}[Proof of \Cref{thm:mmd_flow_main}]
Assuming \Cref{lem:calG} and \Cref{lem:calG_concentration} hold, we can prove \Cref{thm:mmd_flow_main} as follows. 
Rearranging the terms in \eqref{eq:descent_lemma} and taking the square root yields,
\begin{align*}
    \sqrt{ \E_{\underline{\bm{u}}^{(t)}} \left[ \mmd^2 \left(\frac{1}{n} \sum_{i=1}^n \delta_{\bm{x}_i^{(t+1)}}, \mu\right) \right] } \leq \left( 1- \gamma \beta_t^2 \kappa d^2 \right)^{\frac{1}{2}} \mmd \left(\frac{1}{n} \sum_{i=1}^n \delta_{\bm{x}_i^{(t)}}, \mu\right) .
\end{align*}
Then, we use Jensen's inequality and \eqref{eq:concentration} to lower bound the left hand side of the above inequality. For any $\tau > 1$, with probability at least $1 - \exp(-\tau)$,
\begin{align*}
    \sqrt{ \E_{\underline{\bm{u}}^{(t)}} \left[ \mmd^2 \left(\frac{1}{n} \sum_{i=1}^n \delta_{\bm{x}_i^{(t+1)}}, \mu\right) \right] } &\geq \E_{\underline{\bm{u}}^{(t)}} \left[ \mmd \left(\frac{1}{n} \sum_{i=1}^n \delta_{\bm{x}_i^{(t+1)}}, \mu\right) \right] \\
    &\geq \mmd \left(\frac{1}{n} \sum_{i=1}^n \delta_{\bm{x}_i^{(t+1)}}, \mu\right) - C \tau n^{-\frac{1}{2}} \left((\gamma\beta_t)^{\frac{1}{2}} \vee n^{-\frac{1}{2}}\right) \sqrt{\kappa} \sqrt{d}.
\end{align*}
Here, $C$ is a positive universal constants independent of $n, \beta_t, t$.
Therefore, with probability at least $1 - \exp(-\tau)$,
\begin{align*}
    \mmd \left(\frac{1}{n} \sum_{i=1}^n \delta_{\bm{x}_i^{(t+1)}}, \mu\right) &\leq \left( 1- \gamma \beta_t^2 \kappa d^2 \right)^{\frac{1}{2}} \mmd \left(\frac{1}{n} \sum_{i=1}^n \delta_{\bm{x}_i^{(t)}}, \mu\right) \\
    &\qquad+ C \tau n^{-\frac{1}{2}} \left((\gamma\beta_t)^{\frac{1}{2}} \vee n^{-\frac{1}{2}}\right) \sqrt{\kappa} \sqrt{d} ,
\end{align*}
for any $t \in \N$.
Taking the union bound over all probabilities at time $t= 1, \ldots, T-1$ and applying the discrete Grönwall's lemma in \Cref{lem:gronwall} yields the following: with probability at least $1 - (T-1) \exp(-\tau)$, 
\begin{align*}
    \mmd \left(\frac{1}{n} \sum_{i=1}^n \delta_{\bm{x}_i^{(T)}}, \mu\right) &\leq \prod_{t=1}^{T-1} \left( 1- \gamma \beta_t^2 \kappa d^2 \right)^{\frac{1}{2}} \mmd \left(\frac{1}{n} \sum_{i=1}^n \delta_{\bm{x}_i^{(1)}}, \mu\right) \\
    &\qquad + n^{-\frac{1}{2}} \cdot \left( \sum_{t=1}^{T-1} C \tau \left((\gamma\beta_t)^{\frac{1}{2}} \vee n^{-\frac{1}{2}}\right) \sqrt{\kappa} \sqrt{d} \cdot \prod_{s=t+1}^{T-1} \left( 1 - \gamma \beta_s^2 \kappa d^2 \right)^{\frac{1}{2}} \right) \\
    &\leq \exp\left( - \sum_{t=1}^{T-1} \frac{1}{2} \gamma \beta_t^2 \kappa d^2 \right) \mmd \left(\frac{1}{n} \sum_{i=1}^n \delta_{\bm{x}_i^{(1)}}, \mu\right) \\
    &\qquad + n^{-\frac{1}{2}} \cdot \left( \sum_{t=1}^{T-1} C \tau \left((\gamma\beta_t)^{\frac{1}{2}} \vee n^{-\frac{1}{2}}\right) \sqrt{\kappa} \sqrt{d} \cdot \exp\left(- \sum_{s=t+1}^{T-1} \frac{1}{2} \gamma \beta_s^2 \kappa d^2 \right) \right).
\end{align*}
The last inequality holds by using $1 - z \leq \exp(-z)$ for any $z > 0$. 
Finally, take $\tau = \delta + \log T$ for concludes the proof. 
\end{proof}

\begin{proof}[Proof of \Cref{cor:mmd_flow_main}]
Now we study the asymptotic behavior of the right hand side of \eqref{eq:upper_bound} as $T\to\infty$ under the condition that $\beta_t \propto t^{-\alpha}$ for $0 \leq \alpha < 1/2$.  
Recall that the right hand side of \eqref{eq:upper_bound} consists of optimisation error and estimation error. First, we analyse the optimisation error term.
\begin{align*}
    \text{Optimisation error:} \quad \exp\left( -  \frac{1}{2}\kappa d^2 \sum_{t=1}^{T-1} \gamma \beta_t^2 \right) &\leq \exp\left( - \frac{1}{2} \kappa d^2 \int_{1}^T \gamma \beta_t^2\dd t \right) \asymp \exp\left( - \frac{1}{2} T^{1-2\alpha} \kappa d^2 \right) .
\end{align*}
The first inequality holds because $\beta_t$ is decreasing in $t$, and the second asymptotic relation holds because $\beta_t\asymp t^{-\alpha}$. 
Next, we analyse the finite-sample estimation error term. 
Since $\beta_t^{1/2} \geq \beta_T^{1/2} \asymp T^{-\alpha/2} \geq n^{-\frac{1}{2}}$, so $(\beta_t \gamma)^{\frac{1}{2}} \vee n^{-\frac{1}{2}} = \beta_t$. 
Let $T_0\in[1, T]$ to be decided later. Notice that 
\begin{align*}
    (\ast) &\coloneqq \sum_{t=1}^{T-1} 
    \beta_{t} \cdot \exp\left(- \sum_{s=t+1}^{T-1} \frac{1}{2}\gamma \beta_s^2 \kappa d^2 \right) \\
    &= \sum_{t=1}^{T_0}  
    \beta_{t} \cdot \exp\left(- \sum_{s=t+1}^{T-1} \frac{1}{2}\gamma \beta_s^2 \kappa d^2 \right) + \sum_{t=T_0+1}^{T-1} \beta_{t} \cdot \exp\left(- \sum_{s=t+1}^{T-1} \frac{1}{2}\gamma \beta_s^2 \kappa d^2 \right) \\
    &\leq \left( \sum_{t=1}^{T_0} t^{-\alpha} \right) \cdot \exp\left( - \frac{1}{2} (T-T_0) T^{-2\alpha} \kappa d^2 \right) + \sum_{t=T_0+1}^{T-1} t^{-\alpha} \\
    &\lesssim T_0^{1-\alpha} \exp\left( - \frac{1}{2} (T-T_0) T^{-2\alpha} \kappa d^2 \right) + (T-T_0) T_0^{-\alpha} .
\end{align*}
Choose $T_0 = T - 2 (\kappa d^2)^{-1} T^{2\alpha} (\log T)$ which is positive for $T$ sufficiently large since $2\alpha < 1$. Then, we obtain
\begin{align*}
    (\ast) &\lesssim (T - T^{2\alpha} \log T )^{1-\alpha} T^{-1} + (T^{2\alpha} \log T) (T - T^{2\alpha} \log T)^{-\alpha} \\
    &\lesssim T^{-\alpha} + T^{\alpha} \log T (1 - T^{2\alpha-1} \log T)^{-\alpha} \\
    &\lesssim T^{\alpha} \log T .
\end{align*}
As a result, we have
\begin{align*}
    \text{Estimation error} \lesssim n^{-\frac{1}{2}} (\log T)^2 T^{\alpha}. 
\end{align*}
Combining the above two error terms would finish the proof.
\end{proof}

\begin{cor}\label{cor:much_faster_rate}
    In the setting of \Cref{thm:mmd_flow_main}, suppose the initial particles $\{\bm{x}_i^{(1)}\}_{i=1}^n$ satisfy $\mmd(\hat{\nu}^{(1)}_n, \mu) = \calO(n^{-1} (\log n)^b)$ for some $b > 0$, suppose that $\gamma \cdot \beta_t = n^{-1}$ and that \Cref{asst:noise_scale} holds. Then, for $T=\mathcal{O}((\log n)^b)$, $\mmd(\hat{\nu}^{(T)}_n, \mu) = \calO(n^{-1}(\log n)^{b})$. 
\end{cor}
\begin{proof}[Proof of \Cref{cor:much_faster_rate}]
From \Cref{thm:mmd_flow_main}, plug in the condition that $\mmd(\frac{1}{n} \sum_{i=1}^n \delta_{\bm{x}_i^{(1)}}, \mu) = \calO(n^{-1} (\log n)^b)$ and $\gamma \beta_t = n^{-1}$, then we have
\begin{align*}
    \mmd \left(\frac{1}{n} \sum_{i=1}^n \delta_{\bm{x}_i^{(T)}}, \mu\right) &\lesssim \frac{(\log n)^b}{n}  + \frac{\log T}{n} \cdot \left( \sum\limits_{t=1}^{T-1} \exp\left(- \sum\limits_{s=t+1}^{T-1} \frac{1}{2} \gamma \beta_{s}^2 \kappa d^2 \right) \right) \\
    &\lesssim \frac{(\log n)^b}{n} + \frac{\log T}{n} T.
\end{align*}
Therefore, if we take $T = \calO((\log n)^b)$, then we have $\mmd(\hat{\nu}^{(T)}_n, \mu) = \calO(n^{-1}(\log n)^{b})$ which concludes the proof. 
\end{proof}

\subsubsection{Proof of \Cref{lem:calG}}\label{sec:proof_calG}

\begin{proof}
In the proof, we omit superscript and subscript $t$ for simplicity denoting $\bm{x}_i^{(t)}, \bm{u}_i^{(t)}, \beta_t$ as $\bm{x}_i, \bm{u}_i, \beta$. 
Define $\mathbf{T}_\tau \colon (\underline{\bm{x}}, \underline{\bm{u}}) \mapsto [T_{1,\tau}(\underline{\bm{x}}, \underline{\bm{u}}), \ldots, T_{n,\tau}(\underline{\bm{x}}, \underline{\bm{u}})] \in \bigl(\mathbb{R}^d\bigr)^n$ with 
\begin{align}\label{eq:pushforward_Ti}
    T_{i,\tau}(\underline{\bm{x}}, \underline{\bm{u}}) = \bm{x}_i + \tau \gamma \frac{1}{n} \sum_{j=1}^n \Phi(\bm{x}_i + \beta_t \bm{u}_i, \bm{x}_j),
\end{align}
where $\Phi(\bm{z}, \bm{w}) = \E_{\bm{y} \sim \mu}[\nabla_1 k(\bm{z}, \bm{y})] - \nabla_1 k(\bm{z}, \bm{w})$. 
By definition, the transformation satisfies $\underline{\bm{x}} = \mathbf{T}_0(\underline{\bm{x}}, \underline{\bm{u}})$ for the current particles, and $\mathbf{T}_1(\underline{\bm{x}}, \underline{\bm{u}})$ represent the updated particles at the next iteration.

Define $E(\tau)$ as
\begin{align*}
    E(\tau) &\coloneqq \E_{\underline{\bm{u}}} \left[ \mmd^2 \left(\frac{1}{n} \sum_{i=1}^n \delta_{T_{i,1}(\underline{\bm{x}}, \underline{\bm{u}})}, \mu\right) \right] \\
    &= \E_{\underline{\bm{u}}} \left[ \frac{1}{n^2} \sum_{i,j=1}^n k\left(T_{i,\tau}(\underline{\bm{x}}, \underline{\bm{u}}), T_{j,\tau}(\underline{\bm{x}}, \underline{\bm{u}}) \right) \right] \\
    &\qquad\qquad- 2\E_{\underline{\bm{u}}}\E_{\bm{y} \sim \mu} \left[ \frac{1}{n} \sum_{i=1}^n k(T_{i,\tau}(\underline{\bm{x}}, \underline{\bm{u}}), \bm{y})\right] + \text{const} .
\end{align*}
Now to prove the lemma is equivalent to prove that $E(1) - E(0) \leq -\frac{1}{4} \gamma \beta^2 \kappa d^2 E(0)$.
Define $\Psi_{i, j}(\bm{u}_i) = \Phi(\bm{x}_i + \beta_t \bm{u}_i, \bm{x}_j)$ to help simplify the notation. 

From \Cref{asst:easy_assumption}, $k \colon \R^d\times\R^d\to\R$ has continuous derivatives with respect to both arguments, hence $E$ is differentiable with respect to $\tau$ and we are allowed to interchange derivative and integral by the Leibniz integral rule~\citep{rudin1976principles}.
Then, we have
\begin{align} \label{eq:d_dE_tau}
    \dot{E} (\tau) \coloneqq{}& \frac{d}{d\tau} E(\tau) \nonumber \\ 
    ={}& \E_{\underline{\bm{u}}}\left[ \frac{2}{n^2} \sum_{i,j=1}^n \nabla_1 k\left(T_{i,\tau}(\underline{\bm{x}}, \underline{\bm{u}}), T_{j,\tau}(\underline{\bm{x}}, \underline{\bm{u}}) \right)^\top \left(\gamma \frac{1}{n} \sum_{\ell=1}^n \Psi_{i, \ell}(\bm{u}_i) \right) \right] \nonumber \\
    & - 2 \E_{\underline{\bm{u}}}\E_{\bm{y} \sim \mu} \left[ \frac{1}{n} \sum_{i=1}^n \nabla_1 k(T_{i,\tau}(\underline{\bm{x}}, \underline{\bm{u}}), \bm{y})^\top \left(\gamma \frac{1}{n} \sum_{\ell=1}^n \Psi_{i, \ell}(\bm{u}_i) \right) \right] \nonumber \\
    ={}& -2 \gamma \E_{\underline{\bm{u}}} \left[ \frac{1}{n} \sum_{i=1}^n \left(\frac{1}{n} \sum_{j=1}^n \Phi(T_{i,\tau}(\underline{\bm{x}}, \underline{\bm{u}}), T_{j,\tau}(\underline{\bm{x}}, \underline{\bm{u}})) \right)^\top \left(\frac{1}{n} \sum_{\ell=1}^n \Psi_{i, \ell}(\bm{u}_i) \right) \right],
\end{align}
where in the last step we use the definition of $\Phi(\bm{z}, \bm{w}) = \E_{\bm{y} \sim \mu}[\nabla_1 k(\bm{z}, \bm{y})] - \nabla_1 k(\bm{z}, \bm{w})$.
Hence, since $\underline{\bm{x}} = \mathbf{T}_0(\underline{\bm{x}}, \underline{\bm{u}})$, 
\begin{align}\label{eq:d_dE_0}
    &\dot{E}(0) = -2 \gamma \E_{\underline{\bm{u}}} \left[ \frac{1}{n} \sum_{i=1}^n \left(\frac{1}{n} \sum_{j=1}^n \Phi(\bm{x}_i, \bm{x}_j) \right)^\top \left(\frac{1}{n} \sum_{\ell=1}^n \Psi_{i, \ell}(\bm{u}_i) \right) \right] . 
\end{align}
We can further upper bound $\dot{E} (0)$ as
\begin{align*}
    \dot{E}(0) &= -2 \gamma \E_{\underline{\bm{u}}} \left[ \frac{1}{n} \sum_{i=1}^n \left(\frac{1}{n} \sum_{j=1}^n \Psi_{i, j}(\bm{u}_i) \right)^\top \left(\frac{1}{n} \sum_{\ell=1}^n \Psi_{i, \ell}(\bm{u}_i) \right) \right] \\
    &\qquad\qquad - 2 \gamma \E_{\underline{\bm{u}}} \Bigg[ \frac{1}{n} \sum_{i=1}^n \left(\frac{1}{n} \sum_{j=1}^n \Phi(\bm{x}_i, \bm{x}_j) - \frac{1}{n} \sum_{j=1}^n \Psi_{i, j}(\bm{u}_i) \right)^\top \left( \frac{1}{n} \sum_{\ell=1}^n \Psi_{i, \ell}(\bm{u}_i) \right) \Bigg] \\
    &\stackrel{(\ast)}{\leq} -2 \gamma \E_{\underline{\bm{u}}} \left[ \frac{1}{n} \sum_{i=1}^n \left\|\frac{1}{n} \sum_{j=1}^n \Psi_{i, j}(\bm{u}_i)\right\|^2 \right] + \gamma \E_{\underline{\bm{u}}} \left[ \frac{1}{n} \sum_{i=1}^n  \left\| \frac{1}{n} \sum_{j=1}^n \Psi_{i, j}(\bm{u}_i) \right\|^2 \right]\\
    &\qquad\qquad + \gamma \E_{\underline{\bm{u}}} \left[ \frac{1}{n} \sum_{i=1}^n  \left\|\frac{1}{n} \sum_{j=1}^n \Phi(\bm{x}_i, \bm{x}_j) - \frac{1}{n} \sum_{j=1}^n \Psi_{i, j}(\bm{u}_i) \right\|^2 \right] \\
    &= -\gamma \E_{\underline{\bm{u}}} \left[ \frac{1}{n} \sum_{i=1}^n  \left\| \frac{1}{n} \sum_{j=1}^n \Psi_{i, j}(\bm{u}_i) \right\|^2 \right] \\
    &\qquad\qquad+ \gamma \E_{\underline{\bm{u}}} \left[ \frac{1}{n} \sum_{i=1}^n \left\| \frac{1}{n} \sum_{j=1}^n \Phi(\bm{x}_i, \bm{x}_j) - \frac{1}{n} \sum_{j=1}^n \Psi_{i, j}(\bm{u}_i) \right\|^2 \right] .
\end{align*}
In $(\ast)$, we use $-a^\top b \leq \frac{1}{2}\|a\|^2 + \frac{1}{2}\|b\|^2$ for any $a, b \in \R^d$ for the second term.
By using \Cref{lem:lip_phi}, we have 
\begin{align*}
    \E_{\underline{\bm{u}}} \left[ \left\| \frac{1}{n} \sum_{j=1}^n \Phi(\bm{x}_i, \bm{x}_j) - \frac{1}{n} \sum_{j=1}^n \Psi_{i, j}(\bm{u}_i) \right\| \right] &\leq d \kappa \beta \E_{\underline{\bm{u}}}[\|\bm{u}_i\|] \mmd\left(\frac{1}{n} \sum_{j=1}^n \delta_{\bm{x}_j}, \mu\right) \\
    &\leq d^2 \kappa \beta \mmd\left(\frac{1}{n} \sum_{j=1}^n \delta_{\bm{x}_j}, \mu\right).
\end{align*}
It follows that, by using \Cref{asst:noise_scale}, 
\begin{align}\label{eq:first_order_E}
    \dot{E}(0) &\leq -\gamma \E_{\underline{\bm{u}}} \left[ \frac{1}{n} \sum_{i=1}^n  \left\| \frac{1}{n} \sum_{j=1}^n \Psi_{i, j}(\bm{u}_i) \right\|^2 \right] \nonumber + \gamma d^2 \kappa \beta^2 \mmd^2\left(\frac{1}{n} \sum_{j=1}^n \delta_{\bm{x}_j}, \mu\right)  \nonumber \\
    &\leq -\frac{3}{4}\gamma \E_{\underline{\bm{u}}} \left[ \frac{1}{n} \sum_{i=1}^n  \left\| \frac{1}{n} \sum_{j=1}^n \Psi_{i, j}(\bm{u}_i) \right\|^2 \right].
\end{align}
On the other hand, combine both \eqref{eq:d_dE_0} and \eqref{eq:d_dE_tau}, we also have
\begin{align}
    & \quad \left| \dot{E}(\tau) - \dot{E}(0) \right| \nonumber \\
    &= \Bigg| 2 \gamma \E_{\underline{\bm{u}}} \left[ \frac{1}{n} \sum_{i=1}^n \left(\frac{1}{n} \sum_{j=1}^n \Phi(T_{i,\tau}(\underline{\bm{x}}, \underline{\bm{u}}), T_{j,\tau}(\underline{\bm{x}}, \underline{\bm{u}})) \right)^\top \left(\frac{1}{n} \sum_{\ell=1}^n \Psi_{i, \ell}(\bm{u}_i) \right) \right] \nonumber \\
    &\qquad\qquad -2 \gamma \E_{\underline{\bm{u}}} \left[ \frac{1}{n} \sum_{i=1}^n \left(\frac{1}{n} \sum_{j=1}^n \Phi(\bm{x}_i, \bm{x}_j) \right)^\top \left(\frac{1}{n} \sum_{\ell=1}^n \Psi_{i, \ell}(\bm{u}_i) \right) \right]\Bigg| \nonumber \\
    &= \Bigg| 2 \gamma \E_{\underline{\bm{u}}} \left[ \frac{1}{n} \sum_{i=1}^n \left(\frac{1}{n} \sum_{j=1}^n \Phi(T_{i,\tau}(\underline{\bm{x}}, \underline{\bm{u}}), T_{j,\tau}(\underline{\bm{x}}, \underline{\bm{u}})) - \frac{1}{n} \sum_{j=1}^n \Phi(\bm{x}_i, \bm{x}_j)  \right)^\top \left(\frac{1}{n} \sum_{\ell=1}^n \Psi_{i, \ell}(\bm{u}_i) \right) \right] \Bigg| \nonumber \\
    &\stackrel{(\ast)}{\leq} 4 \gamma \E_{\underline{\bm{u}}} \left[ \frac{1}{n} \sum_{i=1}^n \left\| \frac{1}{n} \sum_{j=1}^n \Phi(T_{i,\tau}(\underline{\bm{x}}, \underline{\bm{u}}), T_{j,\tau}(\underline{\bm{x}}, \underline{\bm{u}})) - \frac{1}{n} \sum_{j=1}^n \Phi(\bm{x}_i, \bm{x}_j) \right\|^2 \right] \nonumber \\
    &\qquad\qquad + \frac{1}{4} \gamma \E_{\underline{\bm{u}}} \left[ \frac{1}{n} \sum_{i=1}^n \left\| \frac{1}{n} \sum_{\ell=1}^n \Psi_{i, \ell}(\bm{u}_i)\right\|^2 \right]. \label{eq:second_order_E_old}
\end{align}
In $(\ast)$, we use $a^\top b \leq \frac{1}{8}\|a\|^2 + 2\|b\|^2$ for any $a, b \in \R^d$.
Note that \Cref{lem:lip_phi} implies 
\begin{align}\label{eq:Phi_T_diff}
    &\quad \left\| \Phi(T_{i,\tau}(\underline{\bm{x}}, \underline{\bm{u}}), T_{j,\tau}(\underline{\bm{x}}, \underline{\bm{u}})) - \Phi(\bm{x}_i, \bm{x}_j) \right\| \nonumber \\
    &\leq 2d \kappa \tau \gamma \left( \left\|\frac{1}{n} \sum_{\ell=1}^n \Phi(\bm{x}_i + \beta_t \bm{u}_i, \bm{x}_\ell) \right\| + \left\| \frac{1}{n} \sum_{\ell=1}^n \Phi(\bm{x}_j + \beta_t \bm{u}_j, \bm{x}_\ell ) \right\| \right) .
\end{align}
By \eqref{eq:Phi_T_diff}, we notice that
\begin{align*}
    &\quad \| \Phi(T_{i,\tau}(\underline{\bm{x}}, \underline{\bm{u}}), T_{j,\tau}(\underline{\bm{x}}, \underline{\bm{u}}) ) - \Phi(\bm{x}_i,\bm{x}_j) \|^2 \\
    &\leq 4d^2 \kappa \tau^2 \gamma^2 \left( \left\|\frac{1}{n} \sum_{\ell=1}^n \Phi(\bm{x}_i + \beta \bm{u}_i, \bm{x}_\ell)\right\| + \left\| \frac{1}{n} \sum_{\ell=1}^n \Phi(\bm{x}_j + \beta \bm{u}_j, \bm{x}_\ell) \right\| \right)^2 \\
    &\leq 8 d^2 \kappa \tau^2 \gamma^2 \left( \left\|\frac{1}{n} \sum_{\ell=1}^n \Psi_{i, \ell}(\bm{u}_i) \right\|^2 + \left\| \frac{1}{n} \sum_{\ell=1}^n \Psi_{j, \ell}(\bm{u}_j) \right\|^2 \right).
\end{align*}
So we continue from \eqref{eq:second_order_E_old} to obtain
\begin{align}\label{eq:second_order_E}
    &\quad \left| \dot{E}(\tau) - \dot{E}(0) \right| \nonumber \\
    &\leq 4 \gamma \E_{\underline{\bm{u}}} \left[ \frac{1}{n^2} \sum_{i,j=1}^n 8 d^2 \kappa \tau^2 \gamma^2 \left( \left\|\frac{1}{n} \sum_{\ell=1}^n \Psi_{i, \ell}(\bm{u}_i) \right\|^2 + \left\| \frac{1}{n} \sum_{\ell=1}^n \Psi_{j, \ell}(\bm{u}_j) \right\|^2 \right) \right] \nonumber \\
    &\qquad\qquad + \frac{1}{4} \gamma \E_{\underline{\bm{u}}} \left[ \frac{1}{n} \sum_{i=1}^n \left\| \frac{1}{n} \sum_{\ell=1}^n \Psi_{i, \ell}(\bm{u}_i)\right\|^2 \right]\nonumber \\
    &= 64 d^2 \kappa \tau^2 \gamma^3 \E_{\underline{\bm{u}}} \left[ \frac{1}{n} \sum_{i=1}^n \left\|\frac{1}{n} \sum_{\ell=1}^n \Psi_{i, \ell}(\bm{u}_i) \right\|^2 \right] + \frac{1}{4} \gamma \E_{\underline{\bm{u}}} \left[ \frac{1}{n} \sum_{i=1}^n \left\| \frac{1}{n} \sum_{\ell=1}^n \Psi_{i, \ell}(\bm{u}_i)\right\|^2 \right] \nonumber \\
    &\leq \frac{1}{2} \gamma \E_{\underline{\bm{u}}} \left[ \frac{1}{n} \sum_{i=1}^n \left\| \frac{1}{n} \sum_{\ell=1}^n \Psi_{i, \ell}(\bm{u}_i)\right\|^2 \right] .
\end{align}
In the above derivations, the last step holds because of the condition that $256 \gamma^2 d^2 \kappa \leq 1$.
Hence, combining \eqref{eq:first_order_E} and \eqref{eq:second_order_E}, and since  $E$ is differentiable with respect to $\tau$, we obtain that
\begin{align*}
    &\quad E(1) - E(0) = \dot{E}(0) + \int_0^1 \left( \dot{E}(\tau) - \dot{E}(0) \right) d \tau \\
    &\leq -\frac{3}{4} \gamma \E_{\underline{\bm{u}}} \left[ \frac{1}{n} \sum_{i=1}^n  \left\| \frac{1}{n} \sum_{j=1}^n \Psi_{i, j}(\bm{u}_i) \right\|^2 \right] + \int_0^1 \frac{1}{2} \gamma \E_{\underline{\bm{u}}} \left[ \frac{1}{n} \sum_{i=1}^n \left\| \frac{1}{n} \sum_{\ell=1}^n \Psi_{i, \ell}(\bm{u}_i)\right\|^2 \right] d \tau \\
    &= -\frac{\gamma}{4} \E_{\underline{\bm{u}}} \left[ \frac{1}{n} \sum_{i=1}^n  \left\| \frac{1}{n} \sum_{j=1}^n \Psi_{i, j}(\bm{u}_i) \right\|^2 \right] \\
    &\leq - \gamma \beta^2 \kappa d^2 \mmd^2 \left(\frac{1}{n} \sum_{i=1}^n \delta_{\bm{x}_i^{(t)}}, \mu\right) = - \gamma \beta^2 \kappa d^2 E(0).
\end{align*}
Therefore, we have proved that $E(1)-E(0)\leq -\gamma \beta^2 \kappa d^2 E(0)$ which concludes the proof.
\end{proof}

\subsubsection{Proof of \Cref{lem:calG_concentration}}\label{sec:proof_calG_concentration}

\begin{proof}
Recall that at iteration $t$, 
\begin{align*}
    \bm{x}_{i}^{(t+1)} &= \bm{x}_i^{(t)} - \gamma \left\{ \mathbb{E}_{\bm{Y} \sim \mu} [\nabla_1 k(\bm{x}_i^{(t)} + \beta_t \bm{u}_i^{(t)}, \bm{Y})] - \frac{1}{n} \sum_{j=1}^n \nabla_1 k(\bm{x}_i^{(t)}+ \beta_t \bm{u}_i^{(t)}, \bm{x}_j^{(t)}) \right\} \\
    &=: \bm{x}_i^{(t)} - \gamma \cdot \mathscr{F} ( \bm{x}_i^{(t)} + \beta_t \bm{u}_i^{(t)} ) ,
\end{align*}
where $\{\bm{x}_i^{(t)}\}_{i=1}^n$ are fixed and the only randomness in the update scheme above comes from the injected Gaussian noise $\{\bm{u}_i^{(t+1)}\}_{i=1}^n$. 
As a result, the probability in the statement of \Cref{lem:calG_concentration} is over the distribution of $\{\bm{u}_i^{(t+1)}\}_{i=1}^n$. 
In the remainder of the proof, we omit the superscript and subscript $t+1$ for simplicity denoting $\bm{x}_i^{(t+1)}, \bm{u}_i^{(t+1)}, \beta_{t+1}$ as $\bm{x}_i, \bm{u}_i, \beta$. 


Let $\xi_i \colon \R^d \to \calH$ be the mapping 
\begin{align*}
    \bm{u}_i \mapsto k\left( \bm{x}_i- \gamma \mathscr{F} ( \bm{x}_i + \beta \bm{u}_i ) , \; \cdot\right) - \E_{\bm{u}_i}\left[ k\left( \bm{x}_i- \gamma \mathscr{F} ( \bm{x}_i + \beta \bm{u}_i ) , \; \cdot\right) \right] . 
\end{align*}
for $i=1,\ldots, n$. 
Since $\{\bm{u}_i\}_{i=1}^n$ are $n$ identical independent unit Gaussian random variables, $\{\xi_i(\bm{u}_i)\}_{i=1}^n$ are thus $n$ independent zero mean Hilbert-space valued random variables. For kernel $k$ that satisfies $\sup_{\bm{x}\in\R^d}k(\bm{x}, \bm{x})\leq\kappa$ in \Cref{asst:kernel}, 
we know that $\sup_{\bm{u}} \|\xi_i(\bm{u})\|_\calH \leq 2\sqrt{\kappa}$ for $i=1,\ldots, n$, and
\begin{align}\label{eq:second_xi}
    \quad \E_{\bm{u}_i} \left[ \|\xi_i(\bm{u}_i)\|_\calH^2 \right] &= \E_{\bm{u}_i} \left[ \left\| k\left( \bm{x}_i- \gamma  \mathscr{F} ( \bm{x}_i + \beta \bm{u}_i ) , \; \cdot\right) - \E_{\bm{u}_i^\prime}\left[ k\left( \bm{x}_i- \gamma  \mathscr{F} ( \bm{x}_i + \beta \bm{u}_i^\prime ) , \; \cdot\right) \right] \right\|_\calH^2 \right] \nonumber \\
    &= \E_{\bm{u}_i} \left[ k\left( \bm{x}_i- \gamma  \mathscr{F} ( \bm{x}_i + \beta \bm{u}_i), \bm{x}_i- \gamma \mathscr{F} ( \bm{x}_i + \beta \bm{u}_i) \right) \right] \nonumber \\
    &\qquad - \E_{\bm{u}_i, \bm{u}_i^\prime} \left[ k\left( \bm{x}_i- \gamma  \mathscr{F} ( \bm{x}_i + \beta \bm{u}_i) , \bm{x}_i- \gamma  \mathscr{F} ( \bm{x}_i + \beta \bm{u}_i^\prime) \right) \right] . 
\end{align}
From \Cref{lem:lip_phi} we have,
\begin{align*}
    &\quad \| \mathscr{F} ( \bm{x}_i + \beta \bm{u}_i) - \mathscr{F} ( \bm{x}_i + \beta \bm{u}_i^\prime) \| = \left\| \frac{1}{n} \sum_{j=1}^n  \Phi(\bm{x}_i + \beta \bm{u}_i, \bm{x}_j) - \frac{1}{n} \sum_{j=1}^n  \Phi(\bm{x}_i + \beta \bm{u}_i^\prime, \bm{x}_j) \right\| \\
    &\leq d \kappa \beta \| \bm{u}_i - \bm{u}_i^\prime\| \mmd\left( \frac{1}{n}\sum_{j=1}^n \delta_{\bm{x}_j^{(t)}},\mu \right) \leq \beta d \kappa^{3/2} \| \bm{u}_i - \bm{u}_i^\prime\|. 
\end{align*}
Hence, we have
\begin{align*}
    \E_{\bm{u}_i} \left[ \|\xi_i(\bm{u}_i)\|_\calH^2 \right] &\leq \E_{\bm{u}_i, \bm{u}_i^\prime} \left[ \sqrt{d} \kappa \gamma \| \mathscr{F} ( \bm{x}_i + \beta \bm{u}_i) - \mathscr{F} ( \bm{x}_i + \beta \bm{u}_i^\prime) \| \right] \\
    &\leq \gamma \beta d^{3/2}\kappa^{5/2} \E_{\bm{u}_i, \bm{u}_i^\prime} [\| \bm{u}_i - \bm{u}_i^\prime\|] =: \gamma \beta C_0. 
\end{align*}
$C_0$ is a positive universal constant. 
We have thus proved that $\sup_{\bm{u}} \|\xi_i(\bm{u})\|_\calH \leq 2 \sqrt{\kappa}$ and $\E_{\bm{u}_i} [ \|\xi_i(\bm{u}_i)\|_\calH^2 ] \leq \gamma \beta C_0$. 
Define $Z \coloneqq \frac{1}{n}\sum_{i=1}^n \xi_i(\bm{u}_i)$. 
We apply the Bernstein’s concentration inequality in Hilbert spaces~\citep[Theorem 6.14]{steinwart2008support} to obtain, for any $\tau > 0$,  with probability at least $1 - \exp(-\tau)$,
\begin{align*}
    \|Z\|_\calH \leq \sqrt{\frac{2C_0 \gamma \beta \tau}{n}}+\sqrt{\frac{C_0 \gamma \beta}{n}}+\frac{4 \sqrt{\kappa} \tau}{3 n} .
\end{align*}
For $\tau > 1$, we have with probability at least $1 - \exp(-\tau)$, 
\begin{align*}
    \|Z\|_\calH \leq C \tau n^{-\frac{1}{2}} ((\beta\gamma)^{\frac{1}{2} } \vee n^{-\frac{1}{2}}) .
\end{align*}
Here, $C$ is a positive universal constant independent of $n, \beta$. 
Finally, notice that
\begin{align*}
    &\quad \| Z\|_\calH = \left\| \frac{1}{n}\sum_{i=1}^n k(\bm{x}_i -\gamma \mathscr{F} ( \bm{x}_i + \beta \bm{u}_i ) \; , \cdot) - \E_{\underline{\bm{u}}}\left[ \frac{1}{n}\sum_{i=1}^n k(\bm{x}_i -\gamma \mathscr{F}( \bm{x}_i + \beta \bm{u}_i ) \; , \cdot) \right] \right\|_\calH  \\
    &\geq \left\| \frac{1}{n}\sum_{i=1}^n k(\bm{x}_i -\gamma \mathscr{F} ( \bm{x}_i + \beta \bm{u}_i ) \; , \cdot) - \E_{\bm{y}\sim\mu} [k(\bm{y},\cdot)] \right\|_\calH \\
    &\qquad\qquad - \left\| \E_{\underline{\bm{u}}}\left[ \frac{1}{n}\sum_{i=1}^n k(\bm{x}_i -\gamma \mathscr{F} ( \bm{x}_i + \beta \bm{u}_i ) , \cdot) \right] - \E_{\bm{y}\sim\mu} [k(\bm{y},\cdot)] \right\|_\calH \\
    &\geq \left\| \frac{1}{n}\sum_{i=1}^n k(\bm{x}_i -\gamma \mathscr{F} ( \bm{x}_i + \beta \bm{u}_i ) \; , \cdot) - \E_{\bm{y}\sim\mu} [k(\bm{y},\cdot)] \right\|_\calH \\
    &\qquad\qquad - \E_{\underline{\bm{u}}} \left[ \left\| \frac{1}{n}\sum_{i=1}^n k(\bm{x}_i -\gamma \mathscr{F} ( \bm{x}_i + \beta \bm{u}_i ) \; , \cdot) - \E_{\bm{y}\sim\mu} [k(\bm{y},\cdot)] \right\|_\calH \right] ,
\end{align*}
where the second line is triangular inequality and the third line is Jensen's inequality. 
The proof is concluded. 

\end{proof}

\subsubsection{Auxiliary Results}

\begin{lem}\label{lem:lip_phi}
Under \Cref{asst:easy_assumption,asst:kernel}, the following inequalities hold: 
\begin{enumerate}[leftmargin=1.0cm]
    \item [(1)] $ \| k(\bm{z},\cdot) - k(\bm{z}^\prime, \cdot) \|_\calH \leq \sqrt{d} \kappa \|\bm{z} - \bm{z}\| $.
    \item [(2)] $\| \Phi(\bm{z},\bm{w}) - \Phi(\bm{z}^\prime, \bm{w}^\prime)\| 
    \leq 2d \kappa \left(\|\bm{z} - \bm{z}^\prime\| + \|\bm{w} - \bm{w}^\prime\|\right)$.
    \item [(3)] $\left\|\frac{1}{n}\sum_{j=1}^n \Big(\Phi(\bm{z}, \bm{x}_j^{(t)}) - \Phi(\bm{z}^\prime, \bm{x}_j^{(t)}) \Big) \right\|\leq d \kappa \|\bm{z}-\bm{z}^\prime\| \mmd(\frac{1}{n}\sum_{j=1}^n \delta_{\bm{x}_j^{(t)}},\mu)$. 
\end{enumerate}
\end{lem}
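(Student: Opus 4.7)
The plan is to reduce all three Lipschitz-type bounds to RKHS norm estimates, exploiting the reproducing property together with the fact that, under \Cref{asst:easy_assumption,asst:kernel}, the derivative evaluation functionals $\partial_{\ell} k(\bm{x},\cdot)$ and $\partial_{\ell}\partial_{r} k(\bm{x},\cdot)$ lie in $\calH$ (via \citet[Cor.\ 4.36]{steinwart2008support}) and their squared RKHS norms are $\partial_{\ell}\partial_{\ell+d}k(\bm{x},\bm{x})$ and $\partial_{\ell}\partial_{r}\partial_{r+d}\partial_{\ell+d}k(\bm{x},\bm{x})$ respectively. \Cref{asst:kernel} then bounds all three objects $k(\bm{x},\cdot)$, $\partial_{\ell}k(\bm{x},\cdot)$ and $\partial_{\ell}\partial_{r}k(\bm{x},\cdot)$ in RKHS norm by $\sqrt{\kappa}$ (absorbing $\sqrt{\kappa}\le\kappa$ where needed to match the constants in the statement).

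For part~(1), I would write the difference as a line integral $k(\bm{z},\cdot)-k(\bm{z}',\cdot)=\int_{0}^{1}\sum_{\ell=1}^{d}(z_{\ell}-z_{\ell}')\,\partial_{\ell}k(\bm{z}'+s(\bm{z}-\bm{z}'),\cdot)\,\mathrm{d}s$, valid in $\calH$ by Bochner-integrability of the continuous curve. Pulling the RKHS norm inside, using Cauchy--Schwarz in $\mathbb{R}^{d}$ on the sum over $\ell$, and using $\|\partial_{\ell}k(\bm{x},\cdot)\|_{\calH}\le\sqrt{\kappa}$ gives the claimed bound $\sqrt{d}\,\kappa\|\bm{z}-\bm{z}'\|$.

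For part~(2), I decompose $\Phi(\bm{z},\bm{w})-\Phi(\bm{z}',\bm{w}')$ using the triangle inequality into three contributions: $\mathbb{E}_{\bm{Y}\sim\mu}[\nabla_{1}k(\bm{z},\bm{Y})-\nabla_{1}k(\bm{z}',\bm{Y})]$, $\nabla_{1}k(\bm{z},\bm{w})-\nabla_{1}k(\bm{z}',\bm{w})$, and $\nabla_{1}k(\bm{z}',\bm{w})-\nabla_{1}k(\bm{z}',\bm{w}')$. Each component $\partial_{\ell}k(\cdot,\cdot)$ is written as an RKHS inner product via the reproducing property (\eg $\partial_{\ell}k(\bm{z},\bm{w})-\partial_{\ell}k(\bm{z}',\bm{w})=\langle\partial_{\ell}k(\bm{z},\cdot)-\partial_{\ell}k(\bm{z}',\cdot),k(\bm{w},\cdot)\rangle_{\calH}$), and then Cauchy--Schwarz reduces it to an RKHS-norm Lipschitz estimate analogous to~(1) applied either to $\bm{x}\mapsto\partial_{\ell}k(\bm{x},\cdot)$ (using second-order derivative bounds) or to $\bm{w}\mapsto k(\bm{w},\cdot)$. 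Summing the $d$ coordinates in $\ell^{2}$ and combining the three contributions yields a total constant $2d\kappa$.

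For part~(3), the key move is to recognise $\frac{1}{n}\sum_{j}\bigl(\Phi(\bm{z},\bm{x}_{j}^{(t)})-\Phi(\bm{z}',\bm{x}_{j}^{(t)})\bigr)$ as the integration error $\int g\,\mathrm{d}\mu-\int g\,\mathrm{d}\hat\nu_{n}^{(t)}$ of the vector-valued test function $g(\bm{y})=\nabla_{1}k(\bm{z},\bm{y})-\nabla_{1}k(\bm{z}',\bm{y})$ (the $\mu$-expectation pieces cancel correctly). Each component $g_{\ell}(\bm{y})=\partial_{\ell}k(\bm{z},\bm{y})-\partial_{\ell}k(\bm{z}',\bm{y})$ equals $\langle\partial_{\ell}k(\bm{z},\cdot)-\partial_{\ell}k(\bm{z}',\cdot),k(\bm{y},\cdot)\rangle_{\calH}$, so writing the integration error as an inner product with the kernel mean difference and applying Cauchy--Schwarz gives a per-coordinate bound $\|\partial_{\ell}k(\bm{z},\cdot)-\partial_{\ell}k(\bm{z}',\cdot)\|_{\calH}\cdot\mmd(\hat\nu_{n}^{(t)},\mu)$. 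The RKHS-norm Lipschitz factor is then controlled by a line-integral argument as in~(1) but with the second-order derivative bound, and summing over $\ell$ in $\ell^{2}$ yields the stated $d\kappa\|\bm{z}-\bm{z}'\|\mmd$.

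The only mild obstacle is the bookkeeping of constants: one must be careful to distinguish between $\sqrt{\kappa}$ (the RKHS norm of first-order derivative evaluation functionals) and $\kappa$ (which appears squared in the $L^{2}$-type sums over $\ell$), and to justify interchange of the RKHS norm with the line integral and with the $\mu$-integral, which is routine under \Cref{asst:easy_assumption} via Bochner integrability and continuity of the relevant mappings.
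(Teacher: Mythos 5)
Your strategy is sound and matches the structure of the paper's proof; the main differences are in technique rather than substance. For parts (1) and (3) the paper simply cites Lemma~7 of \citet{glaser2021kale} for the RKHS Lipschitz estimates $\|k(\bm{z},\cdot)-k(\bm{z}',\cdot)\|_{\calH}$ and $\sum_\ell\|\partial_\ell k(\bm{z},\cdot)-\partial_\ell k(\bm{z}',\cdot)\|_\calH^2$, whereas you re-derive these via a line integral in $\calH$; the underlying mathematics is the same, and your part~(3) then proceeds exactly as the paper's (rewrite as an inner product with the kernel mean difference, Cauchy--Schwarz, apply the Lipschitz estimate). For part~(2) you diverge genuinely: the paper fixes $\bm{w}$, applies the mean value theorem coordinate-wise to $\bm{z}\mapsto\partial_i k(\bm{z},\bm{w})$, and sums the resulting bounds $|\partial_i\partial_j k(\bm{u},\bm{w})|\le\kappa$, while you write each scalar difference $\partial_\ell k(\bm{z},\bm{w})-\partial_\ell k(\bm{z}',\bm{w})=\langle\partial_\ell k(\bm{z},\cdot)-\partial_\ell k(\bm{z}',\cdot),k(\bm{w},\cdot)\rangle_\calH$ and Cauchy--Schwarz off the RKHS Lipschitz estimate. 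Your route has a real advantage here: it only invokes $\|\partial_\ell\partial_r k(\bm{x},\cdot)\|_\calH\le\sqrt{\kappa}$, which follows directly from the fourth-derivative bound in \Cref{asst:kernel}, whereas the paper's MVT step needs $|\partial_i\partial_j k(\bm{u},\bm{w})|\le\kappa$ for $i,j\in\{1,\dots,d\}$ (pure second derivatives in the first argument at off-diagonal points), which is not literally what \Cref{asst:kernel} states; your approach sidesteps that mismatch. Two minor caveats. First, your absorption ``$\sqrt{\kappa}\le\kappa$'' only holds when $\kappa\ge1$; this is harmless since the constant $\kappa$ in \Cref{asst:kernel} can be taken $\ge1$ without loss of generality, but it should be stated rather than left implicit. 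Second, after Cauchy--Schwarz your part~(2) contributions are $d\kappa\|\bm{z}-\bm{z}'\|$ for the two $\bm{z}$-pieces and $d\kappa\|\bm{w}-\bm{w}'\|$ for the $\bm{w}$-piece, summing to $2d\kappa\|\bm{z}-\bm{z}'\|+d\kappa\|\bm{w}-\bm{w}'\|\le2d\kappa(\|\bm{z}-\bm{z}'\|+\|\bm{w}-\bm{w}'\|)$ — that bookkeeping checks out, though you should make it explicit rather than just asserting the final constant.
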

\begin{proof}
Notice that Assumption 2 of \citet{glaser2021kale} is satisfied with $K_{1d}=d^2\kappa$ hence from Lemma 7 of \citet{glaser2021kale} point (1) is proved.
Next, notice that 
\begin{align*}
    \|\nabla_1 k(\bm{z},\bm{w}) - \nabla_1 k(\bm{z}^\prime, \bm{w})\|^2 &= \sum_{i=1}^d (\partial_i k(\bm{z},\bm{w})-\partial_i k(\bm{z}^\prime,w))^2 \\
    &\stackrel{(\ast)}{=} \sum_{i=1}^d \Big(\nabla_u \partial_i k(\bm{u},\bm{w})^\top (\bm{z}-\bm{z}^\prime) \Big)^2 \\
    &\leq \sum_{i=1}^d \| \nabla_u \partial_i k(\bm{u},\bm{w})\|^2 \| \bm{z}-\bm{z}^\prime\|^2 \\
    &= \sum_{i=1}^d \left( \sum_{j=1}^d 
    \Big(\partial_{i}\partial_j k(\bm{u},\bm{w}) \Big)^2 \right) \|\bm{z}-\bm{z}^\prime\|^2 \\
    &\leq d^2 \kappa^2 \|\bm{z}-\bm{z}^\prime\|^2 .
\end{align*}
Here, since $\bm{z} \mapsto \partial_i k(\bm{z},\bm{w})$ is continuous, $(\ast)$ holds by the mean value theorem that $|\partial_{i} k(\bm{z},\bm{w}) - \partial_{i} k(\bm{z}^\prime,\bm{w})| = \nabla_u \partial_{i} k(\bm{u},\bm{w})^\top (\bm{z}-\bm{z}^\prime)$ holds for some $\bm{u}=t \bm{z} + (1-t)\bm{z}^\prime$ on the line segment between $\bm{z}, \bm{z}^\prime$; and the last inequality holds by using \Cref{asst:kernel}. Similarly, we have $\|\nabla_1 k(\bm{z},\bm{w}) - \nabla_1 k(\bm{z},\bm{w}^\prime)\| \leq d \kappa \|\bm{w}-\bm{w}^\prime\|$.
Therefore, 
\begin{align}\label{eq:Phi_Phi_diff}
    &\quad \| \Phi(\bm{z},\bm{w}) - \Phi(\bm{z}^\prime, \bm{w})\| \\
    &= \left\| \Big(\E_{\bm{y} \sim \mu}[\nabla_1 k(\bm{z},\bm{y})] - \nabla_1 k(\bm{z},\bm{w}) \Big) - \Big( \E_{\bm{y} \sim \mu}[\nabla_1 k(\bm{z}^\prime, \bm{y})] - \nabla_1 k(\bm{z}^\prime, \bm{w}) \Big) \right\| \nonumber \\
    &\leq \left\| \E_{\bm{y} \sim \mu}[\nabla_1 k(\bm{z},\bm{y}) - \nabla_1 k(\bm{z}^\prime, \bm{y})] \right\| + \left\|\nabla_1 k(\bm{z},\bm{w}) - \nabla_1 k(\bm{z}^\prime, \bm{w}) \right\| \nonumber \\
    &\leq 2d \kappa \|\bm{z} -\bm{z}^\prime\|. 
\end{align}
Similarly, we can prove that $\| \Phi(\bm{z},\bm{w}^\prime) - \Phi(\bm{z},\bm{w}^\prime)\| \leq 2d \kappa \|\bm{w}-\bm{w}^\prime\|$. Combining it and \eqref{eq:Phi_Phi_diff} completes the proof of point (2).

Finally, notice that the Assumption 2 of \cite{glaser2021kale} is satisfied with $K_{2d}=d^2 \kappa$, hence from Lemma 7 of \citet{glaser2021kale} we obtain
\begin{align*}
   \sum_{i=1}^d \left\| \partial_i k(\bm{z}, \cdot) - \partial_i k(\bm{z}^\prime, \cdot) \right\|_\calH^2 
   \leq d^2 \kappa \| \bm{z}-\bm{z}^\prime\|^2.
\end{align*}
Consequently, we obtain
\begin{align*}
     &\quad \left\|\frac{1}{n}\sum_{j=1}^n \Big(\Phi(\bm{z}, \bm{x}_j^{(t)}) - \Phi(\bm{z}^\prime, \bm{x}_j^{(t)}) \Big) \right\|^2 \\
     &= \sum_{i=1}^d \left\langle \partial_i k(\bm{z}, \cdot) - \partial_i k(\bm{z}^\prime, \cdot) , \E_{\bm{y} \sim \mu}[k(\bm{y},\cdot)] - \frac{1}{n}\sum_{j=1}^n k(\bm{x}_j^{(t)}, \cdot) \right\rangle_\calH^2 \\
     &\leq \sum_{i=1}^d \left\| \partial_i k(\bm{z}, \cdot) - \partial_i k(\bm{z}^\prime, \cdot) \right\|_\calH^2 \left\| \E_{\bm{y} \sim \mu}[k(\bm{y},\cdot)] - \frac{1}{n}\sum_{j=1}^n k(\bm{x}_j^{(t)}, \cdot) \right\|_\calH^2 \\
     &\leq d^2 \kappa \|\bm{z}-\bm{z}^\prime\|^2 \mmd^2\left(\frac{1}{n}\sum_{j=1}^n \delta_{\bm{x}_j^{(t)}},\mu\right),
\end{align*}
which concludes the proof of point (3).
\end{proof}

\begin{lem}[Discrete Grönwall's lemma]\label{lem:gronwall}
    If $a_{t+1} \leq (1+\lambda_t)a_t + b_t$ for any $t=1, \ldots$, then $a_T \leq \prod_{t=1}^{T-1} (1+\lambda_t) a_1 + \sum_{t=1}^{T-1} b_{t} \cdot \left( \prod_{s=t+1}^{T-1} (1+\lambda_{s}) \right)$ for any $T\in\N^+$.
\end{lem}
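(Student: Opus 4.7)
The plan is a direct induction on $T \in \mathbb{N}^+$, with the standard conventions that an empty product equals $1$ and an empty sum equals $0$. The base case $T = 1$ is immediate: the right-hand side of the claimed inequality collapses to $1 \cdot a_1 + 0 = a_1$, matching the trivial bound $a_1 \leq a_1$.

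For the inductive step, suppose the bound holds at $T$. Applying the one-step recursion $a_{T+1} \leq (1+\lambda_T) a_T + b_T$ and then substituting the inductive hypothesis for $a_T$ gives
\begin{align*}
a_{T+1} \leq (1+\lambda_T)\!\left[\prod_{t=1}^{T-1}(1+\lambda_t)\, a_1 + \sum_{t=1}^{T-1} b_t \prod_{s=t+1}^{T-1}(1+\lambda_s)\right] + b_T.
\end{align*}
Distributing $(1+\lambda_T)$ through the brackets extends each index range by one: $\prod_{t=1}^{T-1}(1+\lambda_t)$ becomes $\prod_{t=1}^{T}(1+\lambda_t)$, and each inner product $\prod_{s=t+1}^{T-1}(1+\lambda_s)$ becomes $\prod_{s=t+1}^{T}(1+\lambda_s)$. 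The residual $b_T$ is then rewritten as $b_T \prod_{s=T+1}^{T}(1+\lambda_s)$ via the empty-product convention and absorbed as the $t=T$ contribution to the outer sum, yielding exactly the inequality claimed at index $T+1$.

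There is no genuine analytical obstacle here: the result is the standard discrete Gr\"onwall inequality and reduces to algebraic bookkeeping with products and index shifts. The one algebraic point to be careful about is that distributing $(1+\lambda_T)$ into the inductive hypothesis preserves the inequality only when $1+\lambda_T \geq 0$; this must be stated or implicitly assumed for the lemma to be meaningful. In the intended application inside \Cref{thm:mmd_flow_main}, the condition is automatic since the factors applied there take the form $(1-\gamma\beta_t^2\kappa d^2)^{1/2} \in [0,1]$ under the step-size constraint $256\gamma^2 d^2 \kappa \leq 1$.
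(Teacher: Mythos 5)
Your proof is correct and is precisely the induction the paper invokes (the paper simply states ``trivial via induction'' without spelling it out). Your added observation that the inductive step requires $1+\lambda_t \geq 0$ is a valid and worthwhile caveat, and it is indeed satisfied in the application since the factors there are nonnegative square roots.
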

\begin{proof}
    The proof is trivial via induction.
\end{proof}
\section{Additional Experimental Details}
\label{sec:additional_experiment}

\textbf{Mixture of Gaussians:}
The target distribution $\mu = \frac{1}{10}\sum_{m=1}^{10} \calN(\bm{\mu}_m, \bm{\Sigma}_m)$ is a mixture of 10 two-dimensional ($d=2$) Gaussian distributions following the set-up in \citet{chen2010super,huszar2012optimally}. The kernel is taken to be Gaussian kernel $k(\bm{x}, \bm{y}) = \exp(-\frac{1}{2 \ell^2}\|\bm{x} - \bm{y}\|^2)$. 

First, we are going to provide the closed form expression for integration of the function $f: \bm{x} \mapsto \sum_{j=1}^n \sum_{\ell=1}^d \partial_{\ell} k(\boldsymbol{x}_j, \bm{x})$ against $\mu$. 
To this end, it suffices to consider the integration of $\nabla_{1}k(\bm{x}_j, \bm{x})$ against a single Gaussian distribution $\calN(\bm{\mu}_m, \bm{\Sigma}_m)$, denoted as $I_{j, m} \in \R^d$, 
\begin{align*}
    I_{j, m} &= \int \nabla_{1}k(\bm{x}_j, \bm{x}) \;\d \calN(\bm{x}; \bm{\mu}_m, \bm{\Sigma}_m) \\ 
    &= \int \exp\left(-\frac{1}{2} \ell^{-2}\|\bm{x}_j - \bm{x} \|^2\right) \cdot -\ell^{-2} \cdot (\bm{x}_j - \bm{x}) \;\d \calN(\bm{x}; \bm{\mu}_m, \bm{\Sigma}_m) \\
    &= \mathrm{det}(\ell^{-2} \bm{\Sigma}_m + \Id)^{-\frac{1}{2}} \cdot \exp\left( -\frac{1}{2} \bm{\mu}_m^\top \bm{\Sigma}_m^{-1} \bm{\mu}_m \right) \\
    &\qquad \cdot \exp\left(-\frac{1}{2} \ell^{-2}\|\bm{x}_j\|^2 + \frac{1}{2} \left(2 \ell^{-2} \bm{x}_j + \bm{\Sigma}_m^{-1} \bm{\mu}_m \right)^\top (\ell^{-2} + \bm{\Sigma}_m^{-1})^{-1} \left( \ell^{-2} \bm{x}_j + \bm{\Sigma}_m^{-1} \bm{\mu}_m \right) \right) \\
    &\qquad\qquad \cdot \left( -\ell^{-2} \bm{x}_j + \ell^{-2} ( \ell^{-2} + \bm{\Sigma}_m^{-1})^{-1} \left( \ell^{-2} \bm{x}_j + \bm{\Sigma}_m^{-1} \bm{\mu}_m \right) \right) .
\end{align*}
So the integral $\int f d\mu = \frac{1}{10}\sum_{m=1}^{10} \sum_{j=1}^n \one_d^\top I_{j, m}$.


\begin{table}[ht]
\centering
\renewcommand{\arraystretch}{1.2}
\begin{tabular}{lllll}
\toprule
Dim $d$ & Method & $n=100$ & $n=300$ & $n=1000$ \\
\midrule
10  & Stationary MMD & $0.000957 \pm 0.000029$ & $0.000711 \pm 0.000090$ & $0.000194 \pm 0.000008$ \\
10  & IID            & $0.001423 \pm 0.000364$ & $0.001195 \pm 0.000170$ & $0.000829 \pm 0.000145$ \\
50  & Stationary MMD & $0.000937 \pm 0.000015$ & $0.000536 \pm 0.000008$ & $0.000138 \pm 0.000005$ \\
50  & IID            & $0.001585 \pm 0.000276$ & $0.000978 \pm 0.000172$ & $0.000620 \pm 0.000110$ \\
\bottomrule
\end{tabular}
\caption{Comparison of our stationary MMD samples and i.i.d samples across dimensions $d$ and sample sizes $n$.}
\label{tab:dim_table}
\end{table}

\begin{figure}[h]
    \centering
    \includegraphics[width=0.75\linewidth]{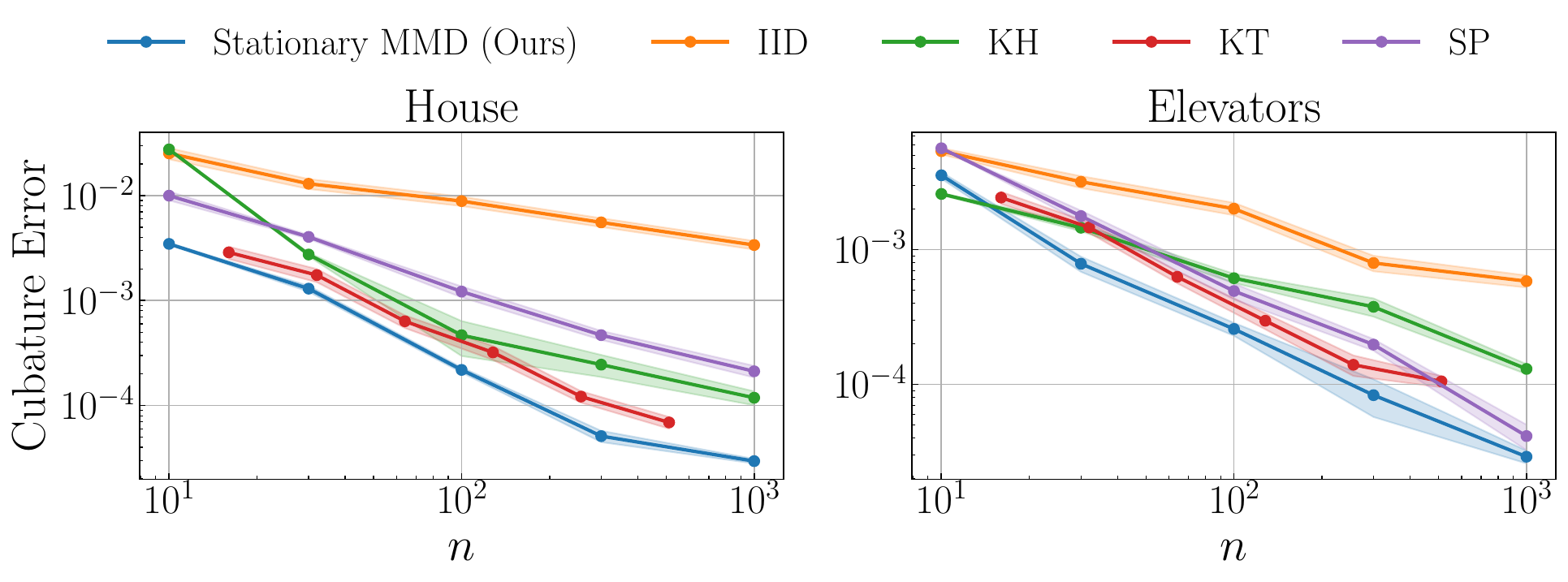}
    \vspace{-10pt}
    \caption{Ablation study of our stationary MMD points and all baselines using a Matérn-$\frac{3}{2}$ kernel on the \textit{House8L} and \textit{Elevators} datasets. 
    The function used for integration is $f_1(\bm{x}) =\exp(-0.5 \|\bm{x}\|^2)$. 
    }
    \label{fig:ablation}
    \vspace{-20pt}
\end{figure}

Next, we are going to provide the closed form expression for integration of two functions $f_1: \bm{x}\mapsto \exp(-0.5 \|\bm{x}\|^2)$ and $f_2: \bm{x}\mapsto \|\bm{x}\|^2$ against the distribution $\mu$, which are used as the ground truth integral value for benchmarking. Similarly, it suffices to consider the integration of $f_1,f_2$ against a single Gaussian distribution $\calN(\bm{\mu}_m, \bm{\Sigma}_m)$. 
\begin{align*}
    \int \exp \left(-\frac{1}{2}\|\bm{x}\|^2 \right) \calN(\bm{x}; \bm{\mu}_m, \bm{\Sigma}_m) \;\d \bm{x} &= \exp\left( \frac{1}{2} \bm{\mu}_m^\top \bm{\Sigma}_m^{-1} \left( \bm{\Sigma}_m^{-1} + \Id \right)^{-1} \bm{\Sigma}_m^{-1} \bm{\mu}_m \right) \\
    &\qquad \cdot \exp\left( -\frac{1}{2} \bm{\mu}_m^\top \bm{\Sigma}_m^{-1} \bm{\mu}_m \right) \cdot \left( \operatorname{det} \left( \bm{\Sigma}_m + \Id\right) \right)^{-\frac{1}{2}}, \\
    \int \|\bm{x}\|^2 \calN(\bm{x}; \bm{\mu}_m, \bm{\Sigma}_m) \;\d \bm{x} &= \operatorname{tr}(\boldsymbol{\Sigma}_m) + \left\|\boldsymbol{\mu}_m\right\|^2 .
\end{align*}

Now, we provide additional implementation details of our stationary MMD points and all baselines. 
\textcolor{black}{
We compute stationary MMD points by simulating the noisy MMD particle descent in Eq.~\eqref{eq:mmd_flow}. The number of iterations is set to $T=10{,}000$ for $n=10$, $T=100{,}000$ for $n\in\{30,100,300\}$, and $n=300{,}000$ for $n=1000$, which we find sufficient for convergence. 
We adopt a fixed step size $\gamma = 1.0$ and do not inject additional noise, since the particles are initialized in a neighborhood of $0$, which already yields stable and fast convergence. 
}
The step size is selected by searching over a small set of step sizes $\{0.1, 0.3, 1.0, 3.0\}$ and select the value that yields the best performance.  
We use a fixed kernel lengthscale $\ell = 1$ for our method and all baselines to ensure consistency and avoid introducing tuning bias. 
This choice prioritizes simplicity and reproducibility, and further performance gains are expected with more problem-specific lengthscale tuning. 
For the QMC baseline, we first assign an equal number of samples to each Gaussian component in the mixture. 
Then, we generate quasi-Monte Carlo (QMC) samples of each Gaussian component by transforming a Sobol sequence via the corresponding Gaussian inverse cumulative distribution function. 
We follow the technique introduced in randomized QMC~\citep{lemieux2004randomized} to shift the Sobol sequence by a random amount to account for the bias when the number of samples for each component are not a power of $2$. 
For the kernel herding ($\mathrm{KH}$) baseline, the greedy minimisation at each step is performed using L-BFGS-B from the \texttt{scipy.optimize} package, which eliminates the need to manually specify a learning rate or the number of iterations. 
For the support points ($\mathrm{SP}$) baseline, we use the implementation in \url{https://github.com/kshedden/SupportPoints.jl}, as no official implementations are publicly available.
In particular, we set the number of iterations to $T=10,000$ for $n\in\{10,30,100\}$ and $T=30,000$ for $n\in\{300,1000\}$. 
\textcolor{black}{
For the kernel thinning ($\mathrm{KT}$) baseline, we use the \texttt{kernel\_split} and \texttt{kernel\_swap} functions from \texttt{goodpoints.jax.kt} in the \texttt{goodpoints} library\footnote{\url{https://github.com/microsoft/goodpoints}}, instead of using \texttt{goodpoints.kt.thin} directly. 
This implementation allows customization of the kernel function and the oversampling parameter $\mathfrak{g}$ for our setting. 
\texttt{kernel\_split} operates on a candidate set of size $n^2$, while \texttt{kernel\_swap} operates on a candidate set of size $2^{\mathfrak{g}} n^2$; in both cases, the candidates are sampled i.i.d. from the mixture-of-Gaussians distribution.
Other hyperparameters of $\mathrm{KT}$ include failure probability $\delta=0.5$, \texttt{random\_swap\_order} set to True, 
\texttt{baseline} set to False, and number of KT-swap iterations \texttt{num\_repeat} set to $100$.  
}

\vspace{1em}
\textbf{OpenML Datasets:}
We use \textit{House8L} ($d=8$) dataset and \textit{Elevators} ($d=18$) dataset from the OpenML  database~\citep{bischl2025openml} where the target distribution $\mu$ is the discrete measure corresponding to the full dataset. 
Both dataset are normalized to have zero mean and unit standard deviation. 
The ground truth integral is the empirical average of the integrand $f_1$ over the whole dataset. 
As a result, we fix the kernel lengthscale to $\ell = 1$, rather than using the common heuristic of setting it to the median pairwise Euclidean distance. 
Our stationary MMD points are computed by simulating noisy MMD particle descent in Eq.~\eqref{eq:mmd_flow}. 
\textcolor{black}{
We fix the number of iterations to $T=100,000$ for all values of $n$ and for both the \textit{House8L} and \textit{Elevators} datasets, which we find to be sufficient for convergence.
We adopt a constant step size $\gamma=1.0$ for the \textit{House8L} datasets and $\gamma=0.3$ for the \textit{Elevators} dataset.
The noise injection schedule is set to $\beta_t= \beta_0 t^{-1 / 2}$ with initial $\beta_0=1.0$.  
}
QMC is no longer applicable here since the domain of the datasets are unknown. 
The configurations for the $\mathrm{KH}$, $\mathrm{KT}$ baselines are 
the same as in the mixture of Gaussian experiment. 
\textcolor{black}{
\texttt{kernel\_split} operates on a candidate set of size $n_{\text{split}} = n^2$. 
When $n_{\text{split}}$ is smaller than $N$ the total size of the dataset, the set is sampled without replacement; when $n_{\text{split}} > N$, each of the $N$ points is replicated $\left\lfloor n_{\text {split }} / N \right\rfloor$ times, and the remaining points are sampled uniformly without replacement to reach a total of $n_{\text{split}}$ samples. 
\texttt{kernel\_swap} operates the entire empirical dataset. 
} 
For SP in both datasets, we set the number of iterations to $T=10,000$ for $n\in\{10,30,100\}$ and $T=30,000$ for $n\in\{300,1000\}$.

\subsection{Ablation study with Matérn-$\frac{3}{2}$ kernel}
In \Cref{fig:ablation}, we present an ablation study of our stationary MMD points, $\textrm{KH}$ (kernel herding) and $\textrm{KT}$ (kernel thinning) using a Matérn-$\frac{3}{2}$ kernel of lengthscale $\ell=1.0$. 
The Matérn-$\frac{3}{2}$ kernel satisfies all \Cref{asst:easy_assumption,asst:kernel,asst:universal} required for our stationary MMD points. 
$\textrm{SP}$ (support points) remains the same because it does not involve choice of kernels~\citep{mak2018support}. 
The results are consistent with those obtained using Gaussian kernels in the main text. Notably, for our stationary MMD points, the Matérn-$\frac{3}{2}$ kernel even outperforms the Gaussian kernel on the \textit{House8L} dataset.
A more comprehensive investigation of kernel choices is left for future work.

\subsection{More Experiments on Kernel Thinning}\label{sec:kt}
When the first version of this work appeared on arXiv~\citep{chen2025}, we were contacted by Lester Mackey, who kindly pointed out that our implementation $\mathrm{KT}$ was potentially unfair. 
In particular, the implementation of $\mathrm{KT}$ in \citet{chen2025} does not exploit closed-form expressions for kernel mean embeddings when computing the MMD, but instead estimates the MMD using samples. In contrast, both kernel herding and stationary MMD point methods make use of closed-form kernel mean embeddings. 
As a result, he suggested that employing a \emph{centered} kernel as follows can substantially improve the performance of $\mathrm{KT}$ baselines. 
For a Gaussian kernel $k$, the associated \emph{centered} Gaussian kernel $\tilde{k}$ is 
\begin{align*}
    \tilde{k}(\bm{x}, \bm{y}) = k(\bm{x}, \bm{y}) - \E_{\bm{Y} \sim \mu}[k(\bm{x}, \bm{Y})] - \E_{\bm{Y} \sim \mu}[k(\bm{Y}, \bm{y})] + \E_{\bm{Y} \sim \mu, \bm{Y}' \sim \mu}[k(\bm{Y}, \bm{Y}')] . 
\end{align*}
For the mixture of gaussian setting, both $\E_{\bm{Y} \sim \mu}[k(\bm{x}, \bm{Y})]$ and $\E_{\bm{Y} \sim \mu, \bm{Y}' \sim \mu}[k(\bm{Y}, \bm{Y}')]$ admit closed-form expressions. For the \textit{House8L} and \textit{Elevators} setting, since $\mu$ is an empirical distribution, both terms reduce to empirical averages. 
After centering, the integral of the kernel under $\mu$ is zero, i.e., $\E_{\bm{x} \sim \mu}[\tilde{k}(\bm{x}, \bm{y})]=0$. 

Next, in \Cref{fig:kt_centered_all}, we compare the performance of $\mathrm{KT}$ when using an \emph{uncentered} Gaussian kernel $k$ versus a \emph{centered} Gaussian kernel $\tilde{k}$. 
The uncentered kernel corresponds to the choice adopted in the first version of this paper~\citep{chen2025}, which we refer to as $\mathrm{KT}$ (old) in \Cref{fig:kt_centered_all}. 
\textcolor{black}{
Specifically for $\mathrm{KT}$ (old), we directly use \texttt{goodpoints.kt.thin}, with \texttt{split\_kernel} and \texttt{swap\_kernel} both set to the uncentered Gaussian kernel. All remaining hyperparameters are left at their default values.
We confirm that using a \emph{centered} kernel indeed greatly improves the performance of $\mathrm{KT}$. 
}
\begin{figure}[t]
    \centering
    \includegraphics[width=0.7\linewidth]{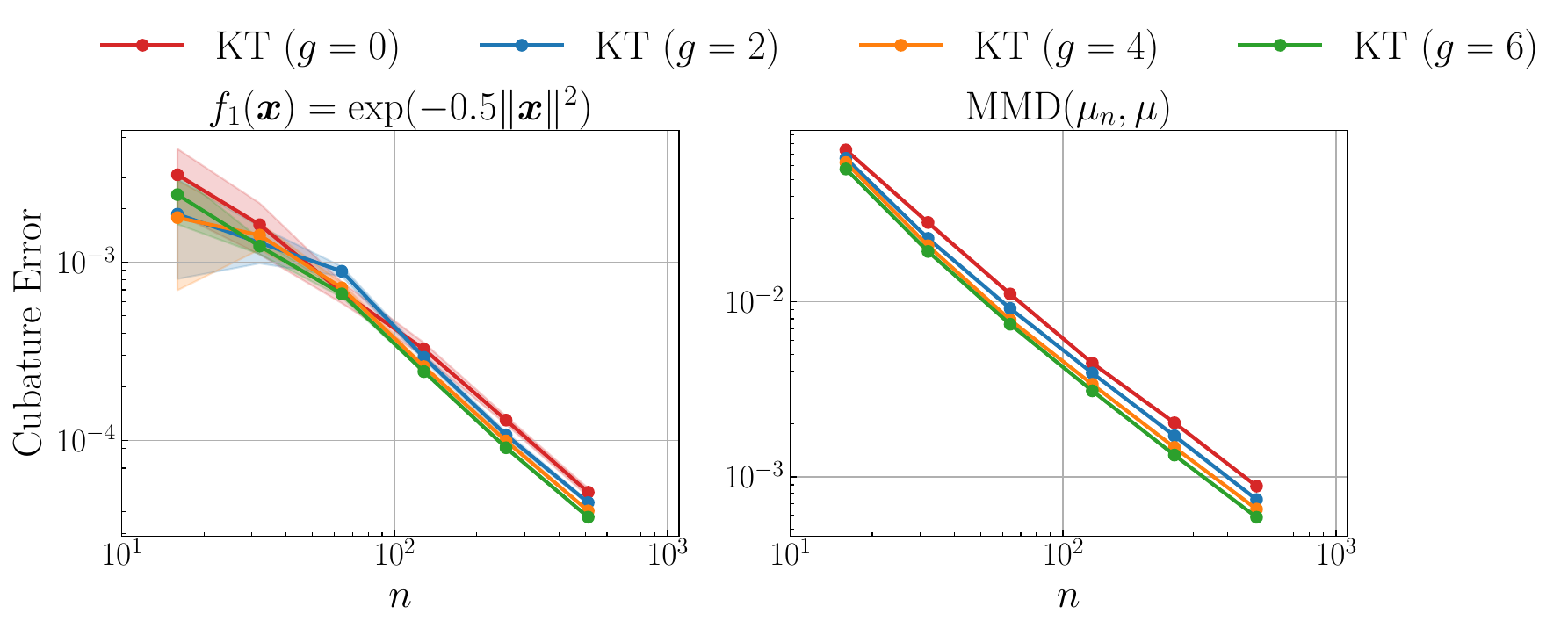}
    \vspace{-10pt}
    \caption{Comparison of $\mathrm{KT}$ in the mixture of Gaussian setting with different values of the oversampling parameter $\mathfrak{g}$.}
    \label{fig:kt_g}
    \vspace{-15pt}
\end{figure}

\textcolor{black}{
Additionally, we also compare the performance of $\mathrm{KT}$ and in the mixture of Gaussian setting with different values of the oversampling hyperparameter $\mathfrak{g}\in\{0,2,4,6\}$. 
As shown in \Cref{fig:kt_g}, the performance of both methods consistently improves as $\mathfrak{g}$ increases. 
However, using a $\mathfrak{g}>0$ requires access to a pre-specified set of samples of size $2^{\mathfrak{g}} N$ from $\mu$, which may not be available in practice. 
Also, a larger $\mathfrak{g}$ results in much higher computational cost in practice. 
}

\begin{figure}[h]
    \centering
    \begin{subfigure}[t]{0.75\linewidth}
        \centering
        \includegraphics[width=\linewidth]{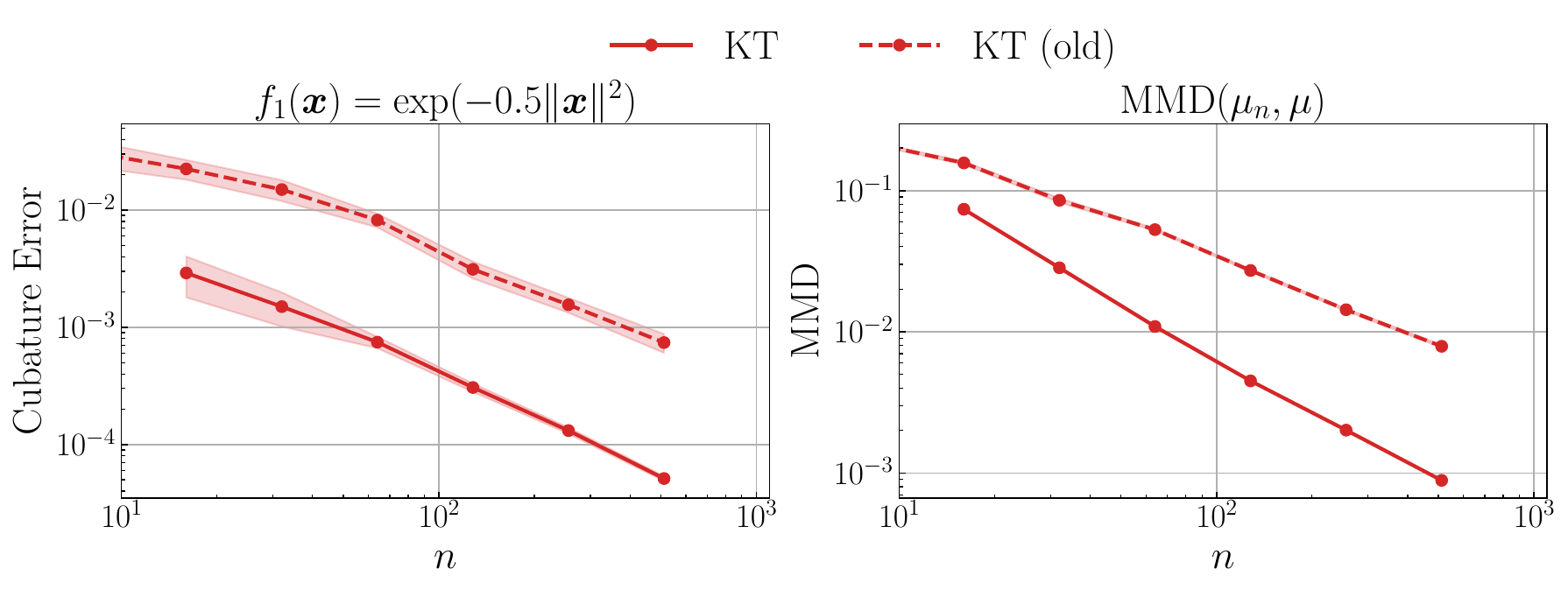}
    \end{subfigure}
    \begin{subfigure}[t]{0.75\linewidth}
        \centering
        \includegraphics[width=\linewidth]{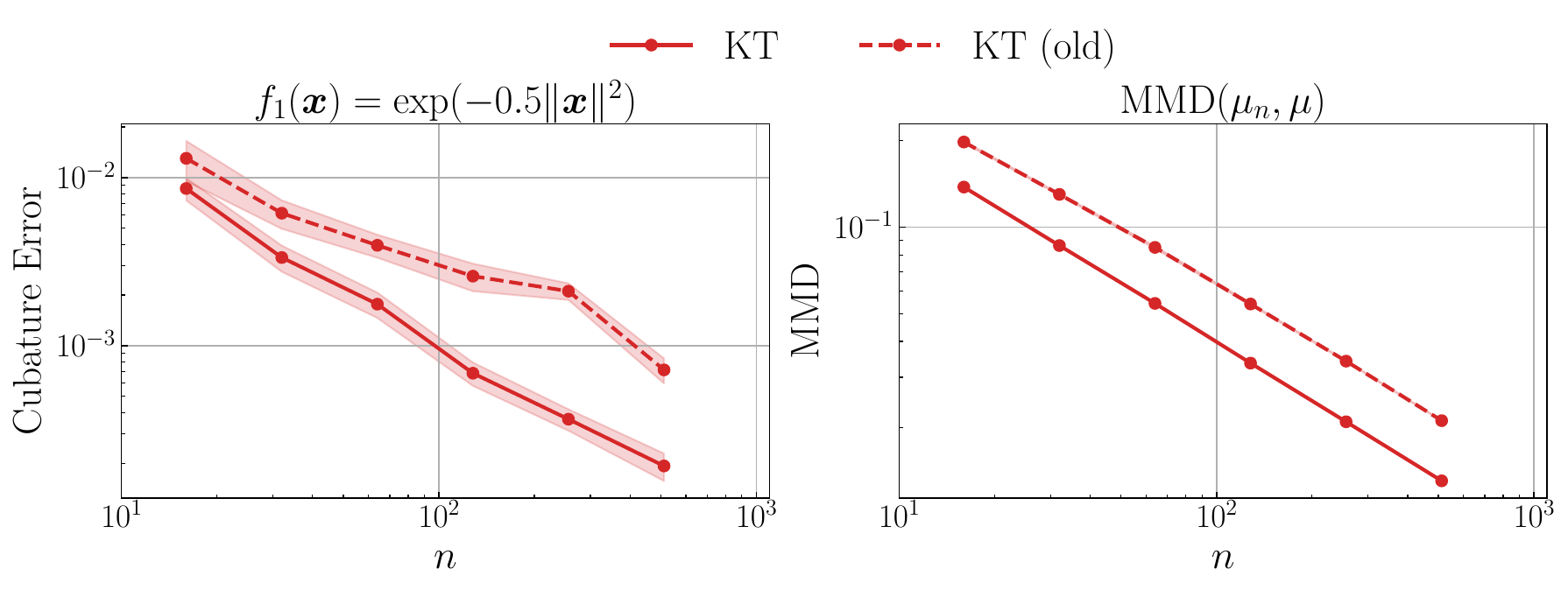}
    \end{subfigure}
    \begin{subfigure}[t]{0.75\linewidth}
        \centering
        \includegraphics[width=\linewidth]{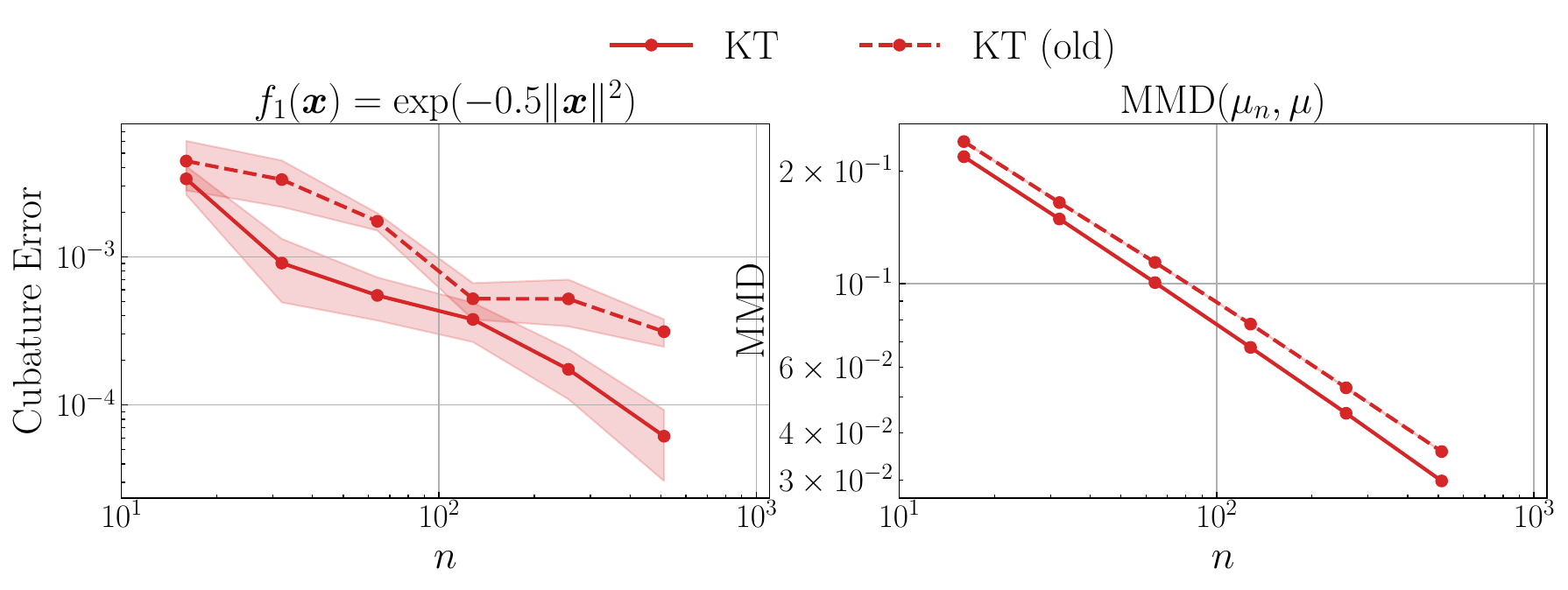}
    \end{subfigure}

    \caption{Comparison of the baselines $\mathrm{KT}$ across different datasets using \emph{centered} and \emph{uncentered} Gaussian kernels.
    \textbf{Top:} Mixture of Gaussians.
    \textbf{Middle:} \textit{House8L} dataset.
    \textbf{Bottom:} \textit{Elevators} dataset.}
    \label{fig:kt_centered_all}
\end{figure}

\end{document}